\newtheorem{theorem}{Theorem}
\newtheorem{proof}{Proof}
\newtheorem{lemma}{Lemma}
\newtheorem{assumption}{Assumption}
\newcommand{\modelname}{{\text{WISDOM~}}}
\title{Wavelet Predictive Representations for Non-Stationary Reinforcement Learning}
\author{
\textbf{Min Wang}\textsuperscript{1},
\textbf{Xin Li}\textsuperscript{1},
\textbf{Ye He}\textsuperscript{1},
\textbf{Yao-Hui Li}\textsuperscript{1},
\textbf{Hasnaa Bennis}\textsuperscript{1},\\
\textbf{Riashat Islam}\textsuperscript{2},
\textbf{Mingzhong Wang}\textsuperscript{3} \\
\textsuperscript{1}Beijing Institute of Technology,
\textsuperscript{2}Microsoft Research AI Frontiers,\\
\textsuperscript{3}University of the Sunshine Coast}
\begin{document}
\maketitle

\begin{abstract}
The real world is inherently non-stationary, with ever-changing factors, such as weather conditions and traffic flows, making it challenging for agents to adapt to varying environmental dynamics. Non-Stationary Reinforcement Learning (NSRL) addresses this challenge by training agents to adapt rapidly to sequences of distinct Markov Decision Processes (MDPs). However, existing NSRL approaches often focus on tasks with regularly evolving patterns, leading to limited adaptability in highly dynamic settings. Inspired by the success of Wavelet analysis in time series modeling, specifically its ability to capture signal trends at multiple scales, we propose WISDOM to leverage wavelet-domain predictive task representations to enhance NSRL. WISDOM captures these multi-scale features in evolving MDP sequences by transforming task representation sequences into the wavelet domain, where wavelet coefficients represent both global trends and fine-grained variations of non-stationary changes. In addition to the auto-regressive modeling commonly employed in time series forecasting, we devise a wavelet temporal difference (TD) update operator to enhance tracking and prediction of MDP evolution. We theoretically prove the convergence of this operator and demonstrate policy improvement with wavelet task representations. Experiments on diverse benchmarks show that WISDOM significantly outperforms existing baselines in both sample efficiency and asymptotic performance, demonstrating its remarkable adaptability in complex environments characterized by non-stationary and stochastically evolving tasks.
\end{abstract}

\section{Introduction}
\label{Introduction}
Humans excel at dynamically adjusting their behaviors to adapt to continually changing goals and environments over extended periods. A promising paradigm to realize this adaptability in artificial agents is Non-Stationary Reinforcement Learning (NSRL), which focuses on training policies that can adapt sequentially to multiple tasks with varying state transition dynamics or reward functions. 
Meta-RL enhances sample efficiency and policy generality by identifying shared structures across diverse tasks, making it particularly effective for rapid adaptation. Based on meta-RL, recent NSRL methods have improved task inference through Gaussian mixture distributions, which emphasize the independence of different tasks~\cite{bing2023meta} or distance metric losses that enforce stringent smoothness conditions~\cite{sodhani2022block}. However, they generally exhibit suboptimal performance on non-stationary tasks due to meta-RL's inherent stationarity assumption and its neglect of temporal correlations between tasks.

In NSRL, the task evolution process\footnote{Time-evolving changes of tasks in dynamics or rewards that result in non-stationarity.} is typically modeled as a history-dependent stochastic process, implying that task changes follow certain trends or periodic patterns, thus allowing the tracking and prediction of task evolution~\cite{xie2020deep,Poiani2021MetaReinforcementLB,chen2022adaptive}.
A line of work~\cite{xie2020deep,ren2022reinforcement} attempts to explicitly model the task evolution process as a first-order Markov chain, which limits its applicability to smoothly evolving non-stationary tasks, making it ineffective for more complex tasks with rapid changes, where it can result in the accumulation of prediction errors. 
\cite{bing2023meta2} employs recurrent-based neural networks to handle non-stationary sequential data. \cite{Poiani2021MetaReinforcementLB,chen2022adaptive,tennenholtz2023reinforcement} assume a history-dependent task evolution process and approximate it with Gaussian Process (GP)~\cite{seeger2004gaussian} or planning in a latent space. 
The non-stationary kernel function adapts GP to varying covariance, but requires prior knowledge and introduces more parameters. In addition, most existing methods have been primarily limited to task evolution processes that exhibit a regular pattern with fixed periods. However, practical non-stationary tasks generally exhibit time-varying periods or frequencies, posing a challenge for handling such irregular patterns with stochastic periods.

As in Fig.~\ref{fig:motivation} (a), a noisy non-stationary signal changes with increasing frequency, with its three input stages (A, B, C) having similar means ($\approx 0$) and variances ($\approx 2$). Although indistinguishable in the time domain, they can be distinguished in the Fourier spectrum with different main frequencies in Fig.~\ref{fig:motivation} (b). However, the Fourier Transform (FT)~\cite{cooley1965algorithm} does not reveal when each frequency component appears. For instance, reversing the sequence to create a fast-to-slow pattern (C-B-A), thereby generating a different signal, yields the same Fourier spectrum. In contrast, Wavelet Transform (WT)~\cite{polikar1996wavelet} can resolve this by simultaneously preserving time-frequency domain information of the non-stationary signal (initial \emph{approximation coefficient}) and iteratively extracting multi-scale features using different frequencies. Based on \emph{sampling theorem}~\cite{shannon1949communication}, each decomposition of the approximation coefficient halves the sequence length, reducing the amount of data without losing fundamental features. High-frequency noise (\emph{detail coefficient}) in Fig.~\ref{fig:motivation} (c-d) is gradually separated from low-frequency components (\emph{approximation coefficient}), and after the second decomposition in Fig.~\ref{fig:motivation}, the smooth low-frequency feature (Fig.~\ref{fig:motivation} (e)) basically reflects the original evolving trend (Fig.~\ref{fig:motivation} (a)).

Inspired by the success of WT theory in handling non-stationary time series by gradually separating different frequency trends~\cite{polikar1996wavelet,Yang2024WaveNetTN}, we propose \textbf{W}avelet-\textbf{I}nduced task repre\textbf{S}entation pre\textbf{D}iction for n\textbf{O}n-stationary reinforce\textbf{M}ent learning (WISDOM), which leverages wavelet domain information from the task representation sequence $\mathbf{z}$ to extract underlying structural and temporal patterns.
Specifically, we start by encoding trajectories
\begin{wrapfigure}{r}{0.43\textwidth}
\centering
\setlength{\abovecaptionskip}{0cm}
\vskip -0.15in
\subfigure{
\includegraphics[width=0.43\textwidth]{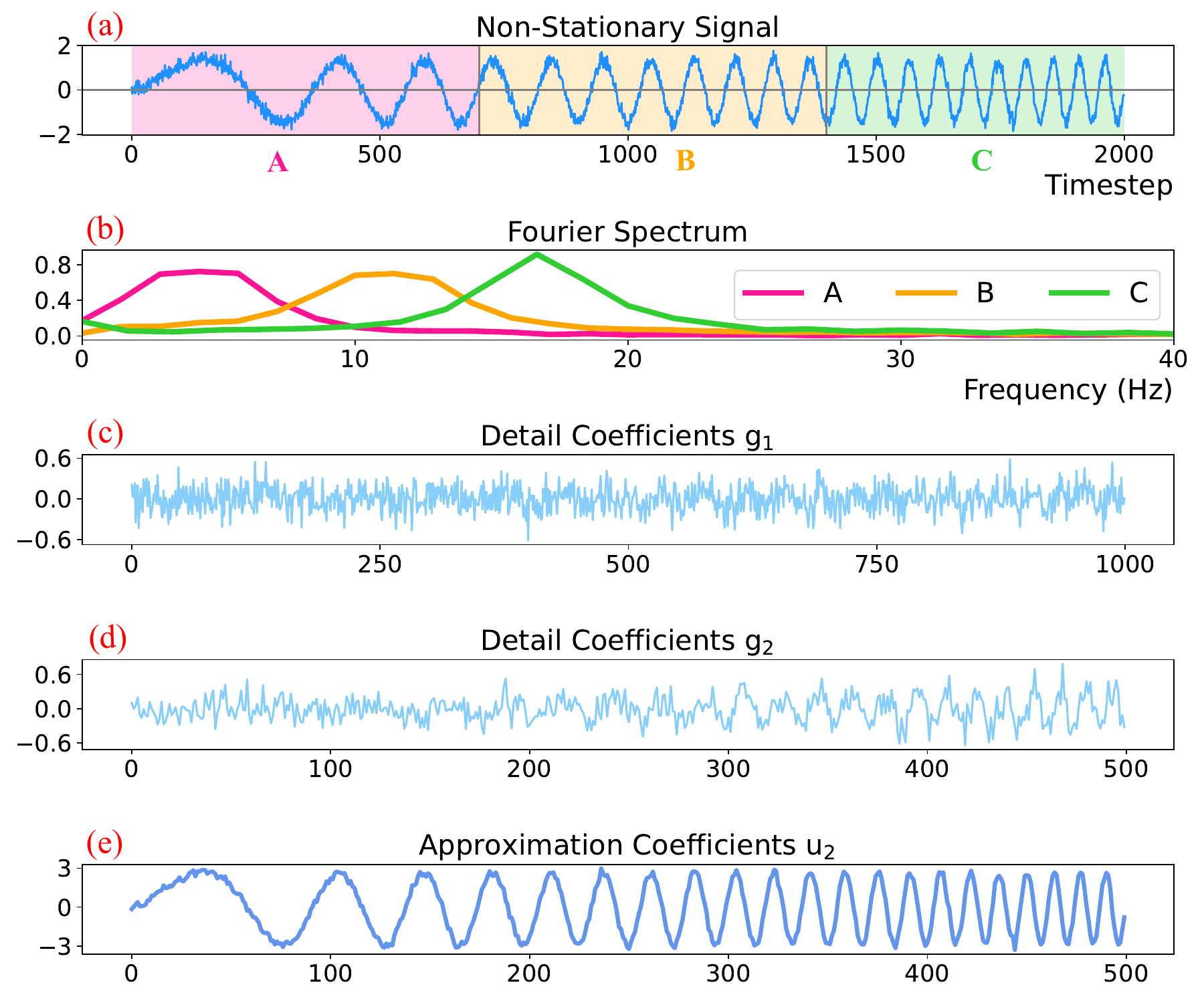}
}
\vskip -0.05in
\caption{A motivating example.
}
\label{fig:motivation}
\vskip -0.15in
\end{wrapfigure}
$(s_i,a_i,s_{i+1},r_i)_{i=0}^T$ to obtain $\mathbf{z}$ to imply the changes of tasks~\cite{xie2020deep,chen2022adaptive}. Each $z_i$ in this $\mathbf{z}$ sequence preserves the influence of dynamics and rewards. Each dimension of $z_i$ corresponds to a variate in a multivariate time series. Subsequently, we utilize a wavelet representation network to transform $\mathbf{z}$ into the wavelet domain. Iterative decompositions of $\mathbf{z}$ yield an approximation coefficient that captures slowly changing features and a series of detail coefficients that capture rapidly evolving local features. 
To remove noise and preserve more rapid task changes, we selectively filter detail coefficients through downsampling. We then transform these wavelet coefficients back to the time domain to restore more intrinsic task representations. Moreover, we design a wavelet operator for the explicit temporal difference (TD) update of the wavelet representation network to help capture changes in task structures. Additionally, an auto-regressive loss is incorporated to reinforce long-term temporal dependencies, empowering task evolution predictions. Finally, we integrate restored task representations into policy learning, facilitating prompt policy adjustments and improving policy adaptability across diverse non-stationary tasks. Our key contributions include:
\begin{enumerate}[leftmargin=*]
    \item We first propose to address non-stationary RL via perceiving its evolution process in the wavelet domain.  
    By iteratively performing wavelet decomposition, we can capture different evolutionary patterns emerging with stochastic period by utilizing a series of varying resolutions or frequencies. This allows \modelname to dynamically adjust its behavior in response to evolving task conditions, leading to rapid adaptability and improved policy performance. 
    \item  We theoretically demonstrate that wavelet-domain features can serve as informative indicators of policy performance. And our wavelet task representations are demonstrated to lead to improved policies. To better capture structural regularities of the task evolution process, we design a wavelet TD update operator, theoretically ensuring its convergence.
    \item  We perform a comprehensive evaluation on broader task-distributed Meta-World~\cite{yu2020meta}, Type-1 Diabetes~\cite{xie2018simglucose}, and MuJoCo~\cite{Todorov2012MuJoCoAP}, and the overall results demonstrate the superior and efficient adaptability of WISDOM compared to baselines. We provide new tasks on these benchmarks and will open-source for evaluation in challenging non-stationary settings.
\end{enumerate}

\section{Related Work}
\label{Related Work}
\textbf{Wavelet Features-Based DL.} Recent studies primarily investigate the incorporation of wavelet features for image processing~\cite{yu2021wavefill,yao2022wave}, audio signal analysis~\cite{pan2022wnet,shi2023sequence,zhang2024speaking}, and time series prediction~\cite{zhou2022fedformer,minhao2021t}. In addition, some studies~\cite{bolos2020new, kong2022non} have turned to wavelet transforms to integrate time-frequency features for handling non-stationary signals. A wavelet neural network~\cite{stock2022trainable} has been proposed to learn a specialized filter-bank for non-stationarity modeling. \citealt{Yang2024WaveNetTN} designs a graph spectral wavelet to capture the high-frequency components of non-stationary graph signals. To address the issue of online label shift, \citealt{qian2024efficient} proposes a streaming wavelet operator that estimates environmental changes for efficient non-stationary learning. Furthermore, a wavelet attention network is introduced to analyze heterogeneous time series~\cite{wang2023wavelet} or even predict seasonal components~\cite{wan2024tcdformer}. 
 
\textbf{Non-Stationary RL.}
A significant strand of recent research frames the problem of non-stationarity within the meta-RL framework. ~\citealt{alshedivat2018continuous} makes an initial endeavor to apply gradient-based meta-RL to address non-stationary tasks. Meta-experience replay~\cite{MER} and parameter learning rate modulation~\cite{gupta2020look} are introduced to enhance adaptability. However, policy gradient updates are typically sample inefficient, and they primarily focus on combining context-based meta-RL to achieve faster adaptation in non-stationary tasks. To improve task representation, Gaussian Mixture Models~\cite{bing2023meta}, task-metric approaches~\cite{sodhani2022block}, causal graph~\cite{zhang2024tackling} and recurrent neural networks~\cite{Poiani2021MetaReinforcementLB} have been successively leveraged to assist the context encoder in extracting invariant features. To accurately model the task evolution process, some approaches~\cite{xie2020deep,ren2022reinforcement} hypothesize the latent context sequence to follow a Markov chain formulation. While such methods demonstrate superior adaptability in smoothly changing tasks, they struggle to adapt effectively to complex tasks that involve sudden changes. Therefore, another branch of research~\cite{Poiani2021MetaReinforcementLB,tennenholtz2023reinforcement,chen2022adaptive} assumes history-dependence of the latent context and seeks to utilize Gaussian Processes~\cite{seeger2004gaussian}, the Rescorla-Wagner model~\cite{rescorla1972theory} and reward functions~\cite{chen2022adaptive} to better characterize the task evolution process.

\section{Preliminary}
\subsection{Context-Based Meta-RL}
Meta-RL is defined on a distribution of tasks $p(\mathcal{T})$, where each task is an MDP represented by a tuple $(\mathcal{S}, \mathcal{A}, \mathcal{P}, \mathcal{R}, \gamma, \mathcal{P}(s_0))$, in which $\mathcal{S}$ denotes the state space, $\mathcal{A}$ the action space, $\mathcal{P}$ the transition dynamics, $\mathcal{R}$ the reward function, $\mathcal{P}(s_0)$ the initial state distribution, and $\gamma \in [0,1)$ the discount factor~\cite{1998Reinforcement}. In meta-RL, both $\mathcal{P}$ and $\mathcal{R}$ are assumed to be unknown, with different tasks differing only in $\mathcal{P}$ or $\mathcal{R}$. In context-based meta-RL, the task representation $z$ is derived from past transition histories (referred to as context) by a context encoder $e_\eta$. Building on task inference, recent NSRL methods focus on modeling the evolution of $z$ to improve the accuracy of trend predictions, thereby enhancing the adaptability of the policy $\pi(a|s,z)$. In a conventional meta-RL setting, the agent can interact with numerous MDPs, yet the $\mathcal{P}$ and $\mathcal{R}$ of these MDPs remain constant over time, indicating stationary MDPs. In contrast, NSRL requires the agent to engage with multiple MDPs where $\mathcal{P}$ and/or $\mathcal{R}$ evolve over time, introducing new challenges for rapid adaptation.

\subsection{Non-stationary RL}
\label{Problem Formulation}
Non-stationary RL is defined on the time-evolving task distribution $p(\mathcal{T}_t)$, where an agent interacts with a sequence of MDPs $\mathcal{M}_{\omega_0}, \mathcal{M}_{\omega_1}, \cdots, \mathcal{M}_{\omega_h}$. The evolution of these MDPs is determined by a history-dependent stochastic process $\rho$, i.e., $\omega_{h+1}\sim \rho\left(\omega_{h+1}|\omega_0, \omega_1, \ldots, \omega_{h}\right)$, where $\omega$ is a task ID that regulates the properties of different MDPs~\cite{Poiani2021MetaReinforcementLB}. To better adapt to more practical scenarios, $\omega$ no longer undergoes inter-episode changes, nor does it follow equidistant intra-episode changes. Instead, in our setting, and consistent with~\cite{bing2023meta,chen2022adaptive}, $\omega$ varies within episodes with a relatively stochastic period.
The overall objective of NSRL is formulated as:
\begin{align}
\textstyle\underset{\pi}{\operatorname{argmax}} \mathbb{E}_{\omega_h}\left[\sum_{t=0}^{\infty} \mathbb{E}_{{\mathcal{T}_t}}\left[\sum_{h=0}^{\infty} \gamma^t r^h_t \mid \mathcal{M}_{\omega_h}, \pi\right]\right].
\end{align}

\subsection{Wavelet Transform}
The Wavelet Transform (WT)~\cite{burrus1998wavelets} analyzes the various frequency components of a non-stationary signal utilizing a scalable and translatable mother wavelet function $\psi(t)$. To achieve multi-resolution analysis, the Discrete Wavelet Transform (DWT)~\cite{shensa1992discrete} is introduced to decompose the signal layer by layer at different resolutions. Let the initial approximation coefficient $\mathbf{u}_0$ be equal to an input discrete time series $x(n)$. By substituting a pair of low-pass filter $y_0$ and high-pass filter $y_1$ for $\psi(t)$, the following iterative form is derived through \textbf{discrete convolution}: 
\begin{align}
\label{eq:dwt}
\mathbf{u}_m(n) = \sum\textstyle_{k=1}^{K} y_0(k)\mathbf{u}_{m-1}(2n-k),\quad \mathbf{g}_m(n) = \sum_{k=1}^{K} y_1(k)\mathbf{u}_{m-1}(2n-k),
\end{align}
where $m$ is the decomposition level and $k$ is the resolution size. Approximation coefficients $\mathbf{u}_m$ represent low-frequency components, whereas detail coefficients $\mathbf{g}_m$ represent high-frequency components. These multi-resolution and localization properties allow the DWT to simultaneously capture both the global features and local variations of the signal, making it well-suited for handling non-stationary signals, where frequency and period change over time. 
We also provide a derivation of DWT in Appendix~\ref{a:dwt} to enhance understanding.

\section{Methodology}
\label{method}
Sec.~\ref{sec:inference} first outlines the setting of non-stationary tasks and describes how to conduct task inference. Subsequently, Sec.~\ref{sec:wt} elaborates on the implementation of a trainable wavelet representation network that converts task representation sequences into the wavelet domain to capture inherent evolving patterns. Thereafter, Sec.~\ref{sec:update} devises a wavelet temporal difference (TD) update operator to collaborate with the auto-regressive (AR) loss in optimizing the wavelet representation network. Finally, Sec.~\ref{sec:policy} explains how to effectively harness this inherent structural representation in policy learning.

\subsection{Module A: Non-Stationary Task Inference with Context Encoder}
\label{sec:inference}
In scenarios with non-stationary task distributions $p(\mathcal{T}_t)$, assuming the agent has collectively undergone $H$ instances/times of MDP evolution when interacting with the environment, the resulting trajectory, composed of a recent history of transitions $c^{0:H}_{0:T}$ (abbreviated as context $\mathcal{C}$), corresponding to $H+1$ MDP segments (i.e. $\mathcal{M}_{\omega_{0}},\mathcal{M}_{\omega_{1}},\ldots,\mathcal{M}_{\omega_{H}}$). To further enhance applicability, in our non-stationary setup, the period of each $\mathcal{M}_{\omega_{h}}$, denoted as $T_h$, is assumed unknown. The current MDP duration $T_h$ potentially differs from its predecessors $T_{h-1}$, and $T=\sum^{H}_{h=0}T_h$. Each transition $c = (s,a,s',r)$ within $\mathcal{C}$ encapsulates the current state $s$, action $a$, next state $s'$, and the reward $r$. \footnote{Since the time steps of changes in the MDP is unknown, superscript $h$ is omitted in the following notation.}

To enhance practicality, we assume that the agent lacks prior knowledge of the environment and that the true task ID $\omega$ is inaccessible. We employ a context encoder $e$ parameterized by $\eta$ to infer task-relevant information. The encoder $e_\eta$ processes $\mathcal{C}$ to generate a \textbf{task representation sequence} $\textstyle\mathbf{z}=[z_0,z_1,\ldots,z_{T}]$ that is generally viewed as an approximation of the $\omega$ sequence~\cite{xie2020deep,chen2022adaptive}, where each $z$ uniquely corresponds to each transition $c$. The KL-divergence is used as a variational approximation to an information bottleneck~\cite{rakelly2019efficient} to train $e_\eta$: 
\begin{equation}
\mathcal{J}_{\eta}=\mathbb{E}_{\mathcal{C}\sim \mathcal{B}}[D_{\mathrm{KL}}(e_\eta(\mathbf{z}|\mathcal{C})\|p(\mathbf{z}))],
\label{equ:e}
\end{equation}
where $p(\mathbf{z})$ represents a Gaussian prior, and the context $\mathcal{C}$ is sampled from the replay buffer $\mathcal{B}$.
\begin{figure*}[t]
    \vskip -0.1in
    \centering
    \includegraphics[width=0.95\textwidth]{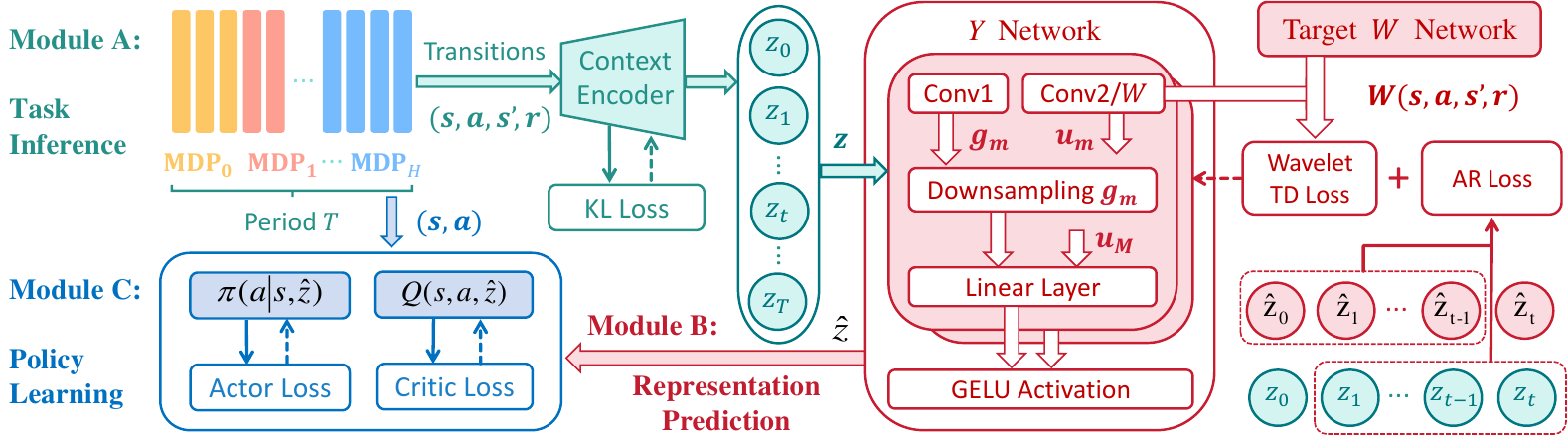}
    \vskip -0.1in
    \caption{\textbf{The architecture of WISDOM.} Module A begins with task inference to derive time-domain task representation $z$. Then module B transforms $z$ into wavelet domain by a wavelet representation network $Y_\phi$, jointly optimized by wavelet TD loss and AR loss to derive wavelet task representation $\hat{z}$. Finally, module C integrates $\hat{z}$ to adjust the policy based on predicted evolving trend.}
    \label{fig:Model structure}
    \vskip -0.1in
\end{figure*}

\subsection{Module B: Tracking Task Evolution via Wavelet Representation}
\label{sec:wt}
In NSRL, the task sequence $\mathcal{M}_{\omega_0:\omega_H}$ is generally modeled as a history-dependent stochastic process~\cite{xie2020deep,chen2022adaptive,Poiani2021MetaReinforcementLB}. From a time series analysis perspective, this sequence can be viewed as a non-stationary signal, with the underlying frequency of each type of task $f_h$ changing over time. WT is widely recognized for its ability to process non-stationary signals by decomposing them into different frequency components (multi-scale features)~\cite{polikar1996wavelet}. Conceptually supported by~\cite{xie2020deep,Poiani2021MetaReinforcementLB}, \modelname is the first to address NSRL by leveraging wavelet-domain task representations to track the temporal evolution of tasks.

Inspired by the design of prevalent WT networks~\cite{oord2016wavenet,shi2023sequence}, our \textbf{wavelet representation network} $Y_{\phi}$ consists of two dilated causal convolution networks, \textbf{\emph{Conv1}} and \textbf{\emph{Conv2}}, followed by a linear layer to guarantee the capture of underlying task dynamics. Specifically, $Y_{\phi}$ performs the DWT in Eq.~\ref{eq:dwt} with recursive convolution and transforms $\mathbf{z}$ into the wavelet domain: 
\begin{equation}
\mathbf{g}_m = \textbf{\emph{Conv1}}(\mathbf{u}_{m-1}, y_1; M), \mathbf{u}_m = \textbf{\emph{Conv2}}(\mathbf{u}_{m-1}, y_0; M), m\in[1,M], M\in \mathbb{N}^+,
 \mathbf{u}_0=\mathbf{z},   
\end{equation}
where the learnable convolution kernels $y_0$ and $y_1$ can be initialized as classical basis functions such as Haar wavelet\footnote{For haar wavelet~\cite{pattanaik1995haar}, $y_0=[1/\sqrt{2},1/\sqrt{2}]$ computes the average over adjacent elements, smoothing the $\mathbf{z}$ sequence, akin to a low-pass filter. In contrast, $y_1=[1/\sqrt{2},-1/\sqrt{2}]$ computes the difference, highlighting local changes and extracting high-frequency components, similar to a high-pass filter.}. Hence, $Y_{\phi}$ is guided to approximate traditional wavelet behavior while adaptively adjusting the kernel values during training. 
$y_0$ functions as a low-pass filter, eliminating high-frequency components, thereby enabling the approximation coefficient $\mathbf{u}_m$ to capture the overall trends of non-stationary task evolution. Conversely, $y_1$ acts as a high-pass filter, gathering high-frequency components, allowing the detail coefficients $\mathbf{g}_m$ to capture rich detail features of tasks. After performing $M$ decompositions on $\mathbf{u}_m$, $Y_{\phi}$ obtains the $M$th $\mathbf{u}_{M}$, along with the total $M$ detail coefficients $\mathbf{g}_{1:M}$. 
During each decomposition step $m$, the network downsamples and preserves the most recent detail coefficients $\tilde{\mathbf{g}}_m$. Finally, $Y_{\phi}$ applies a linear transformation to revert $\tilde{\mathbf{g}}_m$ and $\mathbf{u}_{M}$ to the time domain, yielding a more expressive and intrinsic \textbf{wavelet task representation sequence} $\hat{\mathbf{z}}$.

\subsection{Optimization Objective Design for Wavelet Representation Network $Y_\phi$}
\label{sec:update}
We propose a TD-style optimization of the representation in the wavelet domain and theoretically demonstrate the convergence of the corresponding wavelet TD operator, thereby facilitating learning and search for task-oriented representations.

In wavelet theory, the \textbf{\emph{Conv2}} layer (referred to as the $W$ network hereafter) integrates a low-pass filter to learn stable and inherent non-stationary features~\cite{Shensa1992TheDW}. We define a TD-style update operator over wavelet-based task features learned by the $W$ network as $\mathcal{F} W(\mathbf{z}_t) = \mathbf{z}_t + \Gamma W(\mathbf{z}_{t+1})$ to explicitly update $Y_\phi$. Similar to successor features~\cite{barreto2017successor}, wavelet features $W(\mathbf{z}_t)$ satisfy the Bellman equation. Theorem~\ref{th:mapping} demonstrates that $\mathcal{F}$ is a contraction mapping (Proof in Appendix~\ref{a-th:loss}), ensuring the convergence of wavelet representation updates and allowing for stable and consistent learning. Unlike most existing work that implicitly updates the representation network with TD loss over the value function, our explicit TD update will not neglect low-reward yet critical features. Although rewards in many RL settings can be sparse or delayed, learned representations tend to be denser and more informative. Intuitively, the wavelet TD update allows $Y_\phi$ to better capture structural regularities in MDP sequences and enhance sample efficiency. 

Moreover, the wavelet TD loss relies solely on the single-step future task representation $z_{t+1}$ and provides a more concise optimization objective than the commonly used auto-regressive (AR) loss in time-series forecasting~\cite{shi2023sequence,oord2016wavenet}. Our $W$ network is also implemented with dilated causal convolution, ensuring that the $i$-th output of $Y_\phi$ only depends on the first $i$ input elements, thus maintaining the conditional dependency essential for AR modeling to predict task changes. However, AR objectives often suffer from accumulated prediction errors, where errors in the current representation propagate and degrade future predictions. As a remedy, the wavelet TD update with the target $W$ network helps mitigate error propagation and stabilizes optimization through delayed target update.

\begin{theorem}
\label{th:mapping}
Let \(\mathcal{W}\) denote the set of all functions \(W:\mathcal{S}\times\mathcal{A}\rightarrow\mathbb{C}^{D}\) that map from the time domain to the wavelet domain. The wavelet update operator \(\mathcal{F}:\mathcal{W}\rightarrow\mathcal{W}\), defined as $\mathcal{F} W(\mathbf{z}_t) = \mathbf{z}_t + \Gamma W(\mathbf{z}_{t+1})$, is a contraction mapping, where $\mathbf{z}_{t}$ and $\mathbf{z}_{t+1}$ represents the current and next task representation sequence, respectively, and \(\Gamma\) denotes the discount factor in a diagonal matrix form. 
\end{theorem}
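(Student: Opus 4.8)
The plan is to recognize $\mathcal{F}$ as a Bellman-type operator, in the spirit of the successor-feature recursion cited in the text, and to prove contraction by the standard fixed-point route so that convergence of the wavelet representation updates follows from the Banach fixed-point theorem. First I would equip $\mathcal{W}$ with the supremum norm $\|W\| = \sup_{(s,a)\in\mathcal{S}\times\mathcal{A}}\|W(s,a)\|_{\infty}$, where $\|\cdot\|_{\infty}$ is the $\ell^{\infty}$ norm on $\mathbb{C}^{D}$, and confirm that $(\mathcal{W},\|\cdot\|)$ is a complete metric space, so that establishing a contraction immediately yields a unique fixed point together with geometric convergence of the iterates. I would also fix the notational convention that $W(\mathbf{z}_{t})$ abbreviates evaluation of $W$ along the transition realized at step $t$, and, since transitions are stochastic under $\mathcal{P}$, read the second summand as a conditional expectation $\Gamma\,\mathbb{E}\!\left[W(\mathbf{z}_{t+1})\mid\mathbf{z}_{t}\right]$, exactly mirroring the Bellman equation that $W$ is said to satisfy.

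The core step is a difference computation. For any $W_1,W_2\in\mathcal{W}$, the additive term $\mathbf{z}_{t}$ is independent of the argument function and cancels, leaving
\begin{equation}
\mathcal{F}W_1(\mathbf{z}_t)-\mathcal{F}W_2(\mathbf{z}_t)=\Gamma\,\mathbb{E}\!\left[W_1(\mathbf{z}_{t+1})-W_2(\mathbf{z}_{t+1})\mid\mathbf{z}_t\right].
\end{equation}
I would then pass to norms, apply Jensen's inequality to move $\|\cdot\|_{\infty}$ inside the expectation, and bound the expectation by the supremum norm $\|W_1-W_2\|$, obtaining $\|\mathcal{F}W_1-\mathcal{F}W_2\|\le\|\Gamma\|_{\mathrm{op}}\,\|W_1-W_2\|$, where $\|\Gamma\|_{\mathrm{op}}$ is the operator norm induced by $\|\cdot\|_{\infty}$.

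It remains to control $\|\Gamma\|_{\mathrm{op}}$. Because $\Gamma$ is diagonal with the per-coordinate discount factors $\gamma_1,\dots,\gamma_D\in[0,1)$ on its diagonal, the $\ell^{\infty}$-induced operator norm equals the largest modulus of a diagonal entry, $\|\Gamma\|_{\mathrm{op}}=\max_i|\gamma_i|=:\gamma_{\max}<1$. Substituting gives $\|\mathcal{F}W_1-\mathcal{F}W_2\|\le\gamma_{\max}\,\|W_1-W_2\|$ with $\gamma_{\max}<1$, which is precisely the contraction property. Invoking Banach then delivers a unique fixed point $W^{\star}$ satisfying $W^{\star}(\mathbf{z}_t)=\mathbf{z}_t+\Gamma\,\mathbb{E}\!\left[W^{\star}(\mathbf{z}_{t+1})\right]$ together with geometric convergence of the TD iterates, matching the convergence claim invoked in Sec.~\ref{sec:update}.

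The hard part, I expect, is not the algebra but pinning down the modeling assumptions that legitimize each inequality: namely (i) that $\mathbf{z}_t$ is bounded so the supremum norm is finite and $\mathcal{W}$ is genuinely a Banach space; (ii) that evaluating a state-action function $W$ at the representation $\mathbf{z}$ is well-posed, reconciling the stated domain $\mathcal{S}\times\mathcal{A}$ with the argument $\mathbf{z}_t$ actually fed to the operator; and (iii) that the complex-valued range $\mathbb{C}^{D}$ does not disrupt the estimates. I would resolve (iii) by noting that the argument never uses any order structure on $\mathbb{C}^{D}$, only the modulus and the triangle inequality, both of which behave identically over $\mathbb{C}$, so the classical real-valued contraction argument transfers verbatim; issues (i) and (ii) I would discharge with an explicit boundedness assumption on the encoder outputs and the convention fixed in the first paragraph.
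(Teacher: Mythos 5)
Your proof is correct and follows essentially the same route as the paper's: cancel the additive $\mathbf{z}_t$ term, pull the diagonal discount matrix out of the difference, move the norm inside the conditional expectation (Jensen), and bound by the supremum norm to obtain contraction modulus strictly less than one. The paper's proof is the same argument specialized to $\Gamma=\mathrm{diag}(\gamma,\dots,\gamma)$ with a sup-over-buffer, max-over-sequence-position norm; your added care about completeness, the Banach fixed-point theorem, the operator norm of $\Gamma$, and the domain mismatch between $\mathcal{S}\times\mathcal{A}$ and the argument $\mathbf{z}_t$ only makes explicit what the paper leaves implicit.
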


The overall optimization objective of wavelet representation network $Y_\phi$ is formulated as follows: 
\begin{equation}
\label{eq:Ynet}
\textstyle\mathcal{J}_{\phi}=\textstyle\alpha_{Y}\underbrace{\mathbb{E}_{c \sim \mathcal{B}}\left[\frac{1}{2}\left(W_{\varphi}(\mathbf{z}_t)-(\mathbf{z}_t+\Gamma\mathbb{E}\left[W_{\bar{\mu}}(\mathbf{z}_{t+1} )\right])\right)^2\right]}_{\text{Wavelet TD loss}}
    -\underbrace{\textstyle\mathbb{E}_{\hat{z}\sim Y_\phi}\left[\text{log}\prod\limits_{t=0}^TP(\hat{z}_t|\hat{z}_{<t})\right]}_{\text{AR loss}},
\end{equation}
where $\alpha_{Y}$ balances the contribution of each loss. Similarly to DQN~\cite{mnih2013playing}, $W_{\bar{\mu}}$ is a target network to stabilize the training and is updated separately and softly. Complemented to the wavelet TD loss, the AR loss adheres to stricter temporal constraints, preventing the trends captured by $Y_\phi$ from being temporally misaligned due to the relaxed orthogonality of the learnable filters. Moreover, it reinforces long-term temporal dependencies, empowering $Y_\phi$ to make predictions based on the task representation sequence. Together, they facilitate rapid adaptation to evolving task structures.

\subsection{Module C: Policy Learning Through Wavelet Task Representations}
\label{sec:policy}
To establish a more intimate relationship between dynamic adjustment of the policy in advance and the non-stationary evolving trends, we further incorporate the predicted wavelet task representation $\hat{z}$ generated by $Y_{\phi}$ into the policy iteration. Our \modelname is based on the Soft Actor-Critic (SAC) algorithm~\cite{Haarnoja2018SoftAA} and can also be integrated with any downstream RL algorithms. 

The objective of training the contextual critic network $Q_\upsilon$ is to minimize the squared residual error: 
\begin{equation}
\label{eq:critic}
\textstyle\mathcal{J}_{\upsilon}=\mathbb{E}_{\left(s,a\right) \sim \mathcal{B},\hat{z}\sim Y_{\phi}}\left[\frac{1}{2}\left(Q_\upsilon\left(s,a,\hat{z}\right)-Q_\text{target}\right)^2\right],
\end{equation}
with the target $Q$ value defined as $\textstyle Q_\text{target}=r+\gamma \mathbb{E}_{s'\sim\mathcal{B}, a'\sim \pi_\theta,\hat{z}\sim Y_{\phi}}\left[Q_{\bar{\zeta}}\left(s',a',\hat{z}\right)\right]$, where $\bar{\zeta}$ denotes stopping the backpropagation of gradients of the target critic network $Q_\zeta$. The parameters of $Q_\zeta$ are updated with an exponential moving average derived from the weights of the $Q_\upsilon$ network.
The contextual policy network $\pi_\theta$ can be updated by optimizing the following objective:
\begin{equation}
\label{eq:actor}
    \mathcal{J}_{\theta}=\mathbb{E}_{s\sim \mathcal{B},a \sim \pi_\theta,\hat{z}\sim Y_{\phi}}\left[\alpha \log \left(\pi_\theta\left(a|s,\hat{z}\right)\right)-Q_\upsilon\left(s, a, \hat{z}\right)\right],
\end{equation}
where $\alpha$ is a temperature coefficient. Alg.~\ref{alg} in Appendix~\ref{pseudocode} presents the pseudocode of WISDOM. The following theorems establish a connection between task representations through the wavelet transform and policy performance. In Theorem~\ref{th:pi-w}, we demonstrate that the wavelet-domain task representations preserve the ability to indicate policy performance. And in Theorem~\ref{th:pi-hatz}, we prove that the policy is improved with restored time-domain task representations during policy iteration. 
\begin{theorem}
\label{th:pi-w}
Suppose that the reward function \(\mathcal{R}(z)\) can be expanded into a Bth-degree Taylor series for \(z\in\mathbb{R}^D\), then for any two policies \(\pi_1\) and \(\pi_2\), their performance difference is bounded as:
\begin{align}\label{eq:thm2-desired-bound}
    |J_{\pi_1}-J_{\pi_2}|\leq \frac{\sqrt{D}}{1-\gamma}\cdot \sum_{b=1}^{B} \frac{\left\|\mathcal{R}^{(b)}(0)\right\|_D}{b!}\cdot\mathop{\max}\limits_{1\leq q \leq D}\mathop{\sup}\limits_{d_q\in\mathbb{R},\beta_q>0}\left|\mathcal{W}_{\pi_1}^{(b)}(d_q, \beta_q)-\mathcal{W}_{\pi_2}^{(b)}(d_q, \beta_q)\right|,
\end{align}
where \(\mathcal{W}_{\pi}^{(b)}(d,\beta)\) denotes the wavelet transform of the $b$th power of the task representation sequence $\mathbf{z}^{(b)}=[z_0,z_1,\ldots,z_{T_H}]^{(b)}$ for any integer \(b\in [1,B]\).
\end{theorem}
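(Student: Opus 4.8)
The plan is to begin from the definition of policy performance $J_{\pi}=\mathbb{E}_{\pi}\!\left[\sum_{t=0}^{\infty}\gamma^{t}\mathcal{R}(z_t)\right]$ and expand the reward in its $B$th-degree Taylor series about the origin, $\mathcal{R}(z)=\sum_{b=0}^{B}\frac{1}{b!}\mathcal{R}^{(b)}(0)\cdot z^{(b)}$. Subtracting the two performances, the zeroth-order (constant) term cancels, leaving
\begin{equation*}
J_{\pi_1}-J_{\pi_2}=\sum_{b=1}^{B}\frac{1}{b!}\sum_{t=0}^{\infty}\gamma^{t}\,\mathcal{R}^{(b)}(0)\cdot\big(\mathbb{E}_{\pi_1}[z_t^{(b)}]-\mathbb{E}_{\pi_2}[z_t^{(b)}]\big).
\end{equation*}
Each factor in the target bound is then produced in turn: the coefficient $\frac{1}{b!}$ and the outer sum over $b$ are already present, and the remaining task is to control the inner discounted sum by $\frac{\sqrt{D}}{1-\gamma}\|\mathcal{R}^{(b)}(0)\|_D\cdot\max_q\sup_{d_q,\beta_q}\big|\mathcal{W}_{\pi_1}^{(b)}(d_q,\beta_q)-\mathcal{W}_{\pi_2}^{(b)}(d_q,\beta_q)\big|$.

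Next I would apply the Cauchy--Schwarz inequality over the $D$ coordinates to split off the reward sensitivity, $|\mathcal{R}^{(b)}(0)\cdot v|\le\|\mathcal{R}^{(b)}(0)\|_D\,\|v\|_D$, and then dominate the Euclidean norm by the largest coordinate, $\|v\|_D\le\sqrt{D}\,\max_{1\le q\le D}|v_q|$, which accounts for the $\sqrt{D}$ prefactor. Summing the geometric weights $\sum_{t}\gamma^{t}=\frac{1}{1-\gamma}$, after bounding each per-step discrepancy by its value uniform in $t$, produces the $\frac{1}{1-\gamma}$ factor. At this stage the bound is expressed entirely through the coordinate-wise time-domain discrepancies $\big|\mathbb{E}_{\pi_1}[z_{t,q}^{(b)}]-\mathbb{E}_{\pi_2}[z_{t,q}^{(b)}]\big|$ of the $b$th-power sequences.

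The decisive step is to pass from the time domain into the wavelet domain. Here I would invoke the reconstruction (inversion) property of the wavelet transform: each coordinate sequence $\mathbb{E}_{\pi}[z_{\cdot,q}^{(b)}]$ is recoverable from its wavelet coefficients $\mathcal{W}_{\pi}^{(b)}(d_q,\beta_q)$, so the pointwise difference of two such sequences is controlled by the supremum, over all translations $d_q$ and scales $\beta_q$, of the difference of their wavelet transforms; taking the maximum over $q$ then yields exactly the claimed inequality. The main obstacle is making this last passage rigorous, since the naive synthesis formula produces an extra multiplicative constant (the admissibility constant times the total mass of the reconstruction kernel). I would handle this by normalizing the chosen mother wavelet so that this constant equals one, or by absorbing it into the stated bound, and I would separately verify the analytic prerequisites: that the infinite discounted sum, the expectation, and the Taylor expansion may be interchanged, which follows from a boundedness assumption on the representations $z_t$ guaranteeing convergence of both the Taylor remainder and the discounted series.
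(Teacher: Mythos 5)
Your proposal is correct and follows essentially the same route as the paper's proof: Taylor-expand the reward, apply Cauchy--Schwarz to split off $\|\mathcal{R}^{(b)}(0)\|_D$, convert the Euclidean norm to a coordinate maximum (the $\sqrt{D}$ factor), extract $\frac{1}{1-\gamma}$ from the discounting, and---the decisive step---use the wavelet inversion formula with a normalized dual wavelet $\tilde{\psi}$ so that the reconstruction kernel's mass is absorbed, bounding time-domain moment differences by the supremum of wavelet-coefficient differences. The only cosmetic difference is bookkeeping: the paper packages the discounted sum into a discounted task-representation distribution $\mathcal{D}^\pi$ (its Lemma~\ref{a-lemma}) before bounding, whereas you sum the geometric series directly after a uniform-in-$t$ bound; your explicit attention to the admissibility/normalization constant and to interchanging limits is a point the paper glosses over.
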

See Appendix~\ref{a-theorem2} for the proof. Theorem~\ref{th:pi-w} describes how the performance differences of the wavelet-domain features control the performance of the corresponding policies. We prove that wavelet-domain features can serve as an informative indicator of policy performance, aiming at more efficient policy iteration to find the optimal policy and exhibiting fast convergence in experiments. 

\begin{theorem}
\label{th:pi-hatz}
\modelname returns a policy $\pi(\cdot | s,\hat{z})$ conditioned on the wavelet task representation that improves upon its iterated history policy $\pi_h$, satisfying $J_{\text {\modelname}}\geq J_{\pi_h}$ under certain assumptions. 
\end{theorem}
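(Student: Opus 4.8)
The plan is to establish Theorem~\ref{th:pi-hatz} via a policy-improvement argument adapted to the contextual setting, where the wavelet task representation $\hat{z}$ serves as an augmentation of the state. First I would frame the augmented MDP: since the policy $\pi(\cdot|s,\hat{z})$ conditions on both the true state $s$ and the restored representation $\hat{z}$, I treat the pair $(s,\hat{z})$ as an effective state and interpret the SAC objectives in Eqs.~\ref{eq:critic}--\ref{eq:actor} as soft policy evaluation and soft policy improvement steps over this augmented space. The key is that at each iteration of the WISDOM update, the new policy $\pi$ is obtained by minimizing $\mathcal{J}_\theta$, which is precisely the KL-projection of the exponentiated soft $Q$-function; this is the standard SAC improvement operator.

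The main line of argument proceeds in three steps. First I would invoke the soft policy evaluation step: given a fixed history policy $\pi_h$ conditioned on $\hat{z}$, the contextual critic $Q_\upsilon(s,a,\hat{z})$ converges to the true soft value $Q^{\pi_h}$ of $\pi_h$ in the augmented MDP, which follows from the contraction of the soft Bellman backup in Eq.~\ref{eq:critic} (a standard fixed-point argument, using $\gamma\in[0,1)$). Second, I would show the soft policy improvement inequality: the minimizer $\pi$ of $\mathcal{J}_\theta$ satisfies $Q^{\pi_h}(s,a,\hat{z})$-dominance, i.e. the soft value of the updated policy is pointwise no smaller than that of $\pi_h$, by the usual argument that the KL-projection onto the Boltzmann distribution of $Q^{\pi_h}$ cannot decrease the expected soft return. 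Third, I would chain these pointwise inequalities over the trajectory distribution induced by the task sequence, summing the discounted soft rewards and taking expectations over the non-stationary task distribution $p(\mathcal{T}_t)$ to conclude $J_{\text{\modelname}}\geq J_{\pi_h}$. The "certain assumptions" in the statement I expect to be: that $\hat{z}$ is held fixed (or treated as exogenous context) within each evaluation–improvement cycle, that the soft action-value is bounded, and that the representation mapping $Y_\phi$ is consistent across the two policies being compared so that both act on the same augmented state space.

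The hard part will be handling the fact that $\hat{z}$ is itself a learned, policy-dependent object produced by $Y_\phi$, rather than a fixed component of the environment. A naive application of the SAC improvement lemma assumes a stationary augmented MDP, but here the representation evolves as the policy and context encoder are updated, so the augmented transition kernel is not fixed across iterations. I would address this by conditioning on the representation: I fix the output of $Y_\phi$ (and hence the induced $\hat{z}$-trajectory) during a single policy-improvement step, prove the one-step improvement within that frozen augmented MDP, and argue that the comparison $J_{\text{\modelname}}\geq J_{\pi_h}$ is made at the level of policies sharing the same $\hat{z}$-conditioning. This isolates the improvement claim from the representation-learning dynamics, which are governed separately by the convergence guarantee of the wavelet TD operator $\mathcal{F}$ established in Theorem~\ref{th:mapping}. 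A secondary subtlety is the entropy term in the soft objective: I would either state the result in terms of the soft (entropy-regularized) return and note that the same monotonicity carries to the unregularized return in the limit of a decaying temperature $\alpha$, or explicitly carry the entropy bonus through the chaining step so that the inequality is stated for the maximum-entropy objective consistent with Eq.~\ref{eq:actor}.
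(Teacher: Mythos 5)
Your proposal takes a genuinely different route from the paper, and the difference matters. The paper's proof is not a policy-iteration argument at all: it is an abstract realizability-plus-oracle-optimization argument. It defines the policy class $\Pi_{\mathrm{WISDOM}} \doteq \{\pi : \exists\,\theta \text{ with } \pi(\cdot|s,\hat z)=f_\theta(s,Y_\phi(e_\eta(\mathcal{C})))\ \forall s\}$ and imposes four assumptions: (1.1) the estimated history policy equals the true one, $\pi_{\hat h}=\pi_h$; (1.2) that estimate is representable in $\Pi_{\mathrm{WISDOM}}$; (1.3) performance evaluation is exact, $\hat J(\pi)=J(\pi)$; (1.4) WISDOM optimizes perfectly over $\Pi_{\mathrm{WISDOM}}$. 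The theorem then follows in four lines: $J_{\mathrm{WISDOM}}=J(\arg\max_{\pi\in\Pi_{\mathrm{WISDOM}}}\hat J(\pi))=\max_{\pi\in\Pi_{\mathrm{WISDOM}}}J(\pi)\ge J(\pi_{\hat h})\ge J(\pi_h)$. No soft Bellman contraction, no KL projection, no critic appears; the entire burden is carried by the assumptions. What this buys is immunity to exactly the two difficulties you flagged for yourself: it never needs the augmented process $(s,\hat z)$ to be a stationary MDP, and it never has to reconcile the entropy-regularized SAC objective with the plain return, because it compares policies directly through $J$ and an optimization oracle.

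Your route — SAC soft policy iteration on the augmented state $(s,\hat z)$ — is the natural mechanistic alternative, but as written it has two genuine gaps. First, soft policy evaluation (contraction of the soft Bellman backup behind Eq.~\ref{eq:critic}) and soft policy improvement (the KL-projection step behind Eq.~\ref{eq:actor}) both require a fixed, time-homogeneous MDP. Freezing the output of $Y_\phi$ within one improvement cycle does not deliver this: the underlying task $\omega_h$ still evolves through the history-dependent process $\rho$, so the transition kernel of $(s,\hat z)$ is neither Markov nor time-invariant unless you additionally assume that $\hat z$ is a sufficient statistic of the history rendering the augmented process homogeneous — a substantially stronger (and in your list unstated) condition than "the representation mapping $Y_\phi$ is consistent across the two policies." Theorem~\ref{th:mapping} does not rescue this either; it concerns convergence of the wavelet TD operator on representations, not Markovianity of the environment. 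Second, the theorem's $J$ is the plain discounted return of the paper's NSRL objective, whereas the SAC improvement lemma yields monotonicity of the soft value; your decaying-temperature fix does not close this, since soft-value dominance at each fixed $\alpha$ does not pass to plain-return dominance in the limit (the compared policies themselves change with $\alpha$), and carrying the entropy bonus explicitly leaves entropy-difference terms of indeterminate sign in the inequality. To make your plan rigorous you would need assumptions comparable in strength to the paper's (augmented Markovianity, exact evaluation and exact improvement, and a restatement for the maximum-entropy return) — at which point the paper's shorter oracle argument accomplishes the same claim with less machinery, albeit in a way that is nearly tautological given its assumptions.
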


The assumptions and proof are provided in Appendix~\ref{a-th:PI}. Theorem~\ref{th:pi-hatz} describes that our restored $\hat{z}$ in the time domain after wavelet transform leads to an improved contextual policy during iteration. The wavelet transform is well known for enhancing the signal-to-noise ratio (SNR) by separating different frequencies~\cite{polikar1996wavelet}. And $\hat{z}$ is expected to filter out task-irrelevant information and helps the policy focus on essential non-stationary characteristics, providing a clearer optimization direction.

\section{Experiments}
\label{Experiments}
\textbf{Task settings.} Following SeCBAD~\cite{chen2022adaptive}, the stochastic period $T_h$ of an MDP $\mathcal{M}_{\omega_h}$ is sampled from a Gaussian distribution with a mean of 60 and a variance of 20 time steps. We mainly performed experiments on the following three benchmarks, and more details are in Appendix~\ref{detail setting}.
\begin{enumerate}[leftmargin=*]
\item \textbf{Meta-World}~\cite{yu2020meta}, consisting of 50 robotic manipulation tasks with a broader distribution that is generally considered challenging, exhibits non-stationarity in the continuous variation of the target position for the robotic arm movement, which is correlated with the reward function; \item \textbf{Type-1 Diabetes}~\cite{xie2018simglucose} aims to test the control of blood glucose levels (observation) in type-1 diabetic patients by injecting insulin (action).
The non-stationarity is reflected in the continuous variation of food intake, which affects blood glucose levels and the dynamics of tasks; 
\item \textbf{MuJoCo}~\cite{Todorov2012MuJoCoAP}, widely adopted in NSRL, has only parametric diversity. We considered two MuJoCo scenarios: a) Evolution tasks with changing reward functions (e.g., Walker-Vel); b) Evolution tasks with changing dynamics (e.g., Cheetah-Damping).
\end{enumerate}

\textbf{Baselines.} We compared \modelname with four competitive NSRL baselines: CEMRL~\cite{bing2023meta2}, TRIO~\cite{Poiani2021MetaReinforcementLB}, SeCBAD~\cite{chen2022adaptive}, and COREP~\cite{zhang2024tackling}, which utilize the Gaussian Mixture Model, Gaussian Process, reward function, and causal graph to model the task evolution process, respectively. Additionally, we compared SAC~\cite{Haarnoja2018SoftAA}, PEARL~\cite{rakelly2019efficient} and RL$^2$~\cite{duan2016rl} to highlight the importance of task inference and modeling of the task evolution process, respectively. 
\begin{figure*}[ht]
\vskip -0.15in
\centering
\subfigure{
\includegraphics[width=0.23\textwidth]{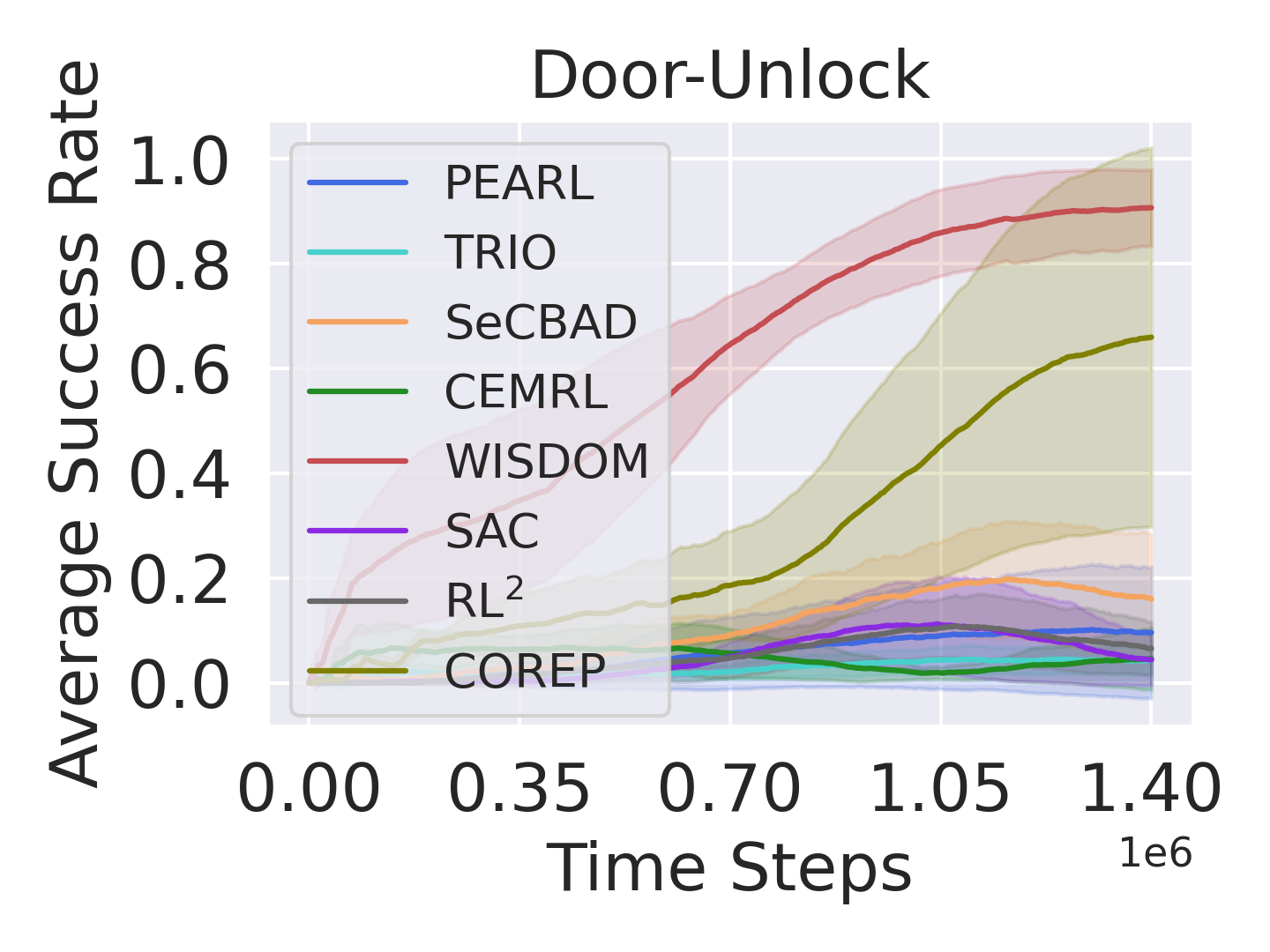}
}
\subfigure{
\includegraphics[width=0.23\textwidth]{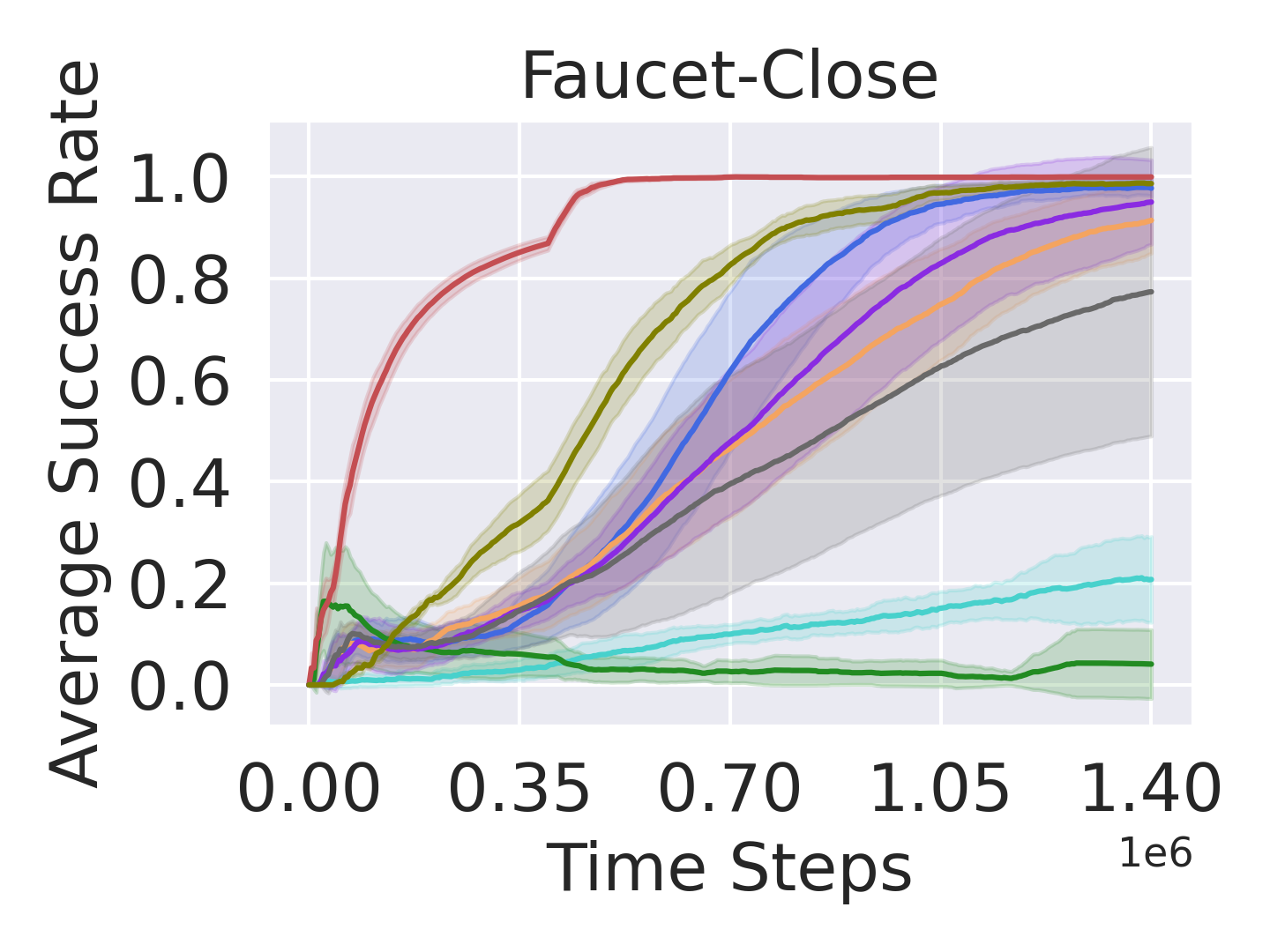}
}
\subfigure{
\includegraphics[width=0.23\textwidth]{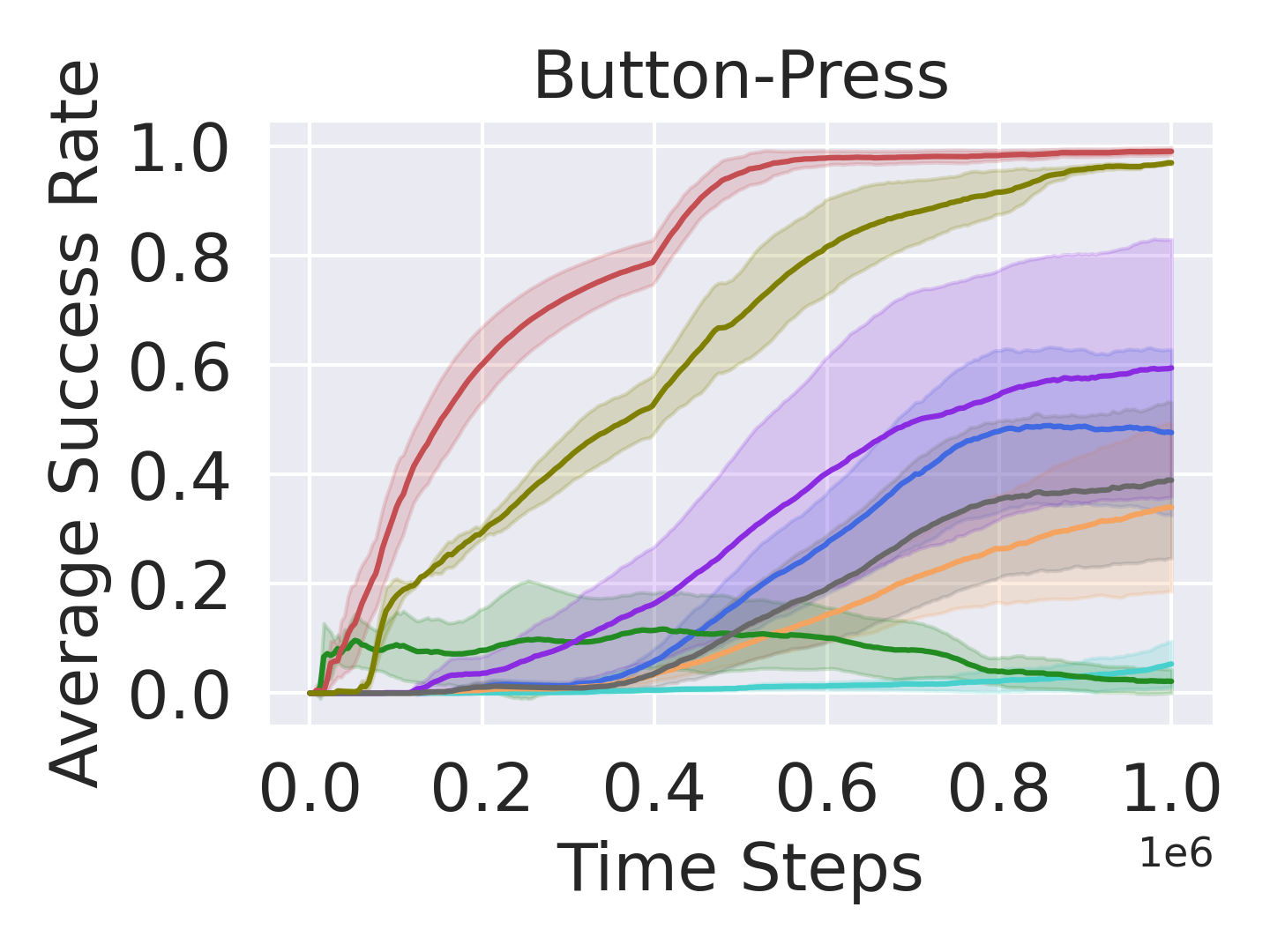}
}
\subfigure{
\includegraphics[width=0.23\textwidth]{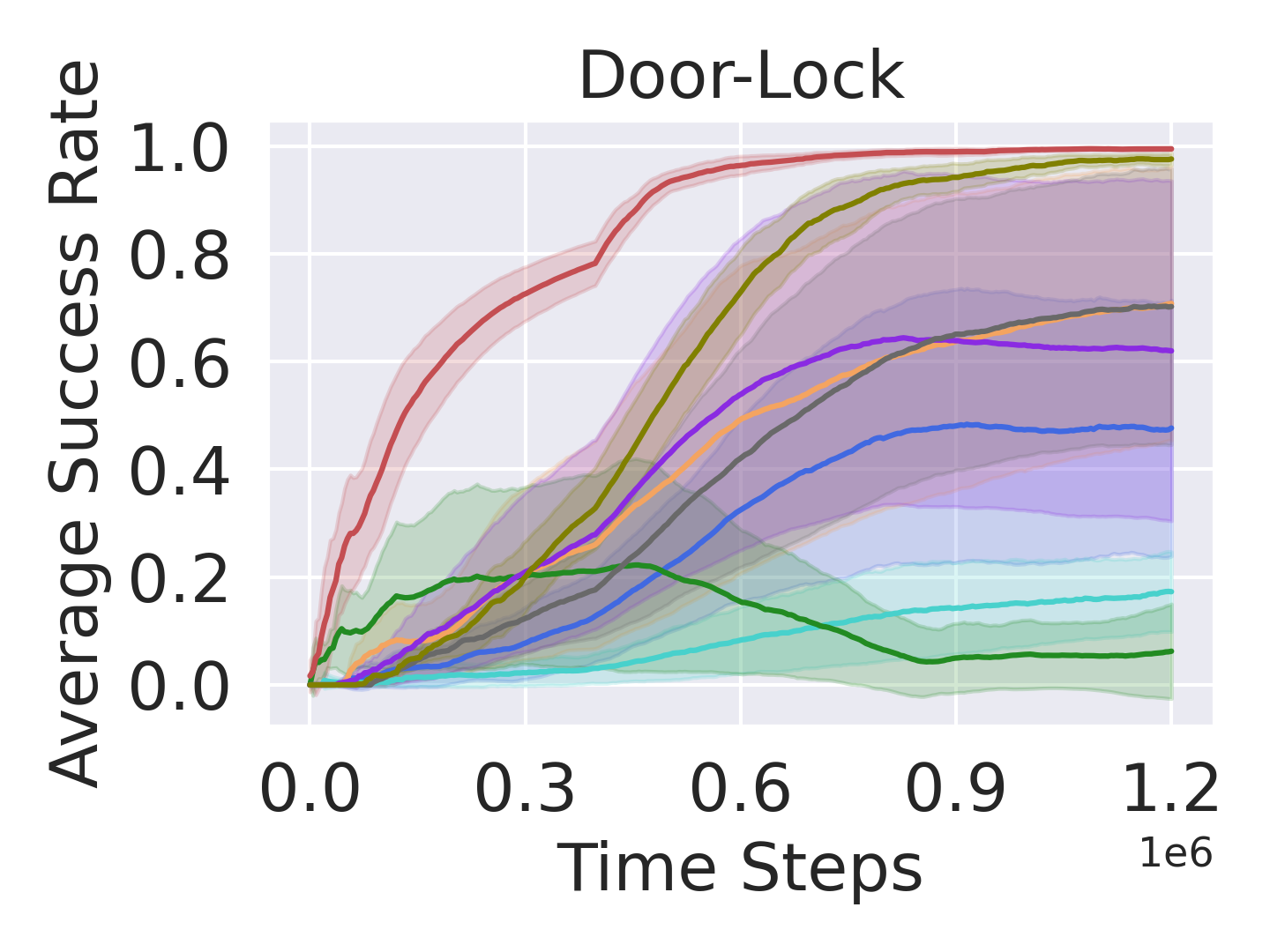}
}
\vskip -0.15in
\subfigure{
\includegraphics[width=0.23\textwidth]{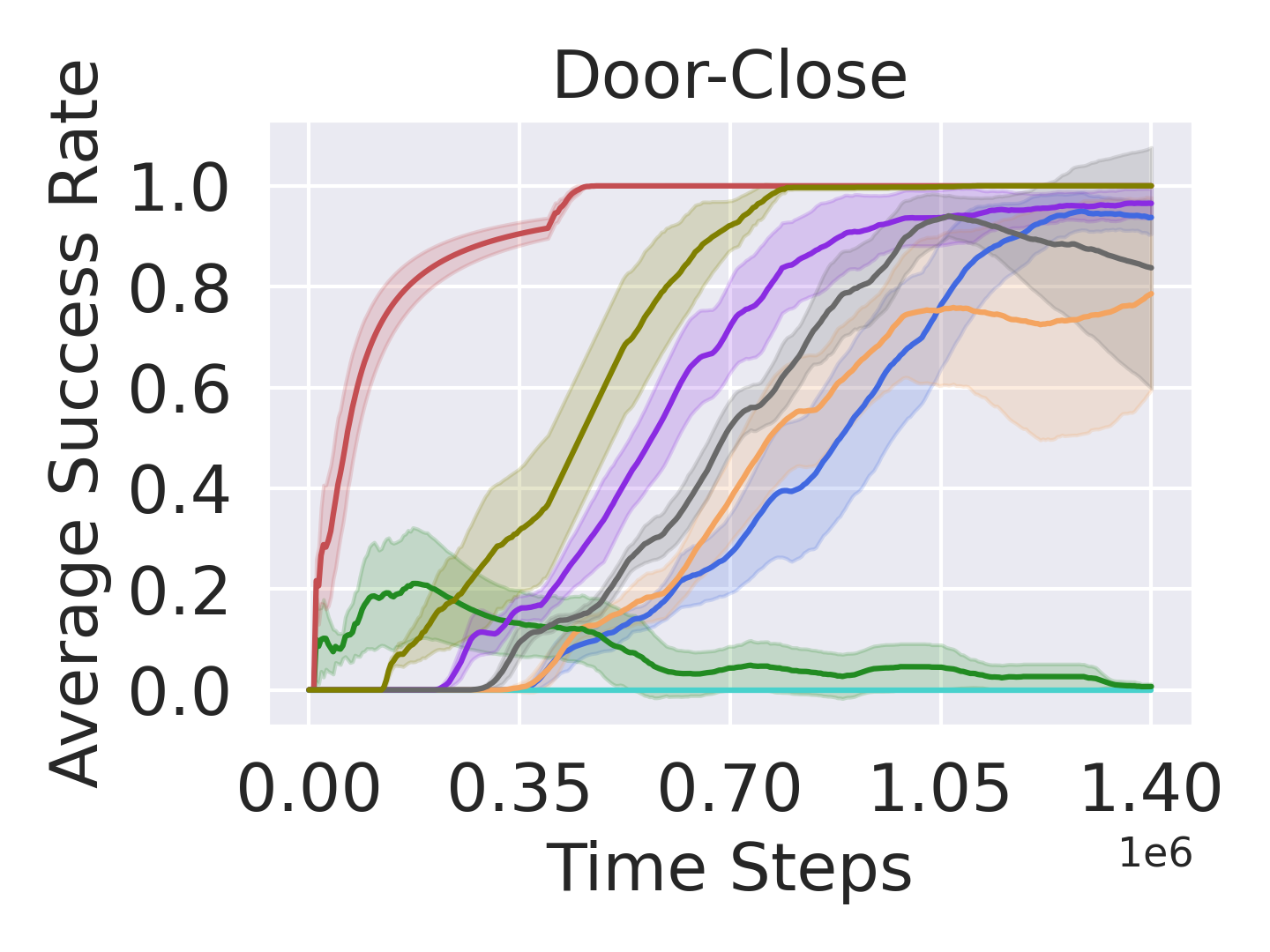}
}
\subfigure{
\includegraphics[width=0.23\textwidth]{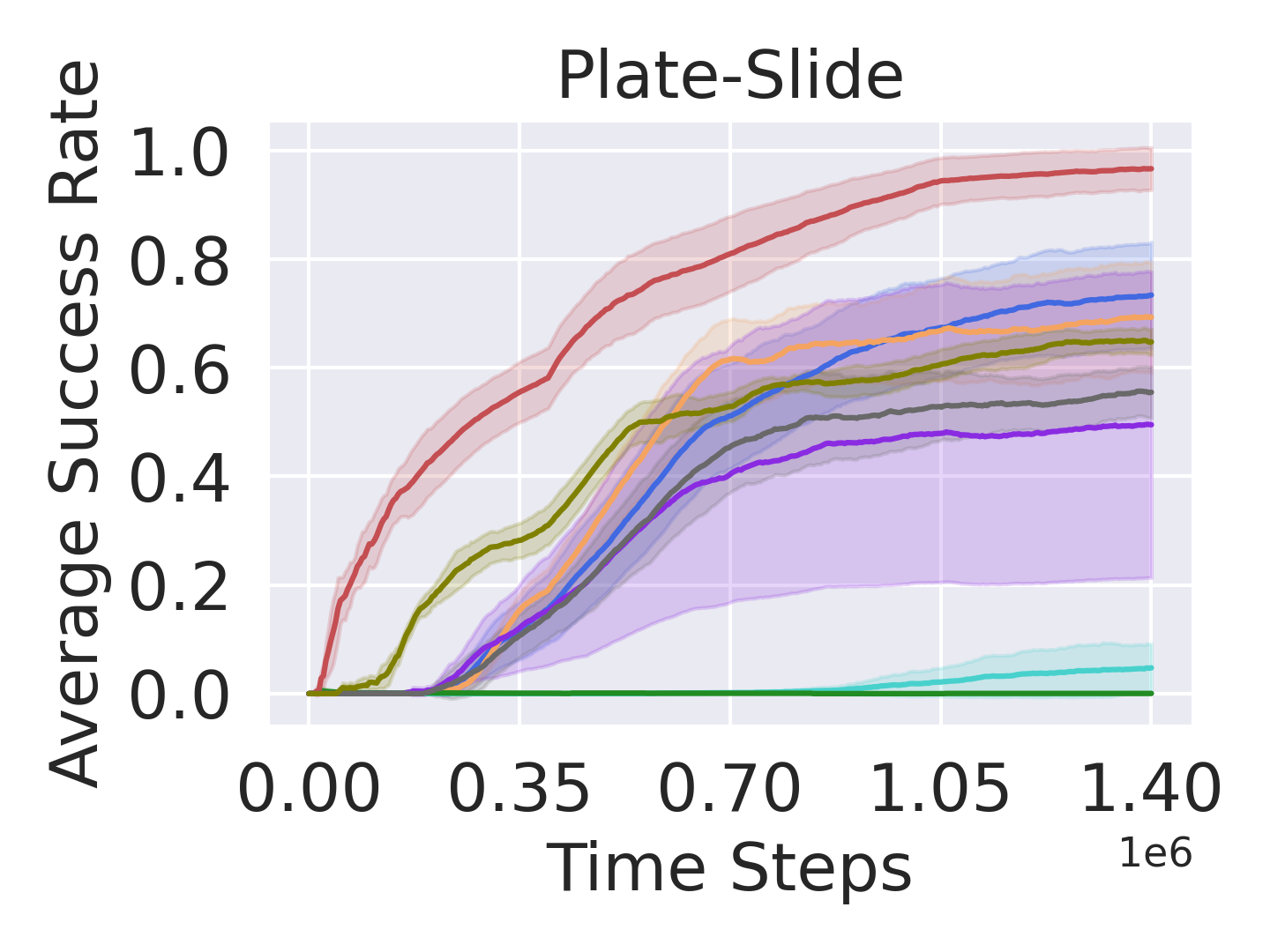}
}
\subfigure{
\includegraphics[width=0.23\textwidth]{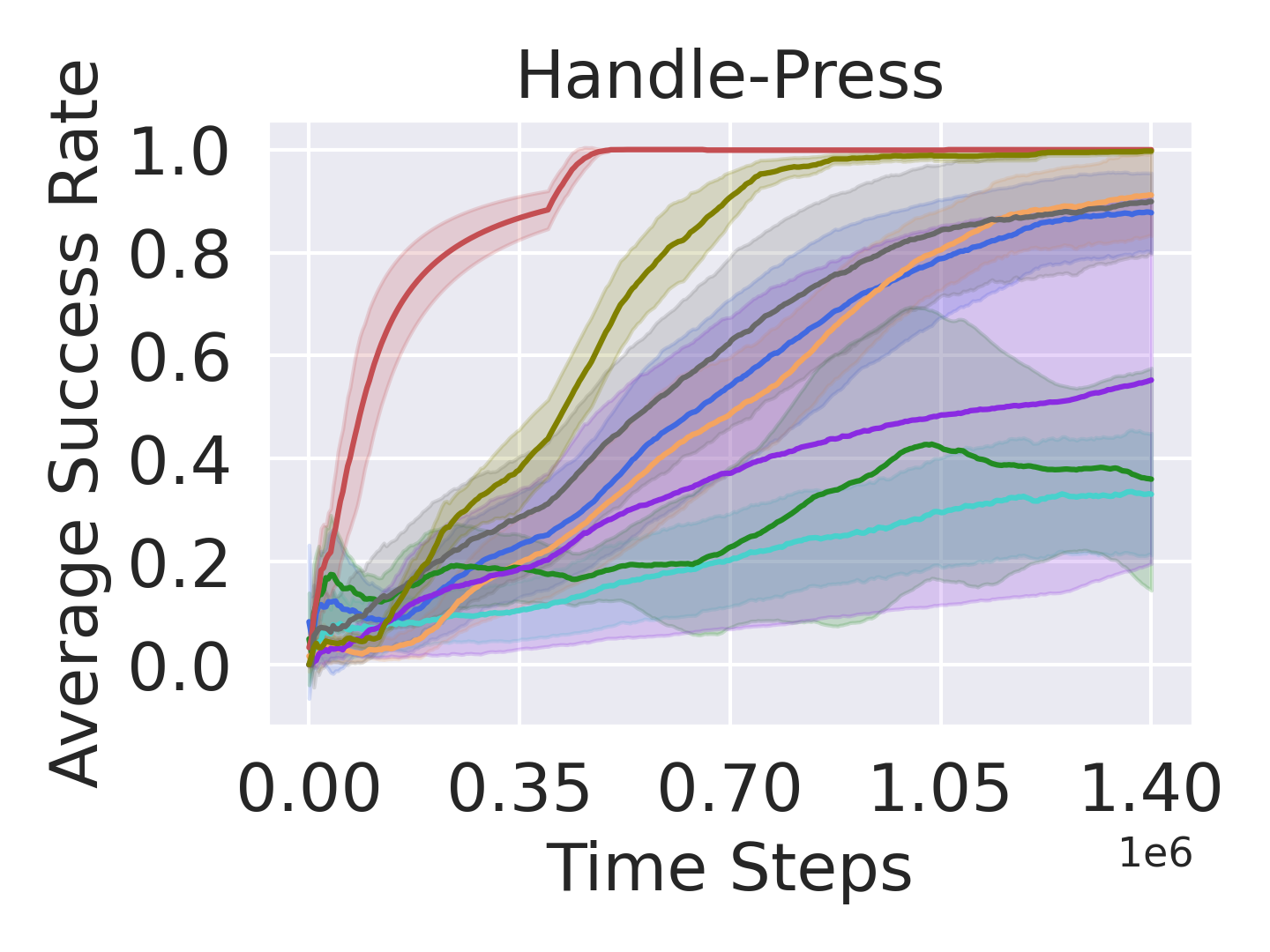}
}
\subfigure{
\includegraphics[width=0.23\textwidth]{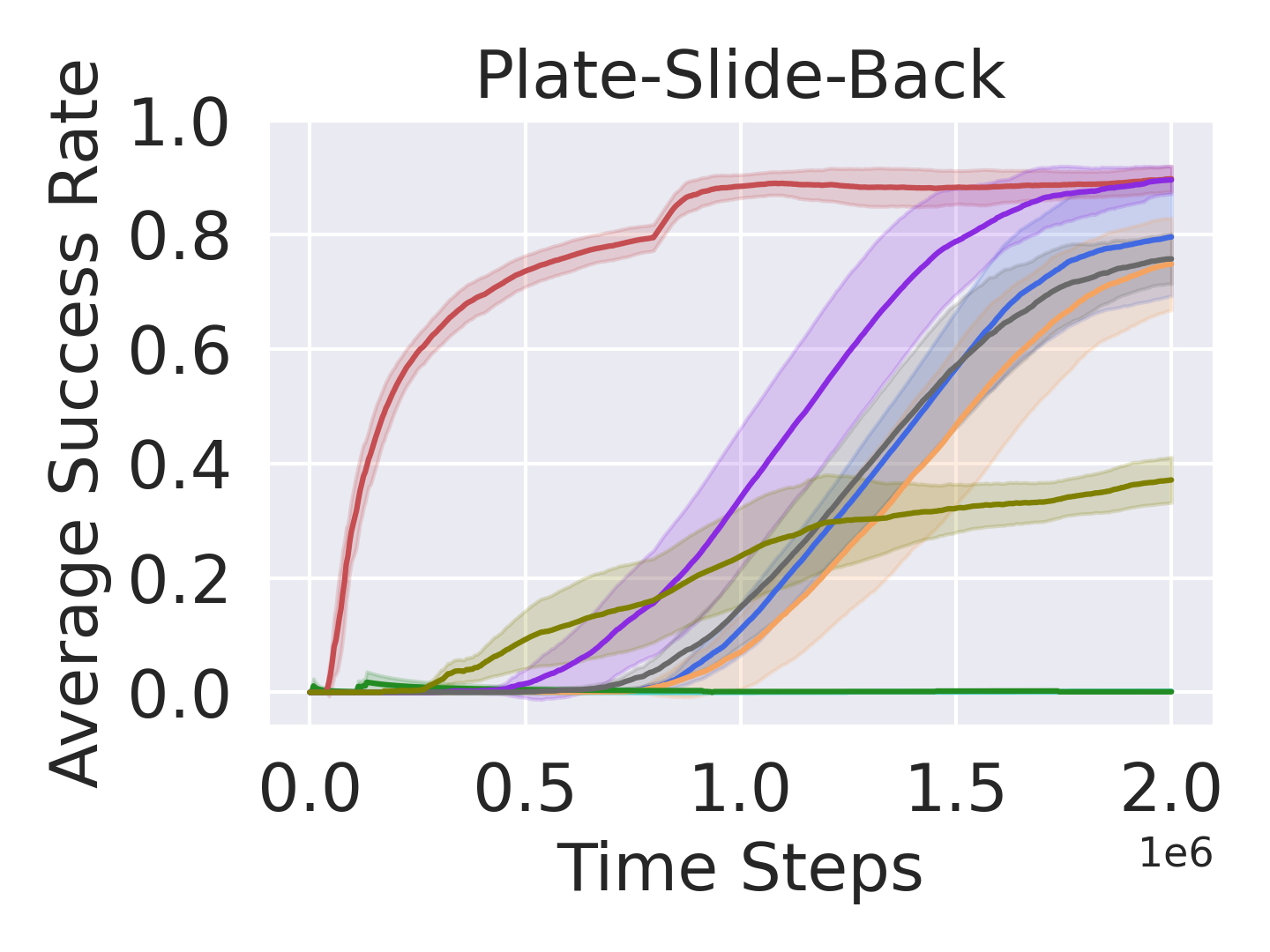}
}
\vskip -0.15in
\caption{Testing average performance on Meta-World over 6 random seeds. Our \modelname achieves rapid convergence and exhibits excellent asymptotic performance.
}
\vskip -0.1in
\label{fig:meta-world}
\end{figure*}

\textbf{Meta-World results.} Fig.~\ref{fig:meta-world} and Table~\ref{tab:ml1} illustrate the testing curves and converged performance, respectively. As in Fig.~\ref{fig:meta-world}, \modelname consistently demonstrates exceptional adaptation efficiency across all environments, surpassing all baseline methods. As in Table~\ref{tab:ml1}, \modelname achieves the highest average success rate with minimal variance in most environments, significantly contributing to effective noise filtering and the retention of fundamental low-frequency components. Although SeCBAD outperforms PEARL in Door-Unlock and Door-Lock, it falls behind PEARL regarding rapid adaptability in most environments. In Meta-World, where the evolutionary factors are relatively complex due to the potential variations in the three-dimensional coordinates of the target position, agents require substantial trial and error during the early training due to fluctuating rewards. Consequently, SeCBAD faces challenges in relying solely on rewards to identify whether task evolution has occurred. Surprisingly, PEARL and RL$^2$ have impressive overall converged performance, indicating their potential to adapt to non-stationary tasks by solely meta-learning a policy, even without explicitly modeling the inherent changing characteristics of task evolution. The competitive SAC further illustrates that inaccuracies in the inference and modeling of non-stationary tasks in more challenging environments may undermine the adaptability of the policy. When CEMRL attempts to cluster complex task distributions into a few categories, representation collapse often occurs. Conversely, clustering into a larger number of categories significantly increases the complexity. The selection of kernel functions in TRIO has a similar dilemma, causing wholly poor performance.
\begin{table*}[htbp]
\centering
\vskip -0.15in
\huge
\caption{Converged average test success rate $\pm$ standard error $(\%)$ on Meta-World.}
\begin{adjustbox}{max width=0.95\textwidth}
\begin{tabular}{@{}lccccccccccc@{}}
\toprule

& \multicolumn{1}{c}{Door-Unlock} & \multicolumn{1}{c}{Faucet-Close} & \multicolumn{1}{c}{Button-Press} & \multicolumn{1}{c}{Door-Lock} & \multicolumn{1}{c}{Door-Close} & \multicolumn{1}{c}{Plate-Slide} & \multicolumn{1}{c}{Handle-Press} & \multicolumn{1}{c}{Plate-Slide-Back}\\
\midrule
CEMRL & 4.08$\pm{8.74}$ & 0.17$\pm{0.37}$ & 1.83$\pm{4.10}$ & 6.67$\pm{14.14}$ & 0.42$\pm{0.93}$ & 0.00$\pm{0.00}$ & 45.17$\pm{38.58}$ & 0.00$\pm{0.00}$\\
TRIO & 3.92$\pm{6.46}$ & 20.17$\pm{22.14}$ & 10.42$\pm{16.36}$ & 20.75$\pm{19.64}$ & 0.00$\pm{0.00}$ & 6.42$\pm{10.39}$ & 33.25$\pm{26.92}$ & 0.20$\pm{\textbf{0.28}}$\\
PEARL & 10.25$\pm{19.31}$ & 96.33$\pm{6.88}$ & 39.42$\pm{32.54}$ & 50.67$\pm{32.74}$ & 88.58$\pm{14.64}$ & 73.50$\pm{17.18}$ & 85.33$\pm{20.78}$ & 82.50$\pm{13.73}$\\
SeCBAD & 11.58$\pm{17.01}$ & 93.42$\pm{8.99}$ & 36.58$\pm{32.46}$ & 72.50$\pm{31.95}$ & 89.33$\pm{18.20}$ & 71.50$\pm{19.93}$ & 93.92$\pm{12.37}$ & 79.53$\pm{11.38}$\\
COREP & 67.50$\pm{45.96}$ & 98.00$\pm{2.21}$& 96.83$\pm{4.24}$ & 98.67$\pm{1.89}$ &\textbf{100.00}$\pm{\textbf{0.00}}$& 64.17$\pm{6.86}$&\textbf{100.00}$\pm{\textbf{0.00}}$&
43.00$\pm{10.19}$\\
SAC & 1.67$\pm{\textbf{3.14}}$ & 98.33$\pm{3.73}$ & 62.83$\pm{38.22}$ & 62.67$\pm{39.33}$ &
\textbf{100.00}$\pm{\textbf{0.00}}$ & 50.00$\pm{37.15}$& 61.25$\pm{45.63}$& 90.03$\pm{5.98}$\\
RL$^2$ & 5.75$\pm{9.38}$ & 81.00$\pm{36.88}$ & 39.83$\pm{28.49}$ & 70.67$\pm{35.14}$ & 81.83$\pm{36.92}$& 56.92$\pm{15.71}$& 94.75$\pm{9.65}$& 79.23$\pm{9.85}$\\
\textbf{\modelname} & \textbf{91.58}$\pm{9.36}$ & \textbf{99.92}$\pm{\textbf{0.19}}$ & \textbf{99.42}$\pm{\textbf{1.30}}$ & \textbf{99.67}$\pm{\textbf{0.75}}$ & \textbf{100.00}$\pm{\textbf{0.00}}$& \textbf{96.50}$\pm{\textbf{6.04}}$& \textbf{100.00}$\pm{\textbf{0.00}}$ & \textbf{90.57}$\pm{7.66}$\\
\bottomrule
\end{tabular}
\label{tab:ml1}
\end{adjustbox}
\vskip -0.15in
\end{table*}

\begin{wrapfigure}{r}{0.5\textwidth}
\centering
\setlength{\abovecaptionskip}{0cm}
\vskip -0.25in
\subfigure{
\includegraphics[width=0.23\textwidth]{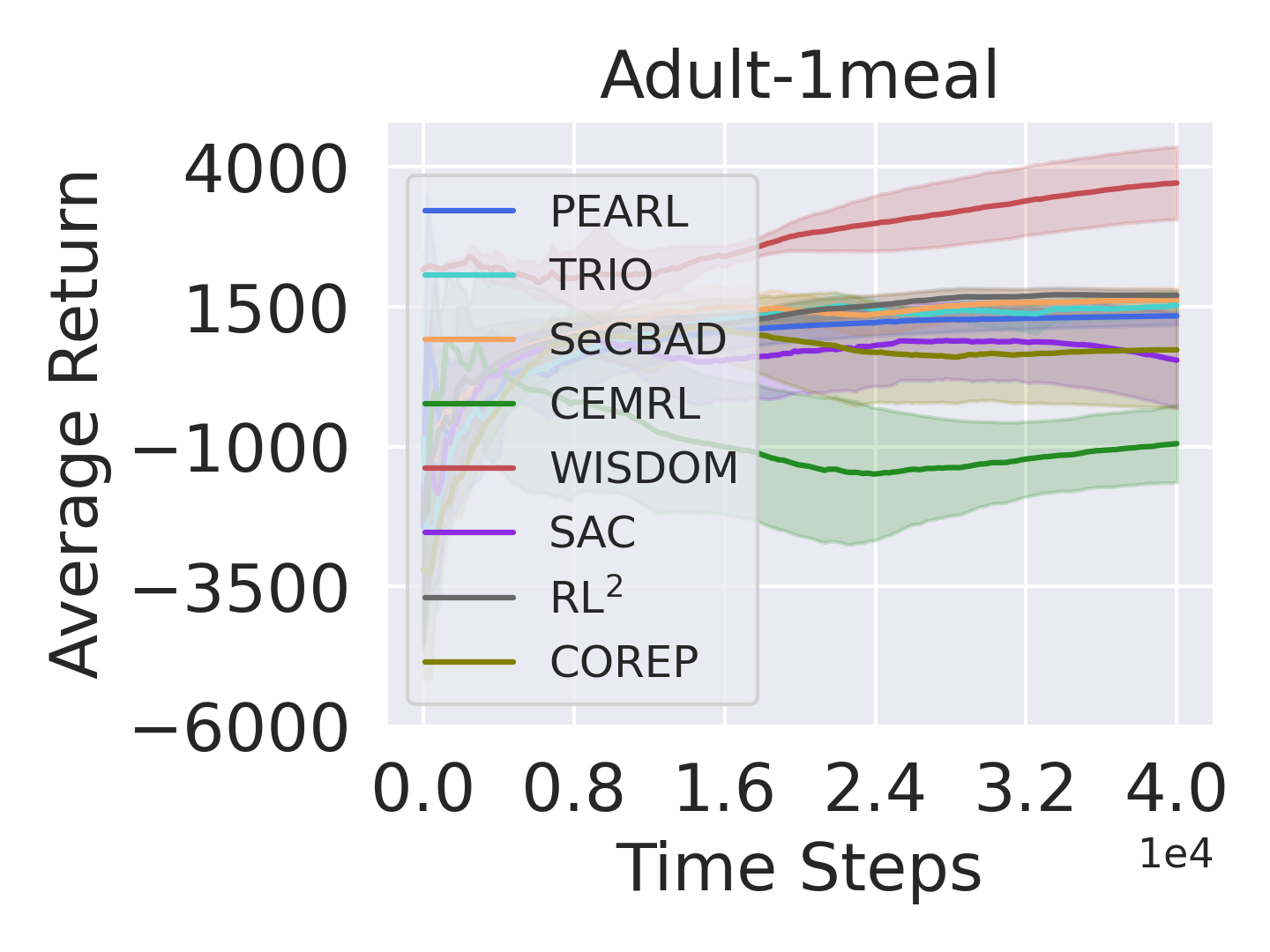}
}
\subfigure{
\includegraphics[width=0.23\textwidth]{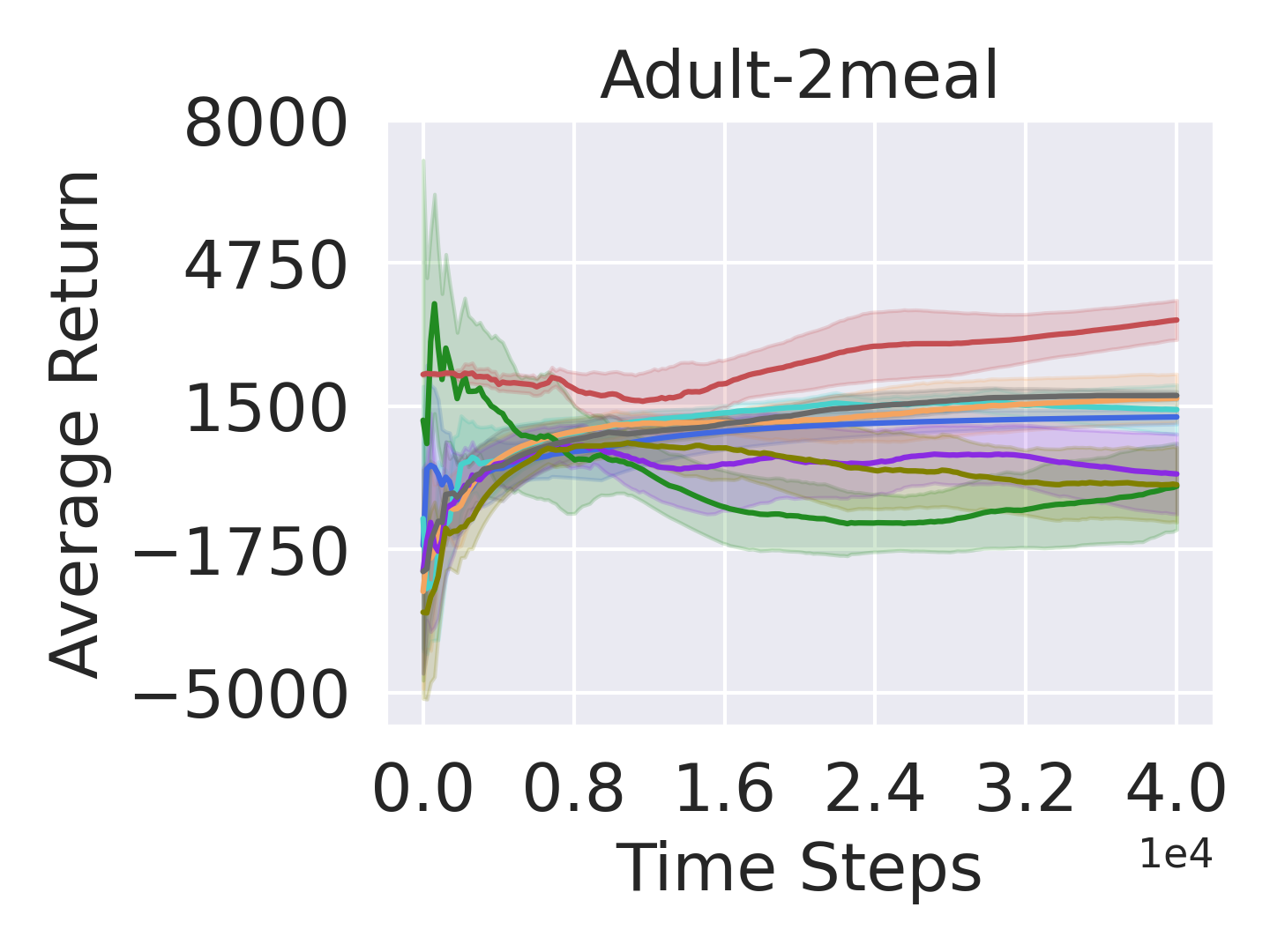}
}
\vskip -0.05in
\caption{Testing average return on glucose control environments over 6 random seeds.}
\label{fig:medicine}
\vskip -0.15in
\end{wrapfigure}
\textbf{Type-1 Diabete results.} We further examined the adaptability of all models in a more realistic environment that investigates the relationship between the control of blood glucose levels and insulin injections in diabetic patients. As observed in Fig.~\ref{fig:medicine}, \modelname demonstrates notably more efficient adaptability, indicating that wavelet task representations can effectively capture the trend of changing blood glucose levels and aid in selecting appropriate insulin dosages to maintain normal levels. Moreover, SeCBAD and CEMRL struggle to adapt to environments with changing food intake, such as environments where both lunch and dinner (2 meal) quantities vary over time. This difficulty may stem from the limited distinctiveness of the trajectory data, making it challenging to rely on rewards to determine the evolution moment or to cluster the trajectories into limited categories.

\begin{wrapfigure}{r}{0.5\textwidth}
\centering
\setlength{\abovecaptionskip}{0cm}
\vskip -0.25in
\subfigure{
\includegraphics[width=0.23\textwidth]{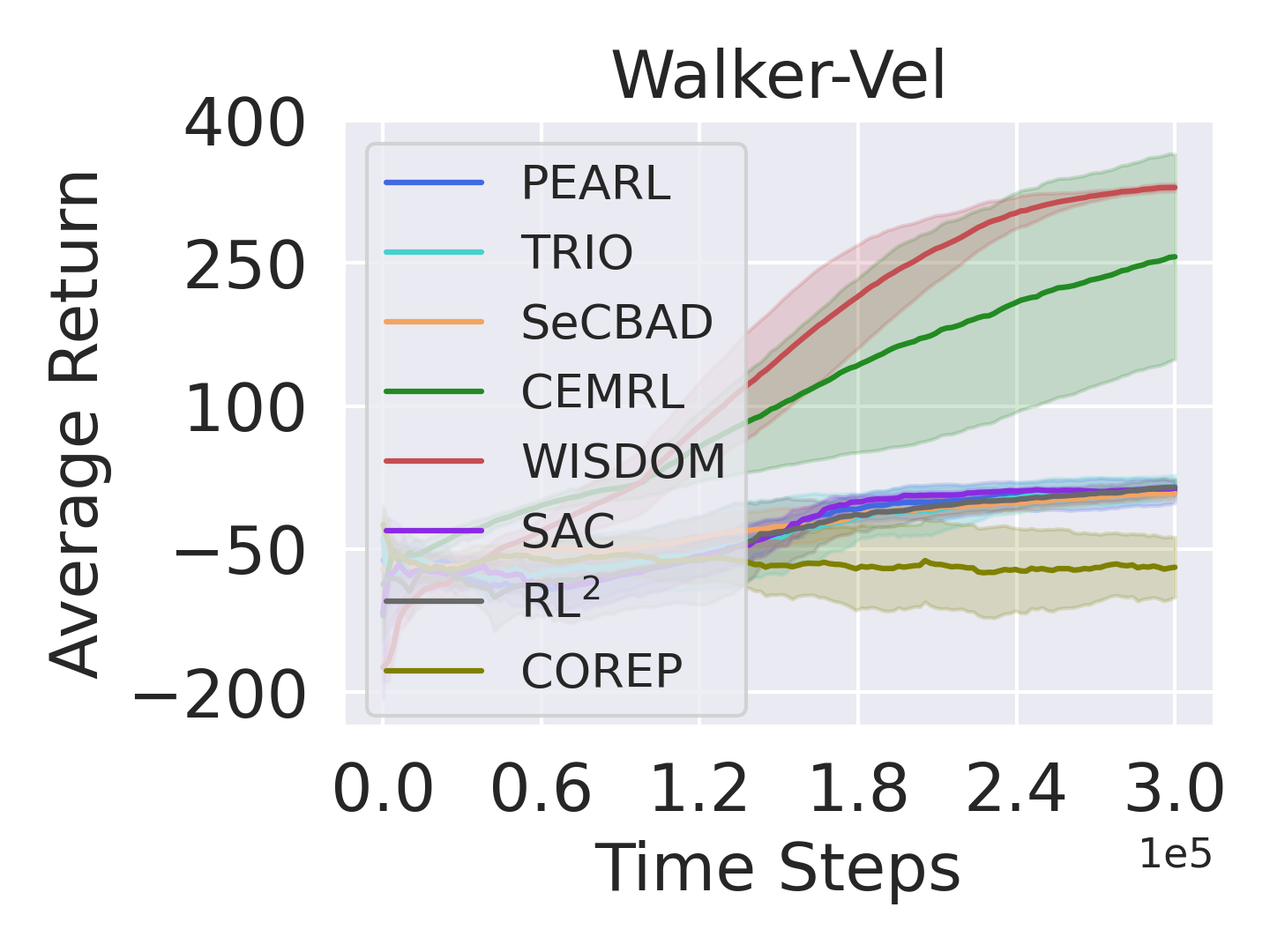}
}
\subfigure{
\includegraphics[width=0.23\textwidth]{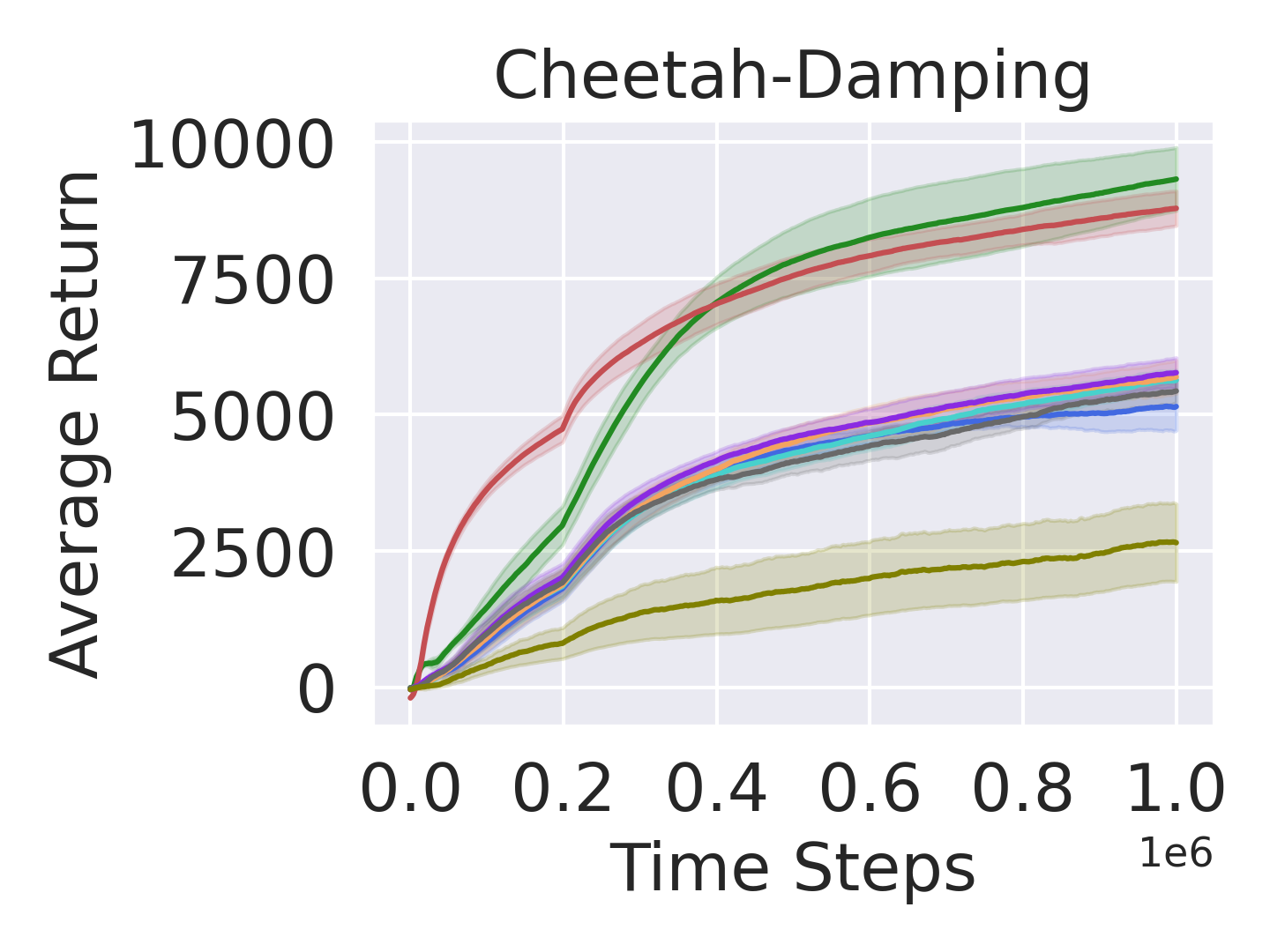}
}
\vskip -0.05in
\caption{Testing return on MuJoCo over 6 seeds.}
\label{fig:mujoco}
\vskip -0.15in
\end{wrapfigure}
\textbf{MuJoCo results.} Following CEMRL, the non-stationarity in MuJoCo is simulated by dynamically adjusting parameters, such as target velocity. However, these adjustments are relatively small, leading to a narrower distribution of non-stationary tasks. Consequently, these tasks tend to be less challenging compared to those in Meta-World. Besides, the lower state dimension of MuJoCo results in relatively simple relationships between the graph nodes, which explains why COREP's advantage over other baselines is not as significant as in Meta-World. In Fig.~\ref{fig:mujoco}, CEMRL demonstrates enhanced capability in clustering the task distribution, which facilitates precise classification of data in the replay buffer and effective extraction of task-specific representations. Since both TRIO and SeCBAD explicitly model the task evolution process, and TRIO additionally incorporates a recurrent encoder to capture and leverage history information, they converge faster than PEARL and RL$^2$. Significantly, \modelname shows consistently rapid adaptability and enhanced final performance.

\begin{figure*}[htbp]
\centering
\vskip -0.2in
\subfigure[Ablation study of WISDOM.]{
\includegraphics[width=0.23\textwidth]{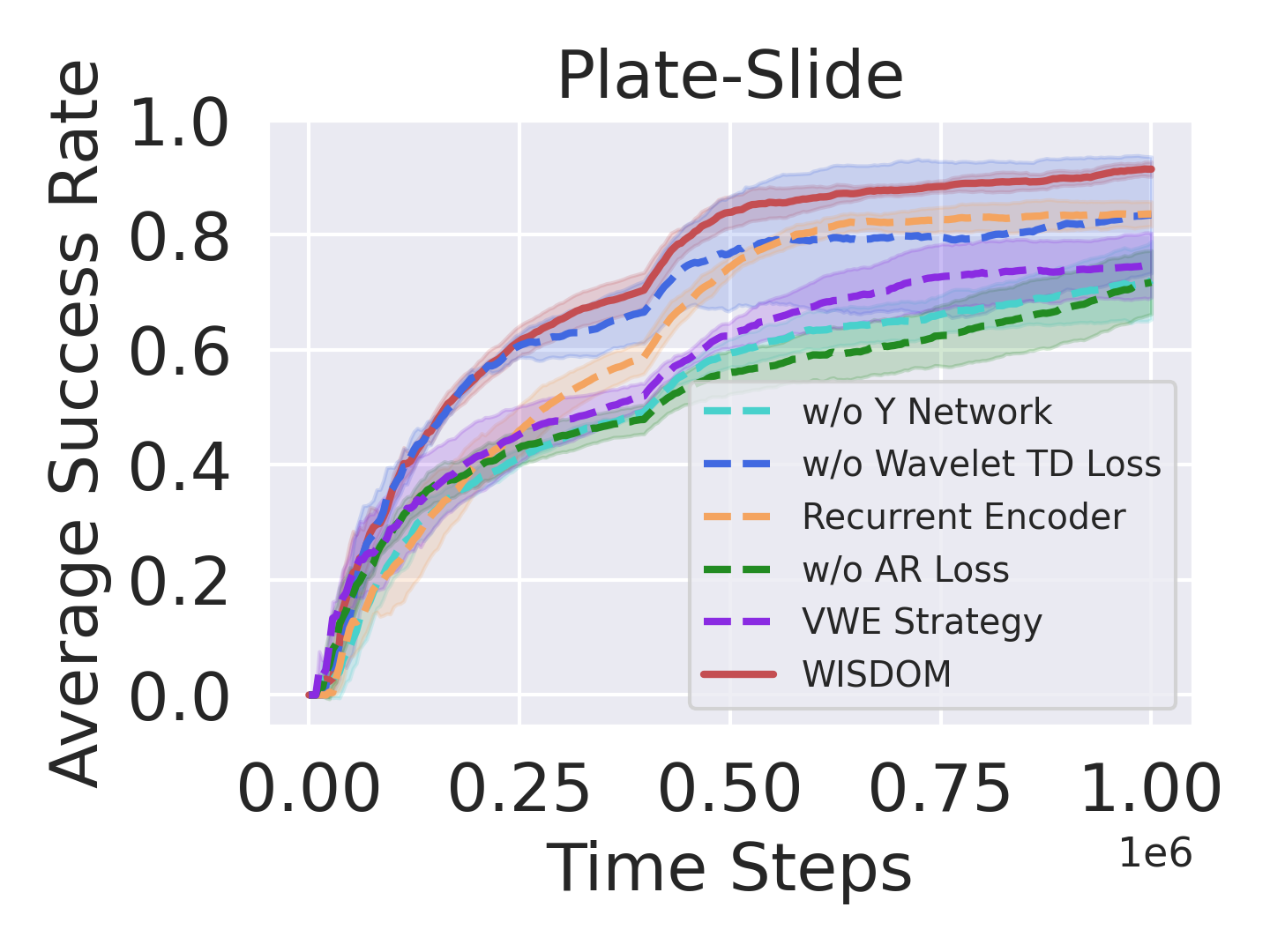}
\includegraphics[width=0.23\textwidth]{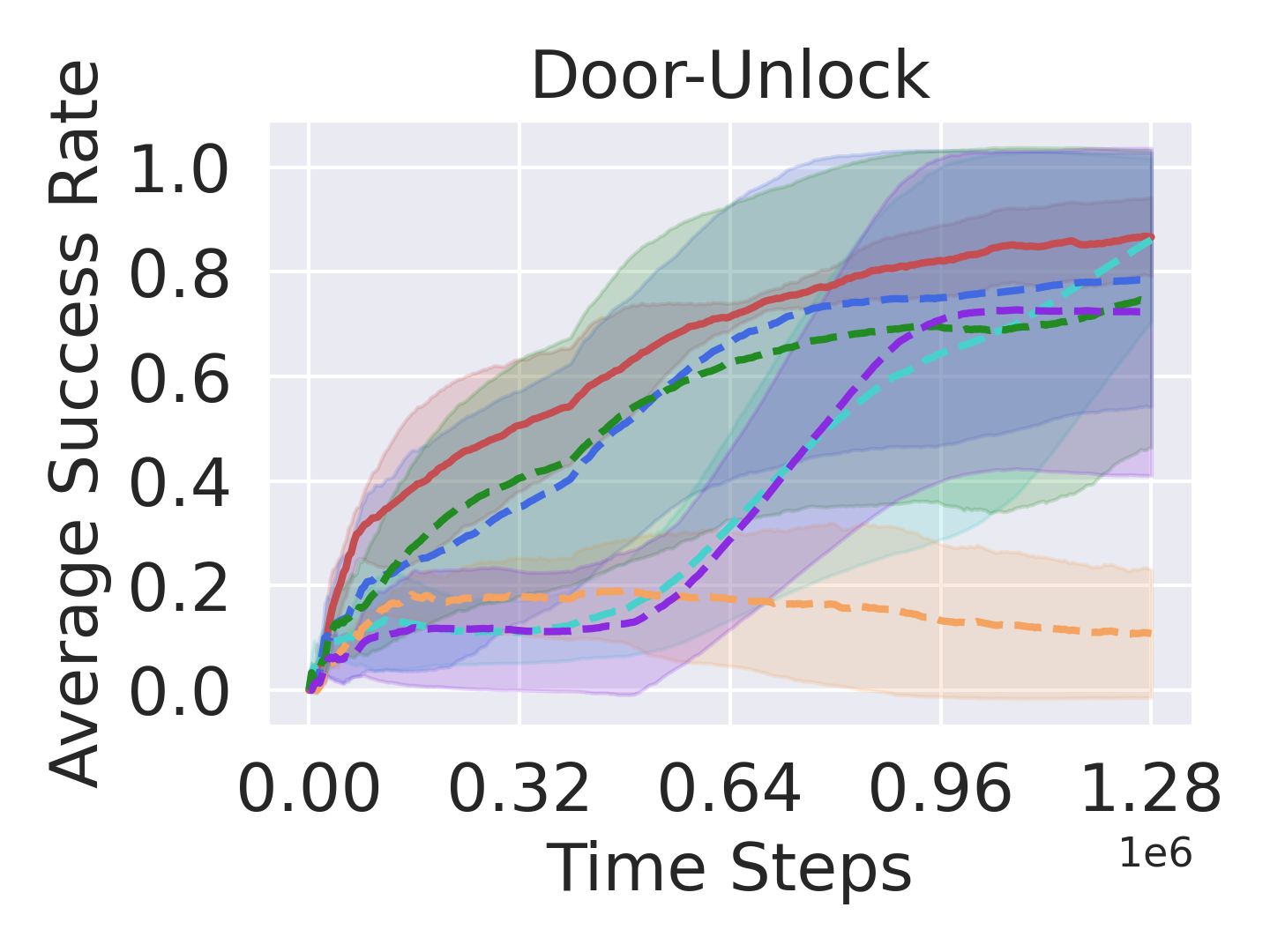}
\label{fig:ablation}
}
\subfigure[Impact of N-S degrees.]{
\includegraphics[width=0.23\textwidth]{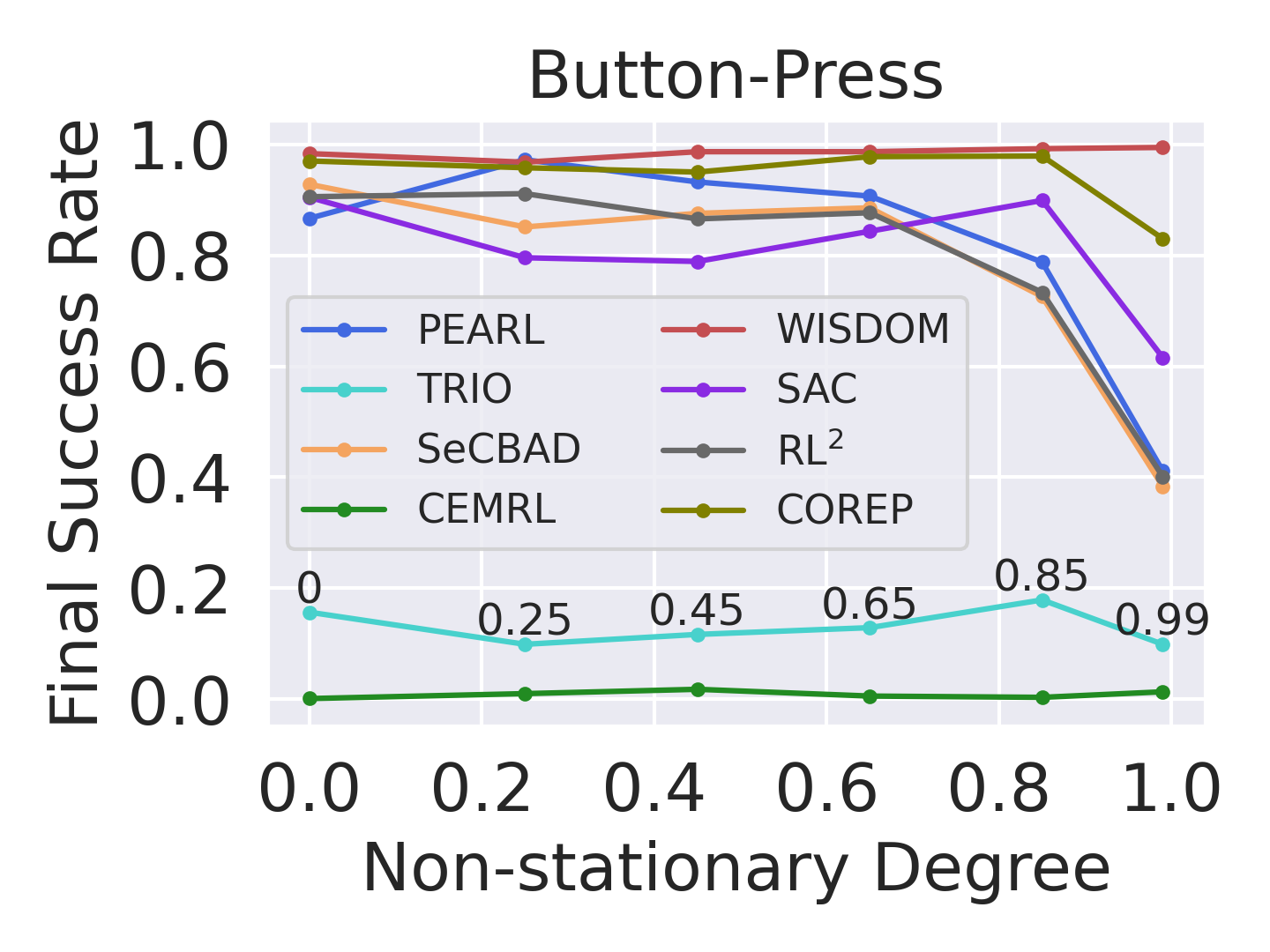}
\label{fig:degree}
}
\subfigure[Impact of different RL.]{
\includegraphics[width=0.23\textwidth]{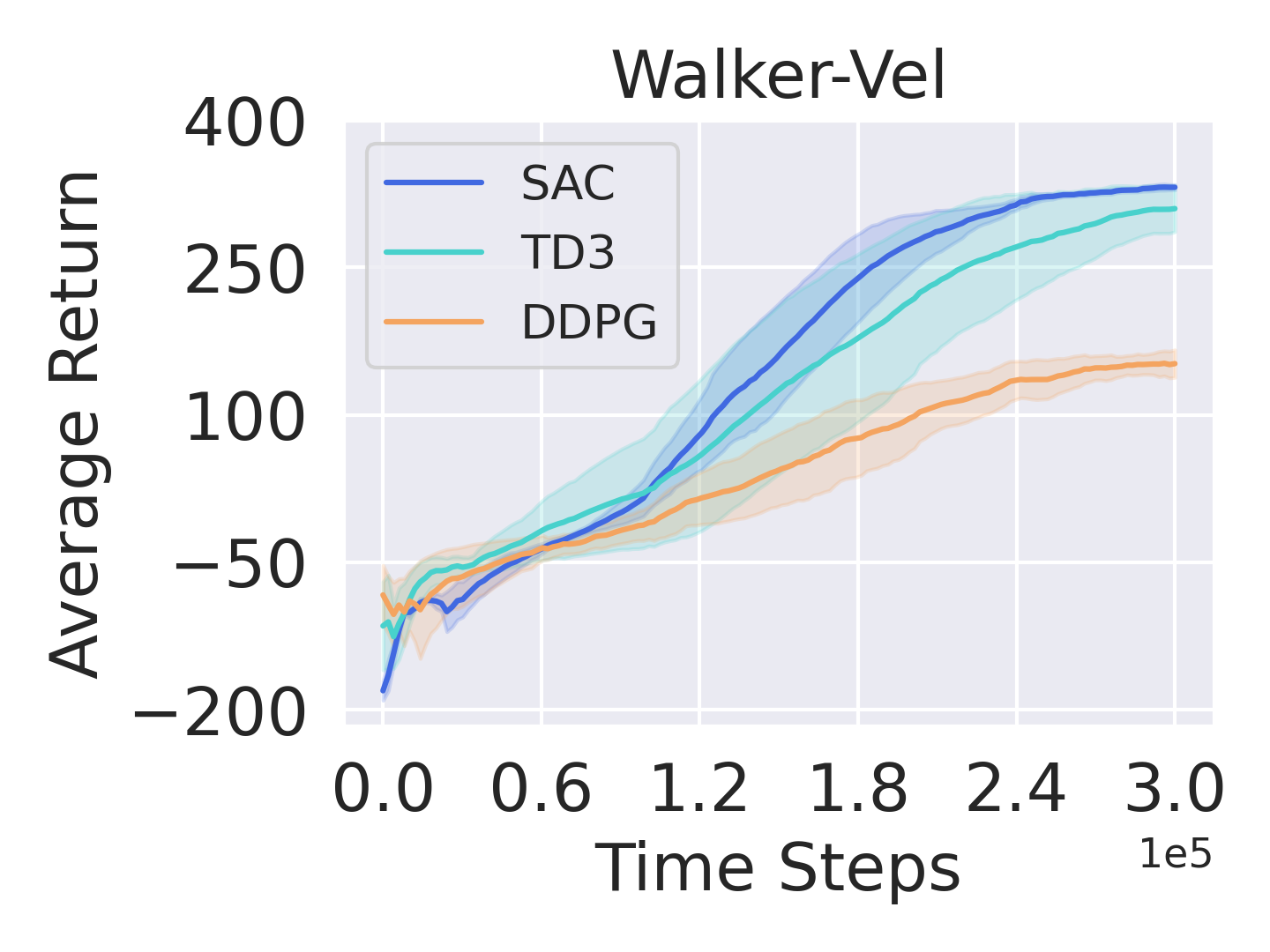}
\label{fig:backbone}
}
\vskip -0.1in
\caption{Ablation study and evaluation of different non-stationary (N-S) degrees and RL backbones.}
\vskip -0.1in
\end{figure*}
\textbf{Ablation studies.} We conduct ablation studies to validate the role of the wavelet representation network ($Y$ network) and its optimization objective, and the structure of the context encoder. In Fig.~\ref{fig:ablation}, the $Y$ network provides significant gains, further indicating that the wavelet task representation reflects non-stationary trends. The AR loss accelerates convergence and improves final performance, and the wavelet TD loss stabilizes training and reduces variance. In contrast, the RNN encoder tends to forget changes over time and is susceptible to gradient vanishing, whereas the MLP encoder (WISDOM) shows greater stability. Since each $z$ in $\mathbf{z}$ is generally multidimensional, each dimension of $z$ can be viewed as a variate in the multivariate time series. In addition to performing feature concatenation at each time step when performing DWT as WISDOM, we investigate a \textit{Variable-Wise Encoding (VWE)} strategy which applies DWT to the sequences constructed from each variate independently. The \emph{VWE} strategy leads our model to converge more slowly and yields inferior final results. We attribute this to the loss of cross-variate interaction modeling caused by disrupting the dependencies among variates. 
In the time series domain, this \emph{VWE} strategy often necessitates additional structural components to re-establish inter-variate relationships.

\textbf{Adaptability analysis of different non-stationary degrees and RL backbones.} The non-stationary degrees\footnote{Measured as $(T-\overline{T_h})/T\in[0,1)$, where $T$ is the total period and $\overline{T_h}$ denotes the mean of the stochastic period of an MDP $\mathcal{M}_{\omega_h}$. Larger values of degrees indicate severer non-stationarity.} we set in Meta-World, MuJoCo, and Type-1 Diabetes are 0.99, 0.97, and 0.7, respectively. In Fig.~\ref{fig:degree}, most models' final performance shows a downward trend as non-stationarity increases. Contrastively, our \modelname demonstrates remarkable and consistent adaptability, highlighting the more expressive wavelet task representation and robust policy.
In Fig.~\ref{fig:backbone}, \modelname can quickly adapt and converge utilizing various RL algorithms. Due to limited exploration, DDPG often converges to a local optimum. For fair comparisons, all models employ SAC as the backbone. 

\begin{wrapfigure}{r}{0.25\textwidth}
\centering
\setlength{\abovecaptionskip}{0cm}
\vskip -0.25in
\subfigure{
\includegraphics[width=0.23\textwidth]{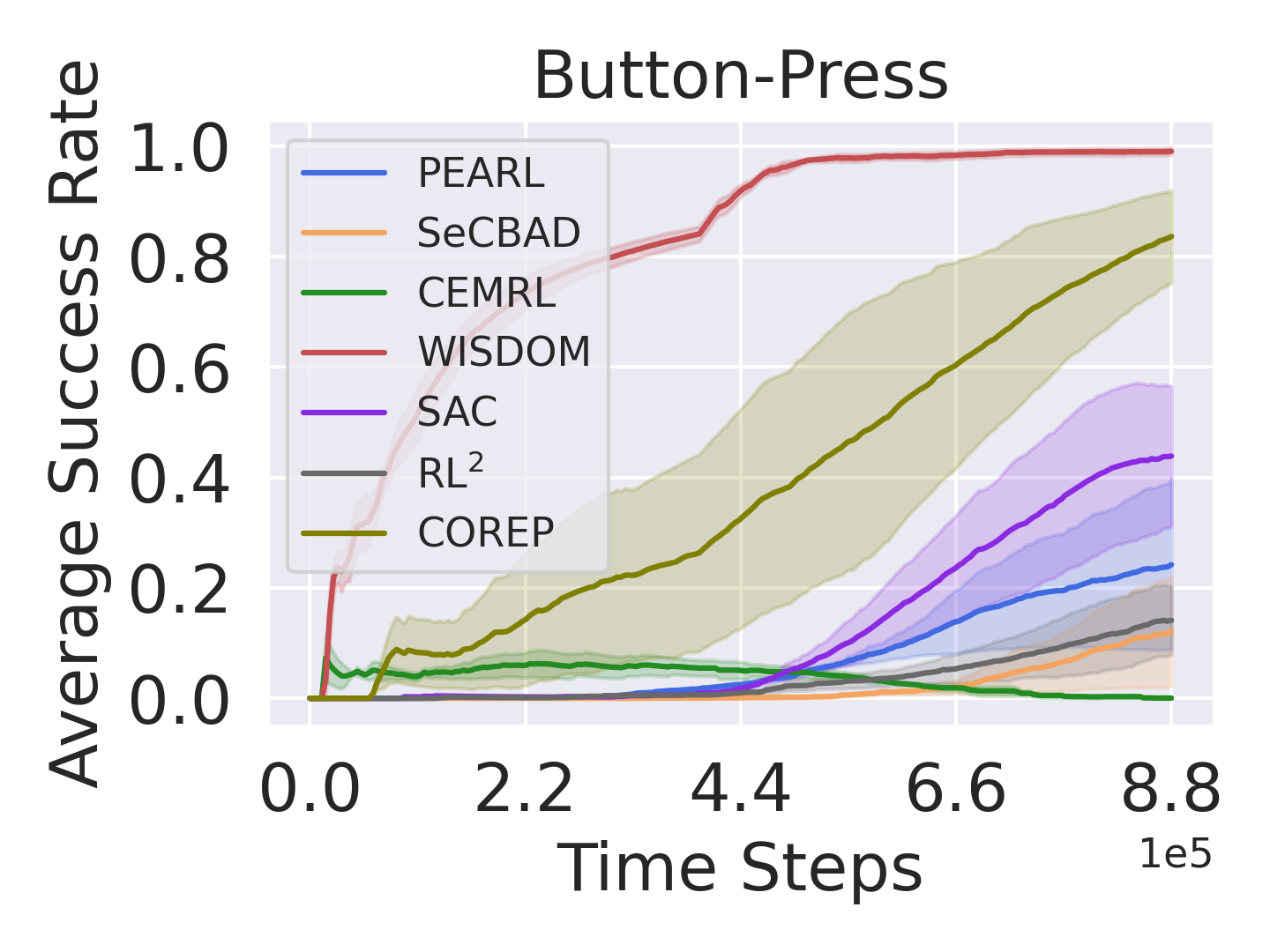}
}
\vskip -0.05in
\caption{Evaluation of robustness to noise.}
\label{fig:snr}
\vskip -0.15in
\end{wrapfigure}
\textbf{Robustness to noise for \modelname and baseline methods.} We conducted experiments by injecting Gaussian noise $N(0,1)$ into states to evaluate robustness. As illustrated in Fig.~\ref{fig:snr}, all baselines exhibit slower convergence and reduced performance under noisy conditions. In contrast, our \modelname maintains the highest success rate and rapid convergence. We credit this to the wavelet-based representation learning process, which not only suppresses noise but also retains fast-varying and task-relevant signals more effectively, thereby improving the signal-to-noise ratio. 

\begin{wrapfigure}{r}{0.5\textwidth}
\centering
\setlength{\abovecaptionskip}{0cm}
\vskip -0.2in
\subfigure{
\includegraphics[width=0.5\textwidth]{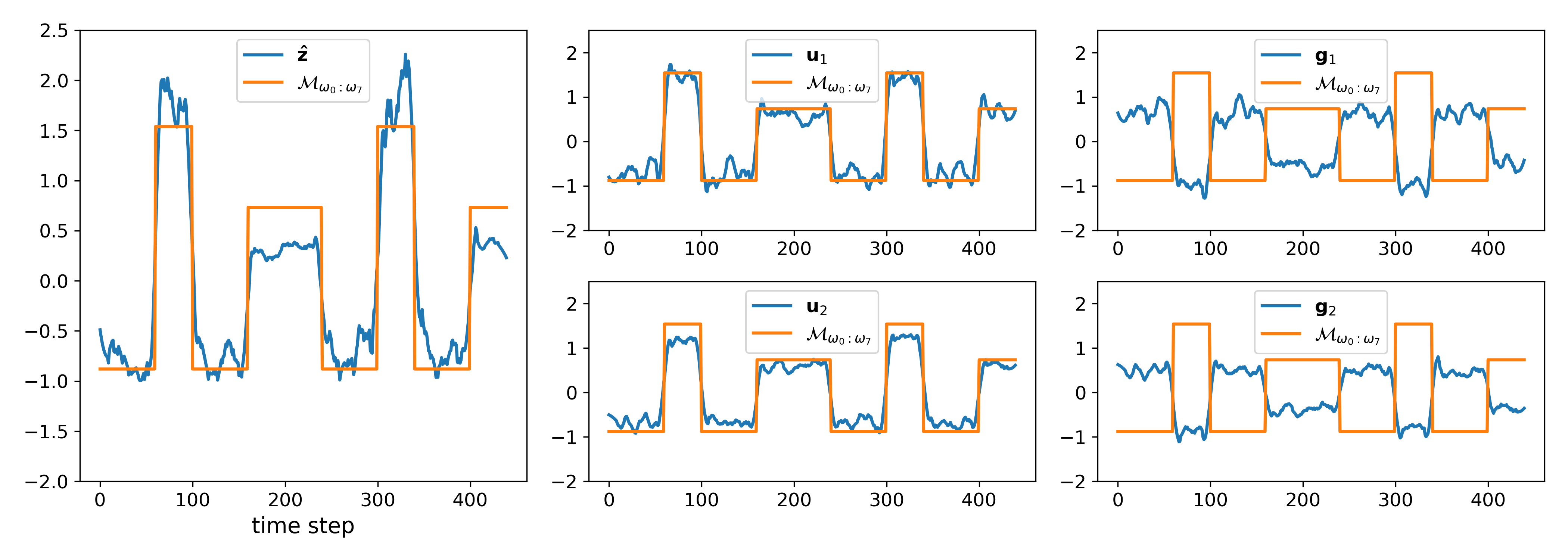}
}
\vskip -0.05in
\caption{The case study illustrates that the wavelet task representation $\hat{z}$ predicted by the $Y_\phi$ network accurately reflect the true changes of task $\mathcal{M}_\omega$.}
\label{fig:case}
\vskip -0.15in
\end{wrapfigure}

\textbf{Case study.} We visualize the changes of representations tracked by the $Y_\phi$ network to validate its ability to capture intrinsic non-stationary changes. In Fig.~\ref{fig:case}, the velocity of the agent in Inverted Double Pendulum undergoes 7 changes over 440 consecutive time steps, generating a non-stationary sequence $\mathcal{M}_{\omega_{0:7}}$ with varying state transition dynamics. The approximation coefficients $\mathbf{u}$ capture the overall trend of task evolution, while the detail coefficients $\mathbf{g}$ capture slightly sharper variations. Due to the orthogonality of the initialized Haar filters, the changes captured by $\mathbf{g}$ are reversed~\cite{pattanaik1995haar}. By combining $\mathbf{g}$ and $\mathbf{u_2}$, $Y_{\phi}$ ultimately produces the predicted $\mathbf{\hat{z}}$, which closely aligns with the true trend of task evolution.

\section{Conclusion, Limitation and Future Work}
\label{Conclusion}
We introduced WISDOM, a novel approach to address the challenge of non-stationarity in RL by leveraging a learnable wavelet representation network to capture the trends of task evolution in the wavelet domain, facilitating flexible adaptation to complex non-stationary tasks with stochastic evolving periods. \modelname captures more intrinsic evolving features of non-stationary task changes, learns to predict the evolving trend, and ensures efficient and improved policy learning. Experimental results on Meta-World, MuJoCo, and Type-1 Diabetes benchmarks demonstrate its superb adaptability and performance, highlighting its robustness and efficiency in handling non-stationary tasks.

Although \modelname is effective and easy to implement, it presents certain limitations. The compactness of task representations may be influenced by decomposition levels in the wavelet representation network, necessitating tuning appropriate decomposition levels based on specific environments. Future directions include: 1) Developing adaptive mechanisms for dynamically adjusting decomposition levels; and 2) Exploring effective selection of detail coefficients for rapid adaptation.

\bibliographystyle{ACM-Reference-Format}
\bibliography{references}


\begin{thebibliography}{49}


\ifx \showCODEN    \undefined \def \showCODEN     #1{\unskip}     \fi
\ifx \showDOI      \undefined \def \showDOI       #1{#1}\fi
\ifx \showISBNx    \undefined \def \showISBNx     #1{\unskip}     \fi
\ifx \showISBNxiii \undefined \def \showISBNxiii  #1{\unskip}     \fi
\ifx \showISSN     \undefined \def \showISSN      #1{\unskip}     \fi
\ifx \showLCCN     \undefined \def \showLCCN      #1{\unskip}     \fi
\ifx \shownote     \undefined \def \shownote      #1{#1}          \fi
\ifx \showarticletitle \undefined \def \showarticletitle #1{#1}   \fi
\ifx \showURL      \undefined \def \showURL       {\relax}        \fi
\providecommand\bibfield[2]{#2}
\providecommand\bibinfo[2]{#2}
\providecommand\natexlab[1]{#1}
\providecommand\showeprint[2][]{arXiv:#2}

\bibitem[Achiam et~al\mbox{.}(2017)]%
        {achiam2017constrained}
\bibfield{author}{\bibinfo{person}{Joshua Achiam}, \bibinfo{person}{David
  Held}, \bibinfo{person}{Aviv Tamar}, {and} \bibinfo{person}{Pieter Abbeel}.}
  \bibinfo{year}{2017}\natexlab{}.
\newblock \showarticletitle{Constrained policy optimization}. In
  \bibinfo{booktitle}{\emph{International conference on machine learning}}.
  PMLR, \bibinfo{pages}{22--31}.
\newblock


\bibitem[Al-Shedivat et~al\mbox{.}(2018)]%
        {alshedivat2018continuous}
\bibfield{author}{\bibinfo{person}{Maruan Al-Shedivat}, \bibinfo{person}{Trapit
  Bansal}, \bibinfo{person}{Yura Burda}, \bibinfo{person}{Ilya Sutskever},
  \bibinfo{person}{Igor Mordatch}, {and} \bibinfo{person}{Pieter Abbeel}.}
  \bibinfo{year}{2018}\natexlab{}.
\newblock \showarticletitle{Continuous Adaptation via Meta-Learning in
  Nonstationary and Competitive Environments}. In
  \bibinfo{booktitle}{\emph{International Conference on Learning
  Representations (ICLR)}}.
\newblock


\bibitem[Barreto et~al\mbox{.}(2017)]%
        {barreto2017successor}
\bibfield{author}{\bibinfo{person}{Andr{\'e} Barreto}, \bibinfo{person}{Will
  Dabney}, \bibinfo{person}{R{\'e}mi Munos}, \bibinfo{person}{Jonathan~J Hunt},
  \bibinfo{person}{Tom Schaul}, \bibinfo{person}{Hado~P van Hasselt}, {and}
  \bibinfo{person}{David Silver}.} \bibinfo{year}{2017}\natexlab{}.
\newblock \showarticletitle{Successor features for transfer in reinforcement
  learning}.
\newblock \bibinfo{journal}{\emph{Advances in neural information processing
  systems}}  \bibinfo{volume}{30} (\bibinfo{year}{2017}).
\newblock


\bibitem[Basu et~al\mbox{.}(2023)]%
        {Basu2023OnTC}
\bibfield{author}{\bibinfo{person}{Sumana Basu}, \bibinfo{person}{Mark~A
  Legault}, \bibinfo{person}{Adriana Romero-Soriano}, {and}
  \bibinfo{person}{Doina Precup}.} \bibinfo{year}{2023}\natexlab{}.
\newblock \showarticletitle{On the Challenges of using Reinforcement Learning
  in Precision Drug Dosing: Delay and Prolongedness of Action Effects}. In
  \bibinfo{booktitle}{\emph{AAAI Conference on Artificial Intelligence}}.
\newblock
\urldef\tempurl%
\url{https://api.semanticscholar.org/CorpusID:255372521}
\showURL{%
\tempurl}


\bibitem[Bing et~al\mbox{.}(2023a)]%
        {bing2023meta2}
\bibfield{author}{\bibinfo{person}{Zhenshan Bing}, \bibinfo{person}{Lukas
  Knak}, \bibinfo{person}{Long Cheng}, \bibinfo{person}{Fabrice~O Morin},
  \bibinfo{person}{Kai Huang}, {and} \bibinfo{person}{Alois Knoll}.}
  \bibinfo{year}{2023}\natexlab{a}.
\newblock \showarticletitle{Meta-reinforcement learning in nonstationary and
  nonparametric environments}.
\newblock \bibinfo{journal}{\emph{IEEE Transactions on Neural Networks and
  Learning Systems}} (\bibinfo{year}{2023}).
\newblock


\bibitem[Bing et~al\mbox{.}(2023b)]%
        {bing2023meta}
\bibfield{author}{\bibinfo{person}{Zhenshan Bing}, \bibinfo{person}{David
  Lerch}, \bibinfo{person}{Kai Huang}, {and} \bibinfo{person}{Alois Knoll}.}
  \bibinfo{year}{2023}\natexlab{b}.
\newblock \showarticletitle{Meta-Reinforcement Learning in Non-Stationary and
  Dynamic Environments}.
\newblock \bibinfo{journal}{\emph{IEEE Transactions on Pattern Analysis \&
  Machine Intelligence}} \bibinfo{volume}{45}, \bibinfo{number}{03}
  (\bibinfo{year}{2023}), \bibinfo{pages}{3476--3491}.
\newblock


\bibitem[Bol{\'o}s et~al\mbox{.}(2020)]%
        {bolos2020new}
\bibfield{author}{\bibinfo{person}{Vicente~J Bol{\'o}s},
  \bibinfo{person}{Rafael Ben{\'\i}tez}, {and} \bibinfo{person}{Rom{\'a}n
  Ferrer}.} \bibinfo{year}{2020}\natexlab{}.
\newblock \showarticletitle{A new wavelet tool to quantify non-periodicity of
  non-stationary economic time series}.
\newblock \bibinfo{journal}{\emph{Mathematics}} \bibinfo{volume}{8},
  \bibinfo{number}{5} (\bibinfo{year}{2020}), \bibinfo{pages}{844}.
\newblock


\bibitem[Burrus et~al\mbox{.}(1998)]%
        {burrus1998wavelets}
\bibfield{author}{\bibinfo{person}{C~Sidney Burrus}, \bibinfo{person}{Ramesh~A
  Gopinath}, {and} \bibinfo{person}{Haitao Guo}.}
  \bibinfo{year}{1998}\natexlab{}.
\newblock \showarticletitle{Wavelets and wavelet transforms}.
\newblock \bibinfo{journal}{\emph{rice university, houston edition}}
  \bibinfo{volume}{98} (\bibinfo{year}{1998}).
\newblock


\bibitem[Chen et~al\mbox{.}(2022)]%
        {chen2022adaptive}
\bibfield{author}{\bibinfo{person}{Xiaoyu Chen}, \bibinfo{person}{Xiangming
  Zhu}, \bibinfo{person}{Yufeng Zheng}, \bibinfo{person}{Pushi Zhang},
  \bibinfo{person}{Li Zhao}, \bibinfo{person}{Wenxue Cheng},
  \bibinfo{person}{Peng Cheng}, \bibinfo{person}{Yongqiang Xiong},
  \bibinfo{person}{Tao Qin}, \bibinfo{person}{Jianyu Chen}, {et~al\mbox{.}}}
  \bibinfo{year}{2022}\natexlab{}.
\newblock \showarticletitle{An adaptive deep rl method for non-stationary
  environments with piecewise stable context}.
\newblock \bibinfo{journal}{\emph{Advances in Neural Information Processing
  Systems}}  \bibinfo{volume}{35} (\bibinfo{year}{2022}),
  \bibinfo{pages}{35449--35461}.
\newblock


\bibitem[Cooley and Tukey(1965)]%
        {cooley1965algorithm}
\bibfield{author}{\bibinfo{person}{James~W Cooley} {and}
  \bibinfo{person}{John~W Tukey}.} \bibinfo{year}{1965}\natexlab{}.
\newblock \showarticletitle{An algorithm for the machine calculation of complex
  Fourier series}.
\newblock \bibinfo{journal}{\emph{Mathematics of computation}}
  \bibinfo{volume}{19}, \bibinfo{number}{90} (\bibinfo{year}{1965}),
  \bibinfo{pages}{297--301}.
\newblock


\bibitem[Duan et~al\mbox{.}(2016)]%
        {duan2016rl}
\bibfield{author}{\bibinfo{person}{Yan Duan}, \bibinfo{person}{John Schulman},
  \bibinfo{person}{Xi Chen}, \bibinfo{person}{Peter~L Bartlett},
  \bibinfo{person}{Ilya Sutskever}, {and} \bibinfo{person}{Pieter Abbeel}.}
  \bibinfo{year}{2016}\natexlab{}.
\newblock \showarticletitle{RL$^{2}$: Fast reinforcement learning via slow
  reinforcement learning}.
\newblock \bibinfo{journal}{\emph{arXiv preprint arXiv:1611.02779}}
  (\bibinfo{year}{2016}).
\newblock


\bibitem[Gupta et~al\mbox{.}(2020)]%
        {gupta2020look}
\bibfield{author}{\bibinfo{person}{Gunshi Gupta}, \bibinfo{person}{Karmesh
  Yadav}, {and} \bibinfo{person}{Liam Paull}.} \bibinfo{year}{2020}\natexlab{}.
\newblock \showarticletitle{Look-ahead meta learning for continual learning}.
\newblock \bibinfo{journal}{\emph{Advances in Neural Information Processing
  Systems}}  \bibinfo{volume}{33} (\bibinfo{year}{2020}),
  \bibinfo{pages}{11588--11598}.
\newblock


\bibitem[Haarnoja et~al\mbox{.}(2018)]%
        {Haarnoja2018SoftAA}
\bibfield{author}{\bibinfo{person}{Tuomas Haarnoja}, \bibinfo{person}{Aurick
  Zhou}, \bibinfo{person}{Kristian Hartikainen}, \bibinfo{person}{G. Tucker},
  \bibinfo{person}{Sehoon Ha}, \bibinfo{person}{Jie Tan},
  \bibinfo{person}{Vikash Kumar}, \bibinfo{person}{Henry Zhu},
  \bibinfo{person}{Abhishek Gupta}, \bibinfo{person}{P. Abbeel}, {and}
  \bibinfo{person}{Sergey Levine}.} \bibinfo{year}{2018}\natexlab{}.
\newblock \showarticletitle{Soft Actor-Critic Algorithms and Applications}.
\newblock \bibinfo{journal}{\emph{ArXiv}}  \bibinfo{volume}{abs/1812.05905}
  (\bibinfo{year}{2018}).
\newblock
\urldef\tempurl%
\url{https://api.semanticscholar.org/CorpusID:55703664}
\showURL{%
\tempurl}


\bibitem[Kong et~al\mbox{.}(2022)]%
        {kong2022non}
\bibfield{author}{\bibinfo{person}{Fan Kong}, \bibinfo{person}{Yixin Zhang},
  {and} \bibinfo{person}{Yuanjin Zhang}.} \bibinfo{year}{2022}\natexlab{}.
\newblock \showarticletitle{Non-stationary response power spectrum
  determination of linear/non-linear systems endowed with fractional derivative
  elements via harmonic wavelet}.
\newblock \bibinfo{journal}{\emph{Mechanical Systems and Signal Processing}}
  \bibinfo{volume}{162} (\bibinfo{year}{2022}), \bibinfo{pages}{108024}.
\newblock


\bibitem[Minhao et~al\mbox{.}(2021)]%
        {minhao2021t}
\bibfield{author}{\bibinfo{person}{LIU Minhao}, \bibinfo{person}{Ailing Zeng},
  \bibinfo{person}{LAI Qiuxia}, \bibinfo{person}{Ruiyuan Gao},
  \bibinfo{person}{Min Li}, \bibinfo{person}{Jing Qin}, {and}
  \bibinfo{person}{Qiang Xu}.} \bibinfo{year}{2021}\natexlab{}.
\newblock \showarticletitle{T-wavenet: A tree-structured wavelet neural network
  for time series signal analysis}. In \bibinfo{booktitle}{\emph{International
  Conference on Learning Representations}}.
\newblock


\bibitem[Mnih et~al\mbox{.}(2013)]%
        {mnih2013playing}
\bibfield{author}{\bibinfo{person}{Volodymyr Mnih}, \bibinfo{person}{Koray
  Kavukcuoglu}, \bibinfo{person}{David Silver}, \bibinfo{person}{Alex Graves},
  \bibinfo{person}{Ioannis Antonoglou}, \bibinfo{person}{Daan Wierstra}, {and}
  \bibinfo{person}{Martin Riedmiller}.} \bibinfo{year}{2013}\natexlab{}.
\newblock \showarticletitle{Playing atari with deep reinforcement learning}.
\newblock \bibinfo{journal}{\emph{arXiv preprint arXiv:1312.5602}}
  (\bibinfo{year}{2013}).
\newblock


\bibitem[Oord(2016)]%
        {oord2016wavenet}
\bibfield{author}{\bibinfo{person}{Aaron van~den Oord}.}
  \bibinfo{year}{2016}\natexlab{}.
\newblock \showarticletitle{WaveNet: A Generative Model for Raw Audio}.
\newblock \bibinfo{journal}{\emph{arXiv preprint arXiv:1609.03499}}
  (\bibinfo{year}{2016}).
\newblock


\bibitem[Pan et~al\mbox{.}(2022)]%
        {pan2022wnet}
\bibfield{author}{\bibinfo{person}{Wenwen Pan}, \bibinfo{person}{Haonan Shi},
  \bibinfo{person}{Zhou Zhao}, \bibinfo{person}{Jieming Zhu},
  \bibinfo{person}{Xiuqiang He}, \bibinfo{person}{Zhigeng Pan},
  \bibinfo{person}{Lianli Gao}, \bibinfo{person}{Jun Yu}, \bibinfo{person}{Fei
  Wu}, {and} \bibinfo{person}{Qi Tian}.} \bibinfo{year}{2022}\natexlab{}.
\newblock \showarticletitle{Wnet: Audio-guided video object segmentation via
  wavelet-based cross-modal denoising networks}. In
  \bibinfo{booktitle}{\emph{Proceedings of the IEEE/CVF Conference on Computer
  Vision and Pattern Recognition}}. \bibinfo{pages}{1320--1331}.
\newblock


\bibitem[Pattanaik and Bouatouch(1995)]%
        {pattanaik1995haar}
\bibfield{author}{\bibinfo{person}{Sumanta~N Pattanaik} {and}
  \bibinfo{person}{Kadi Bouatouch}.} \bibinfo{year}{1995}\natexlab{}.
\newblock \showarticletitle{Haar wavelet: A solution to global illumination
  with general surface properties}. In \bibinfo{booktitle}{\emph{Photorealistic
  Rendering Techniques}}. Springer, \bibinfo{pages}{281--294}.
\newblock


\bibitem[Poiani et~al\mbox{.}(2021)]%
        {Poiani2021MetaReinforcementLB}
\bibfield{author}{\bibinfo{person}{Riccardo Poiani}, \bibinfo{person}{Andrea
  Tirinzoni}, {and} \bibinfo{person}{Marcello Restelli}.}
  \bibinfo{year}{2021}\natexlab{}.
\newblock \showarticletitle{Meta-Reinforcement Learning by Tracking Task
  Non-stationarity}. In \bibinfo{booktitle}{\emph{International Joint
  Conference on Artificial Intelligence}}.
\newblock
\urldef\tempurl%
\url{https://api.semanticscholar.org/CorpusID:234777726}
\showURL{%
\tempurl}


\bibitem[Polikar et~al\mbox{.}(1996)]%
        {polikar1996wavelet}
\bibfield{author}{\bibinfo{person}{Robi Polikar} {et~al\mbox{.}}}
  \bibinfo{year}{1996}\natexlab{}.
\newblock \bibinfo{title}{The wavelet tutorial}.
\newblock
\newblock


\bibitem[Qian et~al\mbox{.}(2024)]%
        {qian2024efficient}
\bibfield{author}{\bibinfo{person}{Yu-Yang Qian}, \bibinfo{person}{Peng Zhao},
  \bibinfo{person}{Yu-Jie Zhang}, \bibinfo{person}{Masashi Sugiyama}, {and}
  \bibinfo{person}{Zhi-Hua Zhou}.} \bibinfo{year}{2024}\natexlab{}.
\newblock \showarticletitle{Efficient non-stationary online learning by
  wavelets with applications to online distribution shift adaptation}. In
  \bibinfo{booktitle}{\emph{Forty-first International Conference on Machine
  Learning}}.
\newblock


\bibitem[Rakelly et~al\mbox{.}(2019)]%
        {rakelly2019efficient}
\bibfield{author}{\bibinfo{person}{Kate Rakelly}, \bibinfo{person}{Aurick
  Zhou}, \bibinfo{person}{Chelsea Finn}, \bibinfo{person}{Sergey Levine}, {and}
  \bibinfo{person}{Deirdre Quillen}.} \bibinfo{year}{2019}\natexlab{}.
\newblock \showarticletitle{Efficient off-policy meta-reinforcement learning
  via probabilistic context variables}. In
  \bibinfo{booktitle}{\emph{International conference on machine learning}}.
  PMLR, \bibinfo{pages}{5331--5340}.
\newblock


\bibitem[Ren et~al\mbox{.}(2022)]%
        {ren2022reinforcement}
\bibfield{author}{\bibinfo{person}{Hang Ren}, \bibinfo{person}{Aivar Sootla},
  \bibinfo{person}{Taher Jafferjee}, \bibinfo{person}{Junxiao Shen},
  \bibinfo{person}{Jun Wang}, {and} \bibinfo{person}{Haitham Bou-Ammar}.}
  \bibinfo{year}{2022}\natexlab{}.
\newblock \showarticletitle{Reinforcement learning in presence of discrete
  markovian context evolution}.
\newblock \bibinfo{journal}{\emph{arXiv preprint arXiv:2202.06557}}
  (\bibinfo{year}{2022}).
\newblock


\bibitem[Rescorla(1972)]%
        {rescorla1972theory}
\bibfield{author}{\bibinfo{person}{Robert~A Rescorla}.}
  \bibinfo{year}{1972}\natexlab{}.
\newblock \showarticletitle{A theory of Pavlovian conditioning: Variations in
  the effectiveness of reinforcement and non-reinforcement}.
\newblock \bibinfo{journal}{\emph{Classical conditioning, Current research and
  theory}}  \bibinfo{volume}{2} (\bibinfo{year}{1972}),
  \bibinfo{pages}{64--69}.
\newblock


\bibitem[Riemer et~al\mbox{.}(2019)]%
        {MER}
\bibfield{author}{\bibinfo{person}{Matthew Riemer}, \bibinfo{person}{Ignacio
  Cases}, \bibinfo{person}{Robert Ajemian}, \bibinfo{person}{Miao Liu},
  \bibinfo{person}{Irina Rish}, \bibinfo{person}{Yuhai Tu}, {and}
  \bibinfo{person}{Gerald Tesauro}.} \bibinfo{year}{2019}\natexlab{}.
\newblock \showarticletitle{Learning to Learn without Forgetting by Maximizing
  Transfer and Minimizing Interference}. In \bibinfo{booktitle}{\emph{In
  International Conference on Learning Representations (ICLR)}}.
\newblock


\bibitem[Seeger(2004)]%
        {seeger2004gaussian}
\bibfield{author}{\bibinfo{person}{Matthias Seeger}.}
  \bibinfo{year}{2004}\natexlab{}.
\newblock \showarticletitle{Gaussian processes for machine learning}.
\newblock \bibinfo{journal}{\emph{International journal of neural systems}}
  \bibinfo{volume}{14}, \bibinfo{number}{02} (\bibinfo{year}{2004}),
  \bibinfo{pages}{69--106}.
\newblock


\bibitem[Shannon(1949)]%
        {shannon1949communication}
\bibfield{author}{\bibinfo{person}{Claude~E Shannon}.}
  \bibinfo{year}{1949}\natexlab{}.
\newblock \showarticletitle{Communication in the presence of noise}.
\newblock \bibinfo{journal}{\emph{Proceedings of the IRE}}
  \bibinfo{volume}{37}, \bibinfo{number}{1} (\bibinfo{year}{1949}),
  \bibinfo{pages}{10--21}.
\newblock


\bibitem[Shensa(1992)]%
        {Shensa1992TheDW}
\bibfield{author}{\bibinfo{person}{Mark~J. Shensa}.}
  \bibinfo{year}{1992}\natexlab{}.
\newblock \showarticletitle{The discrete wavelet transform: wedding the a trous
  and Mallat algorithms}.
\newblock \bibinfo{journal}{\emph{IEEE Trans. Signal Process.}}
  \bibinfo{volume}{40} (\bibinfo{year}{1992}), \bibinfo{pages}{2464--2482}.
\newblock
\urldef\tempurl%
\url{https://api.semanticscholar.org/CorpusID:9791192}
\showURL{%
\tempurl}


\bibitem[Shensa et~al\mbox{.}(1992)]%
        {shensa1992discrete}
\bibfield{author}{\bibinfo{person}{Mark~J Shensa} {et~al\mbox{.}}}
  \bibinfo{year}{1992}\natexlab{}.
\newblock \showarticletitle{The discrete wavelet transform: wedding the a trous
  and Mallat algorithms}.
\newblock \bibinfo{journal}{\emph{IEEE Transactions on signal processing}}
  \bibinfo{volume}{40}, \bibinfo{number}{10} (\bibinfo{year}{1992}),
  \bibinfo{pages}{2464--2482}.
\newblock


\bibitem[Shi et~al\mbox{.}(2023)]%
        {shi2023sequence}
\bibfield{author}{\bibinfo{person}{Jiaxin Shi}, \bibinfo{person}{Ke~Alexander
  Wang}, {and} \bibinfo{person}{Emily Fox}.} \bibinfo{year}{2023}\natexlab{}.
\newblock \showarticletitle{Sequence modeling with multiresolution
  convolutional memory}. In \bibinfo{booktitle}{\emph{International Conference
  on Machine Learning}}. PMLR, \bibinfo{pages}{31312--31327}.
\newblock


\bibitem[Sodhani et~al\mbox{.}(2022)]%
        {sodhani2022block}
\bibfield{author}{\bibinfo{person}{Shagun Sodhani}, \bibinfo{person}{Franziska
  Meier}, \bibinfo{person}{Joelle Pineau}, {and} \bibinfo{person}{Amy Zhang}.}
  \bibinfo{year}{2022}\natexlab{}.
\newblock \showarticletitle{Block contextual mdps for continual learning}. In
  \bibinfo{booktitle}{\emph{Learning for Dynamics and Control Conference}}.
  PMLR, \bibinfo{pages}{608--623}.
\newblock


\bibitem[Stock and Anderson(2022)]%
        {stock2022trainable}
\bibfield{author}{\bibinfo{person}{Jason Stock} {and} \bibinfo{person}{Chuck
  Anderson}.} \bibinfo{year}{2022}\natexlab{}.
\newblock \showarticletitle{Trainable wavelet neural network for non-stationary
  signals}.
\newblock \bibinfo{journal}{\emph{arXiv preprint arXiv:2205.03355}}
  (\bibinfo{year}{2022}).
\newblock


\bibitem[Sutton and Barto(1998)]%
        {1998Reinforcement}
\bibfield{author}{\bibinfo{person}{Richard~S. Sutton} {and}
  \bibinfo{person}{Andrew~G. Barto}.} \bibinfo{year}{1998}\natexlab{}.
\newblock \showarticletitle{Reinforcement Learning: An Introduction}.
\newblock \bibinfo{journal}{\emph{IEEE Transactions on Neural Networks}}
  \bibinfo{volume}{9}, \bibinfo{number}{5} (\bibinfo{year}{1998}),
  \bibinfo{pages}{1054}.
\newblock


\bibitem[Tennenholtz et~al\mbox{.}(2023)]%
        {tennenholtz2023reinforcement}
\bibfield{author}{\bibinfo{person}{Guy Tennenholtz}, \bibinfo{person}{Nadav
  Merlis}, \bibinfo{person}{Lior Shani}, \bibinfo{person}{Martin Mladenov},
  {and} \bibinfo{person}{Craig Boutilier}.} \bibinfo{year}{2023}\natexlab{}.
\newblock \showarticletitle{Reinforcement Learning with History Dependent
  Dynamic Contexts}. In \bibinfo{booktitle}{\emph{International Conference on
  Machine Learning}}. PMLR, \bibinfo{pages}{34011--34053}.
\newblock


\bibitem[Todorov et~al\mbox{.}(2012)]%
        {Todorov2012MuJoCoAP}
\bibfield{author}{\bibinfo{person}{Emanuel Todorov}, \bibinfo{person}{Tom
  Erez}, {and} \bibinfo{person}{Yuval Tassa}.} \bibinfo{year}{2012}\natexlab{}.
\newblock \showarticletitle{MuJoCo: A physics engine for model-based control}.
\newblock \bibinfo{journal}{\emph{2012 IEEE/RSJ International Conference on
  Intelligent Robots and Systems}} (\bibinfo{year}{2012}),
  \bibinfo{pages}{5026--5033}.
\newblock
\urldef\tempurl%
\url{https://api.semanticscholar.org/CorpusID:5230692}
\showURL{%
\tempurl}


\bibitem[Wan et~al\mbox{.}(2024)]%
        {wan2024tcdformer}
\bibfield{author}{\bibinfo{person}{Jiashan Wan}, \bibinfo{person}{Na Xia},
  \bibinfo{person}{Yutao Yin}, \bibinfo{person}{Xulei Pan},
  \bibinfo{person}{Jin Hu}, {and} \bibinfo{person}{Jun Yi}.}
  \bibinfo{year}{2024}\natexlab{}.
\newblock \showarticletitle{TCDformer: A transformer framework for
  non-stationary time series forecasting based on trend and change-point
  detection}.
\newblock \bibinfo{journal}{\emph{Neural Networks}} (\bibinfo{year}{2024}),
  \bibinfo{pages}{106196}.
\newblock


\bibitem[Wang et~al\mbox{.}(2023)]%
        {wang2023wavelet}
\bibfield{author}{\bibinfo{person}{Jingyuan Wang}, \bibinfo{person}{Chen Yang},
  \bibinfo{person}{Xiaohan Jiang}, {and} \bibinfo{person}{Junjie Wu}.}
  \bibinfo{year}{2023}\natexlab{}.
\newblock \showarticletitle{When: A wavelet-dtw hybrid attention network for
  heterogeneous time series analysis}. In \bibinfo{booktitle}{\emph{Proceedings
  of the 29th ACM SIGKDD Conference on Knowledge Discovery and Data Mining}}.
  \bibinfo{pages}{2361--2373}.
\newblock


\bibitem[Willsky(2002)]%
        {Willsky2002MultiresolutionMM}
\bibfield{author}{\bibinfo{person}{Alan~S. Willsky}.}
  \bibinfo{year}{2002}\natexlab{}.
\newblock \showarticletitle{Multiresolution Markov models for signal and image
  processing}.
\newblock \bibinfo{journal}{\emph{Proc. IEEE}}  \bibinfo{volume}{90}
  (\bibinfo{year}{2002}), \bibinfo{pages}{1396--1458}.
\newblock
\urldef\tempurl%
\url{https://api.semanticscholar.org/CorpusID:122692461}
\showURL{%
\tempurl}


\bibitem[Xie et~al\mbox{.}(2021)]%
        {xie2020deep}
\bibfield{author}{\bibinfo{person}{Annie Xie}, \bibinfo{person}{James
  Harrison}, {and} \bibinfo{person}{Chelsea Finn}.}
  \bibinfo{year}{2021}\natexlab{}.
\newblock \showarticletitle{Deep reinforcement learning amidst lifelong
  non-stationarity}.
\newblock \bibinfo{journal}{\emph{International Conference on Machine Learning
  (ICML)}} (\bibinfo{year}{2021}).
\newblock


\bibitem[Xie(2018)]%
        {xie2018simglucose}
\bibfield{author}{\bibinfo{person}{Jinyu Xie}.}
  \bibinfo{year}{2018}\natexlab{}.
\newblock \showarticletitle{Simglucose v0. 2.1}.
\newblock \bibinfo{journal}{\emph{Avaible: https://github.
  com/jxx123/simglucose. Accessed on: Jan-20-2020}} (\bibinfo{year}{2018}).
\newblock


\bibitem[Yang et~al\mbox{.}(2024)]%
        {Yang2024WaveNetTN}
\bibfield{author}{\bibinfo{person}{Zhirui Yang}, \bibinfo{person}{Yulan Hu},
  \bibinfo{person}{Ouyang Sheng}, \bibinfo{person}{Jingyu Liu},
  \bibinfo{person}{Shuqiang Wang}, \bibinfo{person}{Xibo Ma},
  \bibinfo{person}{Wenhan Wang}, \bibinfo{person}{Hanjing Su}, {and}
  \bibinfo{person}{Yong Liu}.} \bibinfo{year}{2024}\natexlab{}.
\newblock \showarticletitle{WaveNet: Tackling Non-stationary Graph Signals via
  Graph Spectral Wavelets}. In \bibinfo{booktitle}{\emph{AAAI Conference on
  Artificial Intelligence}}.
\newblock
\urldef\tempurl%
\url{https://api.semanticscholar.org/CorpusID:268692563}
\showURL{%
\tempurl}


\bibitem[Yao et~al\mbox{.}(2022)]%
        {yao2022wave}
\bibfield{author}{\bibinfo{person}{Ting Yao}, \bibinfo{person}{Yingwei Pan},
  \bibinfo{person}{Yehao Li}, \bibinfo{person}{Chong-Wah Ngo}, {and}
  \bibinfo{person}{Tao Mei}.} \bibinfo{year}{2022}\natexlab{}.
\newblock \showarticletitle{Wave-vit: Unifying wavelet and transformers for
  visual representation learning}. In \bibinfo{booktitle}{\emph{European
  Conference on Computer Vision}}. Springer, \bibinfo{pages}{328--345}.
\newblock


\bibitem[Ye et~al\mbox{.}(2024)]%
        {ye2024state}
\bibfield{author}{\bibinfo{person}{Mingxuan Ye}, \bibinfo{person}{Yufei Kuang},
  \bibinfo{person}{Jie Wang}, \bibinfo{person}{Yang Rui},
  \bibinfo{person}{Wengang Zhou}, \bibinfo{person}{Houqiang Li}, {and}
  \bibinfo{person}{Feng Wu}.} \bibinfo{year}{2024}\natexlab{}.
\newblock \showarticletitle{State Sequences Prediction via Fourier Transform
  for Representation Learning}.
\newblock \bibinfo{journal}{\emph{Advances in Neural Information Processing
  Systems}}  \bibinfo{volume}{36} (\bibinfo{year}{2024}).
\newblock


\bibitem[Yu et~al\mbox{.}(2020)]%
        {yu2020meta}
\bibfield{author}{\bibinfo{person}{Tianhe Yu}, \bibinfo{person}{Deirdre
  Quillen}, \bibinfo{person}{Zhanpeng He}, \bibinfo{person}{Ryan Julian},
  \bibinfo{person}{Karol Hausman}, \bibinfo{person}{Chelsea Finn}, {and}
  \bibinfo{person}{Sergey Levine}.} \bibinfo{year}{2020}\natexlab{}.
\newblock \showarticletitle{Meta-world: A benchmark and evaluation for
  multi-task and meta reinforcement learning}. In
  \bibinfo{booktitle}{\emph{Conference on robot learning}}. PMLR,
  \bibinfo{pages}{1094--1100}.
\newblock


\bibitem[Yu et~al\mbox{.}(2021)]%
        {yu2021wavefill}
\bibfield{author}{\bibinfo{person}{Yingchen Yu}, \bibinfo{person}{Fangneng
  Zhan}, \bibinfo{person}{Shijian Lu}, \bibinfo{person}{Jianxiong Pan},
  \bibinfo{person}{Feiying Ma}, \bibinfo{person}{Xuansong Xie}, {and}
  \bibinfo{person}{Chunyan Miao}.} \bibinfo{year}{2021}\natexlab{}.
\newblock \showarticletitle{Wavefill: A wavelet-based generation network for
  image inpainting}. In \bibinfo{booktitle}{\emph{Proceedings of the IEEE/CVF
  international conference on computer vision}}. \bibinfo{pages}{14114--14123}.
\newblock


\bibitem[Zhang et~al\mbox{.}(2024a)]%
        {zhang2024tackling}
\bibfield{author}{\bibinfo{person}{Wanpeng Zhang}, \bibinfo{person}{Yilin Li},
  \bibinfo{person}{Boyu Yang}, {and} \bibinfo{person}{Zongqing Lu}.}
  \bibinfo{year}{2024}\natexlab{a}.
\newblock \showarticletitle{Tackling Non-Stationarity in Reinforcement Learning
  via Causal-Origin Representation}. In \bibinfo{booktitle}{\emph{Proceedings
  of the 41st International Conference on Machine Learning}},
  Vol.~\bibinfo{volume}{235}. \bibinfo{publisher}{PMLR},
  \bibinfo{pages}{59264--59288}.
\newblock


\bibitem[Zhang et~al\mbox{.}(2024b)]%
        {zhang2024speaking}
\bibfield{author}{\bibinfo{person}{Xiangyu Zhang}, \bibinfo{person}{Daijiao
  Liu}, \bibinfo{person}{Hexin Liu}, \bibinfo{person}{Qiquan Zhang},
  \bibinfo{person}{Hanyu Meng}, \bibinfo{person}{Leibny~Paola Garcia},
  \bibinfo{person}{Eng~Siong Chng}, {and} \bibinfo{person}{Lina Yao}.}
  \bibinfo{year}{2024}\natexlab{b}.
\newblock \showarticletitle{Speaking in Wavelet Domain: A Simple and Efficient
  Approach to Speed up Speech Diffusion Model}.
\newblock \bibinfo{journal}{\emph{arXiv preprint arXiv:2402.10642}}
  (\bibinfo{year}{2024}).
\newblock


\bibitem[Zhou et~al\mbox{.}(2022)]%
        {zhou2022fedformer}
\bibfield{author}{\bibinfo{person}{Tian Zhou}, \bibinfo{person}{Ziqing Ma},
  \bibinfo{person}{Qingsong Wen}, \bibinfo{person}{Xue Wang},
  \bibinfo{person}{Liang Sun}, {and} \bibinfo{person}{Rong Jin}.}
  \bibinfo{year}{2022}\natexlab{}.
\newblock \showarticletitle{Fedformer: Frequency enhanced decomposed
  transformer for long-term series forecasting}. In
  \bibinfo{booktitle}{\emph{International conference on machine learning}}.
  PMLR, \bibinfo{pages}{27268--27286}.
\newblock


\end{thebibliography}

\appendix
\setcounter{theorem}{0}
\setcounter{lemma}{0}
\section{Proof}
\label{Proof}
\subsection{Proof of the Discrete Wavelet Transform}
\label{a:dwt}
The Wavelet Transform (WT)~\cite{burrus1998wavelets} analyzes and represents the various frequency components of a signal utilizing a scalable and translatable mother wavelet function $\psi$. Given a time series $x(t)$, the continuous wavelet transform (CWT) extracts its frequency by performing the \textbf{continuous convolution}: 
\begin{equation}
\label{eq:cwt}
\mathcal{W}(d, \beta)= |\beta|^{-\frac{1}{2}}\int_{-\infty}^{\infty} x(t) \psi\left(\beta^{-1}(t-d)\right) \mathrm{d}t.
\end{equation}
The translation position $d$ allows the $\psi$ function to move along the signal $x(t)$ in the time domain, enabling analysis at different time points. The scaling factor $\beta$ controls the degree of stretching and compression of $\psi$ function, which in turn affects its resolution properties and the range of extracted frequencies. Specifically, a smaller $\beta$ corresponds to higher resolution and a smaller timescale, facilitating the capture of finer details, while a larger $\beta$ corresponds to lower resolution and a larger timescale, suitable for analyzing overall trends. CWT typically analyzes signals at a fixed resolution, either fine or coarse. To achieve multi-resolution analysis while simplifying computations, the discrete version of CWT, the Discrete Wavelet Transform (DWT)~\cite{shensa1992discrete}, is introduced to decompose the signal layer by layer at different resolutions, as formalized in Lemma \ref{a:lemma}.
\begin{lemma}
\label{a:lemma}
\textbf{DWT}. Let the initial approximation coefficient $\mathbf{u}_0$ be equal to an input discrete time series $x(n)$. By substituting a pair of \textbf{low-pass filter $y_0$} and \textbf{high-pass filter $y_1$} for the mother wavelet function $\psi(t)$, following iterative form is derived through discrete convolution operations: 
\begin{equation}
\mathbf{u}_m(n) = \sum\textstyle_{k=1}^{K} y_0(k)\mathbf{u}_{m-1}(2n-k), \mathbf{g}_m(n) = \sum\textstyle_{k=1}^{K} y_1(k)\mathbf{u}_{m-1}(2n-k),
\end{equation}
where $m$ represents the decomposition number and $k$ denotes the resolution size. The \textbf{approximation coefficients} $\mathbf{u}_m$ represent low-frequency components, whereas the \textbf{detail coefficients} $\mathbf{g}_m$ represent high-frequency components.
\end{lemma}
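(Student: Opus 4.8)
The plan is to derive the recursion in Lemma~\ref{a:lemma} from the continuous wavelet transform in Eq.~\eqref{eq:cwt} by specializing the scale and translation to a dyadic grid and then invoking multiresolution analysis (MRA). First I would set $\beta = 2^m$ and $d = 2^m n$, so that the analyzing wavelet becomes the dyadically dilated and translated family $\psi_{m,n}(t) = 2^{-m/2}\psi(2^{-m}t - n)$; this reduces the continuous two-parameter transform to the discrete coefficients we wish to characterize. To obtain a \emph{nested} hierarchy of resolutions rather than an unstructured sampling, I would introduce a companion \emph{scaling function} (father wavelet) $\phi$ whose integer translates span the approximation space $V_m$ at level $m$, with $V_m \subset V_{m-1}$ and $V_{m-1} = V_m \oplus W_m$, where $W_m$ is the detail space spanned by the $\psi_{m,n}$. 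The approximation and detail coefficients are then \emph{defined} as the projections $\mathbf{u}_m(n) = \langle x, \phi_{m,n}\rangle$ and $\mathbf{g}_m(n) = \langle x, \psi_{m,n}\rangle$, with the finest level identified with the raw input, giving the initialization $\mathbf{u}_0 = x$.

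The central step is the \emph{two-scale refinement (dilation) equations}: because $V_1 \subset V_0$ and $W_1 \subset V_0$, both $\phi$ and $\psi$ admit expansions in the finer basis $\{\phi(2t-k)\}_k$, namely $\phi(t) = \sqrt{2}\sum_k y_0(k)\,\phi(2t-k)$ and $\psi(t) = \sqrt{2}\sum_k y_1(k)\,\phi(2t-k)$. This is precisely where the low-pass filter $y_0$ and high-pass filter $y_1$ enter, as the expansion coefficients that replace the mother wavelet $\psi$ in the claim. Substituting these identities into $\langle x, \phi_{m,n}\rangle$ and $\langle x, \psi_{m,n}\rangle$ and pushing the dilation and translation through to the level-$(m-1)$ basis functions expresses each level-$m$ coefficient as a weighted combination of level-$(m-1)$ approximation coefficients. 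A relabeling of the summation index then collapses the double indexing into the single convolution-with-downsampling form $\mathbf{u}_m(n) = \sum_k y_0(k)\,\mathbf{u}_{m-1}(2n-k)$ and $\mathbf{g}_m(n) = \sum_k y_1(k)\,\mathbf{u}_{m-1}(2n-k)$, which is Mallat's pyramid recursion and the claimed iterative form; this also realizes the continuous convolution of Eq.~\eqref{eq:cwt} as a discrete convolution on the coefficient sequence.

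I expect the main obstacle to be the bookkeeping around the factor-of-two downsampling and the change of variables producing the argument $2n-k$: keeping the dilation normalizations (the $2^{-m/2}$ factors and the $\sqrt{2}$ in the refinement equations) consistent so that the $2^m$ scaling telescopes correctly across levels is the step most prone to error, and the shift from $2n+k$ to the stated $2n-k$ relies on adopting the conjugate-mirror (time-reversed analysis filter) convention. A secondary subtlety is justifying that coefficients at adjacent levels decouple cleanly, which rests on the orthonormality of the translates of $\phi$; I would assume an orthonormal MRA for the classical derivation and remark that relaxing orthogonality—as \modelname does with learnable $y_0, y_1$—only perturbs exact reconstruction, which is exactly what motivates the auxiliary auto-regressive constraint in Sec.~\ref{sec:update}.
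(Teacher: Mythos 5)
Your proposal is correct and follows essentially the same route as the paper's proof: both derive the pyramid recursion by combining a multiresolution decomposition of the signal space into nested approximation spaces plus orthogonal detail complements (the paper's $U_m$ and $G_m$, your $V_m$ and $W_m$) with the two-scale refinement relation that expresses level-$m$ basis functions through the filters $y_0, y_1$ applied to level-$(m-1)$ functions, then reads off the $\sum_k y_i(k)\,\mathbf{u}_{m-1}(2n-k)$ form as a discrete convolution with downsampling. The main difference is presentational: you use the standard textbook convention with a separate scaling function $\phi$ and explicit $\sqrt{2}$-normalized refinement equations, whereas the paper overloads $\psi$ for the scaling function and embeds the refinement relation implicitly inside its final convolution step — your bookkeeping is, if anything, the more careful of the two.
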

\begin{proof}
For the continuous time series $x(t), t\in[0,T)$, the coarsest approximation of $x(t)$ is:
\begin{equation}
\textstyle x^{(0)}(t)\triangleq \mathbf{u}_{0,0}\psi(t), \quad \psi(t)=1 (0 \leq t<T), \quad \mathbf{u}_{0,0}=\int_{0}^T x(t) dt,
\end{equation}
where $\psi(t)$ denotes the \textit{scaling function} and the coefficient $\mathbf{u}_{0,0}$ is the average value of $x(t)$ in this interval. The superscript 0 implies the initial approximation of $x(t)$. To obtain a more accurate estimation of $x(t)$, the approximation can be refined by halving the intervals into smaller parts: 
\begin{equation}
\textstyle x^{(1)}(t) \triangleq \mathbf{u}_{1,0} \psi(2t)+\mathbf{u}_{1,1}\psi(2t-1), \quad \mathbf{u}_{1,0}=\int_0^{T/2} x(t) dt, \quad \mathbf{u}_{1,1}=\int_{T/2}^T x(t) dt.
\end{equation}
Define the mother wavelet function as:  
\begin{equation}
\psi_{m, k}(t) \triangleq 2^{\frac{m}{2}} \psi\left(2^m t-k\right)
\end{equation}
By repeating this procedure, $x(t)$ can be approximated with finer precision, given as:
\begin{equation}
\label{a-eq:uni-res}
x^{(m)}(t)=\sum_{k \in \mathbb{Z}} \mathbf{u}_{m, k}\psi_{m,k}(t), \quad \mathbf{u}_{m, k}=\int_{-\infty}^{\infty} x(t) \psi_{m, k}(t) \mathrm{d}t, 
\end{equation}
where $m \in \mathbb{N}$ represents the $m$-th interval bisection, $k$ represents a series of resolutions, and $\star$ indicates the convolution operation. After the $m$-th bisection of the original time interval, the scaling functions in the subspace $U_m \triangleq \operatorname{span}\left(\left\{\psi_{m, k}\right\}_{k \in \mathbb{Z}}\right)$ are capable of capturing local structures in $x(t)$ at a timescale no longer than $T/2^m$.

Inspired by multi-resolution analysis (MRA)~\cite{Willsky2002MultiresolutionMM}, discrete wavelet transform (DWT)~\cite{shensa1992discrete,shi2023sequence} introduces the orthogonal subspace $G_m \triangleq \operatorname{span}\left(\left\{\delta_{m, k}\right\}_{n \in \mathbb{Z}}\right)$ of $U_m$, where $\delta$ indicates orthogonal scaling functions~\cite{burrus1998wavelets}. $U_{M}$ can be decomposed into the orthogonal sum of the lowest resolution subspace $U_0$ and a series of orthogonal complement $G$:
\begin{equation}
\label{a-eq:decompose}
U_M=U_{M-1} \oplus G_{M-1}=U_0 \oplus G_0 \oplus \ldots \oplus G_{M-2} \oplus G_{M-1} .
\end{equation}
Accordingly, $x(t)$ can be represented by a series of different resolutions:
\begin{equation}
\label{a-eq:cwt}
x^{(M)}(t)=\mathbf{u}_{0,0} \psi(t)+\sum_{m=0}^{M-1} \sum_{k \in \mathbb{N}} \mathbf{g}_{m, k} \delta_{m, k}(t), 
\end{equation}
where $\mathbf{u}$ and $\mathbf{g}$ are referred to as the approximation coefficient and detail coefficient, respectively. Therefore, for discrete time series $x(n)$, we can rewrite Eq.~\ref{a-eq:uni-res} in the following iterated form:
\begin{equation}
\begin{aligned}
\label{a-eq:cwt-z}
x_{m-1} &= \sum_n\mathbf{u}_{m-1}(n) \psi_{m-1}(n)\\
&= \sum_n \mathbf{u}_{m-2}(n) \psi_{m-2}(n)+\sum_n \mathbf{g}_{m-2}(n) \delta_{m-2}(n),
\end{aligned}
\end{equation}
where $n$ represents the sequence length, and $\psi(n)$ and $\delta(n)$ are the discrete forms of $\psi(t)$ and $\delta(t)$, respectively. As a result, by performing the discrete convolution operation $(\star)$, the discrete wavelet transform can be obtained as follows: 
\begin{equation}
\begin{aligned}
\mathbf{u}_m(n) & =x_{m-1}\star\psi_{m}(n) \\
& =\left(\sum\textstyle_{n=1}^N \mathbf{u}_{m-1}(n) \psi_{m-1}(n)\right)\star\left(\sum\textstyle_{k=1}^{K} y_0(k) \psi_{m-1}(2n-k)\right) \\
& =\sum\textstyle_{k=1}^{K} y_0(k)\mathbf{u}_{m-1}(2n-k), \text { and similarly }, \\
\mathbf{g}_m(n) & =x_{m-1}\star\delta_m(n)=\sum\textstyle_{k=1}^{K} y_1(k)\mathbf{u}_{m-1}(2n-k),
\end{aligned}
\end{equation}
which concludes the proof of lemma~\ref{a:lemma}.
\end{proof}

Non-stationary RL is defined on the time-evolving task distribution $p(\mathcal{T}_t)$, where an agent interacts with a sequence of Markov Decision Processes (MDPs)~\cite{1998Reinforcement} $\mathcal{M}_{\omega_0}, \mathcal{M}_{\omega_1}, \cdots, \mathcal{M}_{\omega_h}$. The evolution of these MDPs is determined by a history-dependent stochastic process $\rho$, i.e., $\omega_{h+1}\sim \rho\left(\omega_{h+1}|\omega_0, \omega_1, \ldots, \omega_{h}\right)$, where $\omega$ is a task ID  
that regulates the properties of different MDPs. Each MDP $\mathcal{M}_\omega$ is represented by a tuple $(\mathcal{S}, \mathcal{A}, \mathcal{P}_\omega, \mathcal{R}_\omega, \gamma, \mathcal{P}(s_0))$, in which $\mathcal{S}$ denotes the state space, $\mathcal{A}$ the action space, $\mathcal{P}(s'|s,a)$ the transition dynamics, $\mathcal{R}(s,a,s')$ the reward function, $\mathcal{P}(s_0)$ the initial state distribution, and $\gamma \in [0,1)$ the discount factor. 

\subsection{Proof of the Convergence of the Wavelet TD Loss}
\label{a-th:loss}
\begin{theorem}
Let \(\mathcal{W}\) denote the set of all functions \(W:\mathcal{S}\times\mathcal{A}\rightarrow\mathbb{C}^{D}\) that map from the time domain to the wavelet domain. The wavelet update operator \(\mathcal{F}:\mathcal{W}\rightarrow\mathcal{W}\), defined as
\begin{align}
\mathcal{F} W(\mathbf{z}_t) = \mathbf{z}_t + \Gamma W(\mathbf{z}_{t+1}),
\end{align}
is a contraction mapping, where $\mathbf{z}_{t}$ and $\mathbf{z}_{t+1}$ represents the current and next task representation sequence, respectively, and \(\Gamma\) denotes the discount factor in a diagonal matrix form. 
\end{theorem}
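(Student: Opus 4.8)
The plan is to follow the classical template for proving that a Bellman-type operator is a $\gamma$-contraction, and then to invoke the Banach fixed-point theorem to obtain convergence. First I would equip $\mathcal{W}$ with the sup-norm $\|W\|_\infty = \sup_{(s,a)}\|W(s,a)\|$, where $\|\cdot\|$ is a fixed norm on $\mathbb{C}^D$ (for instance the $\ell_\infty$ or $\ell_2$ norm), and record that $(\mathcal{W},\|\cdot\|_\infty)$ is a complete metric space so that the fixed-point theorem will apply at the end.

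The core computation is to take two arbitrary maps $W_1,W_2\in\mathcal{W}$ and evaluate $\mathcal{F}W_1(\mathbf{z}_t)-\mathcal{F}W_2(\mathbf{z}_t)$. Because the additive term $\mathbf{z}_t$ (the analogue of the reward in the Bellman equation) is independent of $W$, it cancels, leaving $\Gamma\bigl(W_1(\mathbf{z}_{t+1})-W_2(\mathbf{z}_{t+1})\bigr)$. Taking norms and then the supremum over state-action pairs reduces the entire problem to bounding $\|\Gamma v\|$ in terms of $\|v\|$ for $v\in\mathbb{C}^D$.

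The key step, and the only place the structure of $\Gamma$ enters, is that $\Gamma$ is diagonal with entries equal to the discount factor $\gamma\in[0,1)$ (or, more generally, diagonal entries $\gamma_q$ with $\gamma_{\max}=\max_q\gamma_q<1$). For any $\ell_p$ norm the operator norm of a diagonal matrix equals its largest absolute diagonal entry, so $\|\Gamma v\|\leq\gamma_{\max}\|v\|$. This yields $\|\mathcal{F}W_1-\mathcal{F}W_2\|_\infty\leq\gamma_{\max}\|W_1-W_2\|_\infty$ with $\gamma_{\max}<1$, which is exactly the contraction property. I expect the main obstacle to be purely bookkeeping rather than conceptual: confirming that the chosen vector norm on $\mathbb{C}^D$ behaves submultiplicatively under the diagonal action of $\Gamma$ (immediate for any $\ell_p$), and, if one adopts the stochastic form of the operator with $\mathbb{E}\left[W(\mathbf{z}_{t+1})\right]$ as in Eq.~\ref{eq:Ynet}, applying Jensen's inequality to pull the norm inside the expectation before invoking the diagonal bound. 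Finally, since $\mathcal{F}$ is a contraction on a complete space, the Banach fixed-point theorem delivers a unique fixed point together with geometric convergence of the iterates, which is precisely the stable convergence of the wavelet representation update that the theorem asserts.
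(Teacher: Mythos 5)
Your proposal is correct and follows essentially the same route as the paper's proof: equip $\mathcal{W}$ with a sup-norm (the paper takes the supremum over $\mathbf{z}\in\mathcal{B}$ together with a max over sequence positions, you take it over state-action pairs, which is the same bookkeeping), cancel the additive $\mathbf{z}_t$ term, pull the norm inside the expectation, and extract the factor $\gamma<1$ from the diagonal matrix $\Gamma$. Your explicit appeal to completeness and the Banach fixed-point theorem to obtain convergence, and your allowance for distinct diagonal entries $\gamma_q$ with $\gamma_{\max}<1$, are mild strengthenings of what the paper states but do not change the argument.
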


\begin{proof}
The norm on \(\mathcal{W}\) is defined as $\| W \|_\mathcal{W}:=\mathop{\sup}\limits_{\substack{\mathbf{z}\in\mathcal{B}}}\mathop{\max}\limits_{\substack{\mathbb{K}\in\mathcal{K}}} \left\| \big[W(\mathbf{z_t})]_{\mathbb{K}} \right\|_D$, where $\mathcal{K}$ denotes the sequence length of $\mathbf{z_t}$. For any $W_1,W_2\in\mathcal{W}$, we have
\begin{equation}
\begin{aligned} 
\left\|\mathcal{F}^\pi W_1-\mathcal{F}^\pi W_2\right\|_{\mathcal{W}} & =\mathop{\sup}\limits_{\substack{\mathbf{z}\in\mathcal{B}}}\mathop{\max}\limits_{\substack{\mathbb{K}\in\mathcal{K}}}\|[\mathbf{z}_t]_\mathbb{K}+\gamma \mathbb{E}_{\mathbf{z}_{t+1} \sim \mathcal{B}}\left[[W_1\left(\mathbf{z}_{t+1}\right)]_\mathbb{K}\right]   \\
&-[\mathbf{z}_t]_\mathbb{K}-\gamma \mathbb{E}_{\mathbf{z}_{t+1} \sim \mathcal{B}}\left[W_2[\left(\mathbf{z}_{t+1}\right)]_\mathbb{K}\right]\|_{D} \\
& \leq \gamma\cdot\mathop{\max}\limits_{\substack{\mathbb{K}\in\mathcal{K}}}\mathop{\sup}\limits_{\substack{\mathbf{z}\in\mathcal{B}}}\| \mathbb{E}_{\mathbf{z}_{t+1} \sim \mathcal{B}}\left[[W_1\left(\mathbf{z}_{t+1}\right)]_\mathbb{K}-[W_2\left(\mathbf{z}_{t+1}\right)]_\mathbb{K}\right]\|_{D} \\
& \leq \gamma\cdot\mathop{\max}\limits_{\substack{\mathbb{K}\in\mathcal{K}}}\mathop{\sup}\limits_{\mathbf{z} \in \mathcal{B}} \left\|[W_1(\mathbf{z}_{t+1})-W_2(\mathbf{z}_{t+1})]_\mathbb{K}\right\|_D\\
& =\gamma\cdot\left\|W_1-W_2\right\|_{\mathcal{W}}, 
\end{aligned}
\end{equation}
where $\Gamma=diag(\gamma,\gamma,\cdots,\gamma)_{\mathcal{K}\times\mathcal{K}}$ and $\gamma\in[0,1)$, proving that $\mathcal{F}$ is a contraction mapping.
\end{proof}

\subsection{Proof of the Policy Performance Distinction via Wavelet Domain Features}
\label{a-theorem2}
\begin{lemma} Following the previous definition of state distribution~\cite{achiam2017constrained}, we define the discounted task representation distribution as follows:
    \begin{equation}
    \mathcal{D}^\pi(z)=(1-\gamma) \sum_{t=0}^\infty \gamma^t \sum_{h=0}^\infty \mathcal{P}_{\omega_h}(z_t=z|\pi,y)\rho(\omega_{h}|\omega_{0:h-1}).
\end{equation}
The neural network $y$ maps the trajectory $\tau$ to the task representation, denoted as $z$. Then the expected discounted total reward under policy $\pi$ can be expressed as 
\begin{align}
    J_\pi 
    &= \sum_{t=0}^\infty \gamma^t E_{\omega_{h}\sim\rho,\tau\sim\pi}\left[ \mathcal{R}_{\omega_{h}}(\tau)\right] \notag \\
    &=  \sum_{t=0}^\infty \gamma^t \sum_{h=0}^\infty \int_{\tau} \mathcal{R}^\pi(\tau)\mathcal{P}_{\omega_{h}}(\tau|\pi)\rho(\omega_{h}|\omega_{0:h-1})\,\mathrm{d}\tau  \notag \\
    &= \int_{\tau} \mathcal{R}^\pi(\tau)\sum_{t=0}^\infty \gamma^t \sum_{h=0}^\infty \mathcal{P}_{\omega_{h}}(\tau|\pi)\rho(\omega_{h}|\omega_{0:h-1})\,\mathrm{d}\tau  \notag \\
    &= \int_{z\sim y(\tau)} \mathcal{R}^\pi(z)\sum_{t=0}^\infty \gamma^t \sum_{h=0}^\infty \mathcal{P}_{\omega_{h}}(z|\pi)\rho(\omega_{h}|\omega_{0:h-1})\,\mathrm{d}z  \notag \\
    &=\frac{1}{1-\gamma}\int_\mathcal{Z} \mathcal{R}^\pi(z)\mathcal{D}^\pi(z)\,\mathrm{d}z \notag \\
    &\xlongequal{z\sim y(s,a,s')}\frac{1}{1-\gamma} E_{\substack{z\sim \mathcal{D}^\pi,a\sim\pi(\cdot|s,z) \\ s'\sim \mathcal{P}(\cdot|s,a,z)}}\left[\mathcal{R}(z)\right].
\end{align}
\label{a-lemma}
\end{lemma}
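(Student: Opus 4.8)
The plan is to derive the stated identity of Lemma~\ref{a-lemma} by the same telescoping-and-change-of-variables argument used for the discounted state distribution in~\cite{achiam2017constrained}, adapted here to the non-stationary, representation-space setting. First I would start from the definition of $J_\pi$ as the doubly-indexed sum over time steps $t$ (discounted by $\gamma^t$) and over MDP-evolution indices $h$ (weighted by the history-dependent law $\rho(\omega_h \mid \omega_{0:h-1})$), writing the per-step expectation $E_{\omega_h \sim \rho,\,\tau \sim \pi}[\mathcal{R}_{\omega_h}(\tau)]$ explicitly as the integral $\sum_h \int_\tau \mathcal{R}^\pi(\tau)\,\mathcal{P}_{\omega_h}(\tau \mid \pi)\,\rho(\omega_h \mid \omega_{0:h-1})\,\mathrm{d}\tau$. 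This turns every object into a bounded integrand against genuine probability measures, so that the subsequent interchanges are legitimate.

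Second, I would justify pulling the integral over $\tau$ outside both sums. Because the rewards are bounded and $\gamma\in[0,1)$, the double series $\sum_t \gamma^t \sum_h (\cdots)$ converges absolutely and uniformly in $\tau$, so Fubini/Tonelli permits swapping $\int_\tau$ with $\sum_t \sum_h$, yielding $\int_\tau \mathcal{R}^\pi(\tau)\big[\sum_t \gamma^t \sum_h \mathcal{P}_{\omega_h}(\tau \mid \pi)\,\rho(\omega_h\mid\omega_{0:h-1})\big]\,\mathrm{d}\tau$. The bracketed inner quantity is precisely the unnormalized discounted occupancy over trajectories.

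Third, the crux: pushing the trajectory integral forward through the encoder $y$. Here I would invoke the structural assumption implicit in the model that the reward depends on the trajectory only through the task representation, i.e. $\mathcal{R}^\pi(\tau) = \mathcal{R}^\pi(z)$ with $z = y(\tau)$, and then perform the change of variables $z\sim y(\tau)$, replacing $\int_\tau(\cdots)\,\mathrm{d}\tau$ by $\int_{\mathcal{Z}}(\cdots)\,\mathrm{d}z$ integrated against the pushforward of $\mathcal{P}_{\omega_h}(\tau\mid\pi)$ under $y$, which is exactly $\mathcal{P}_{\omega_h}(z_t=z\mid\pi,y)$. Recognizing the transformed inner sum as $\tfrac{1}{1-\gamma}\mathcal{D}^\pi(z)$ by the \emph{definition} of the discounted task-representation distribution (whose $(1-\gamma)$ prefactor is what produces the $\tfrac{1}{1-\gamma}$ normalization), I arrive at $J_\pi = \tfrac{1}{1-\gamma}\int_{\mathcal{Z}} \mathcal{R}^\pi(z)\,\mathcal{D}^\pi(z)\,\mathrm{d}z$.

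Finally I would rewrite this integral as the stated expectation $\tfrac{1}{1-\gamma}\,E_{z\sim\mathcal{D}^\pi}[\mathcal{R}(z)]$, with $a$ and $s'$ drawn from $\pi(\cdot\mid s,z)$ and $\mathcal{P}(\cdot\mid s,a,z)$ conditioned on $z$; this requires checking that $\mathcal{D}^\pi$ is a bona fide probability density on $\mathcal{Z}$, which follows since integrating over $z$ gives $(1-\gamma)\sum_t \gamma^t \sum_h \rho(\omega_h\mid\omega_{0:h-1}) = (1-\gamma)\sum_t\gamma^t = 1$, using that each $\mathcal{P}_{\omega_h}(z_t=z)$ integrates to one in $z$ and that $\rho$ sums to one over $h$. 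The main obstacle I anticipate is the change-of-variables step through $y$: one must be precise about whether $y$ is deterministic or stochastic, and about measurability and the pushforward, so that the trajectory-space reward and the representation-space reward genuinely coincide. The remaining steps are routine once the Fubini justification and the normalization of $\mathcal{D}^\pi$ are established.
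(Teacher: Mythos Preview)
Your proposal is correct and follows essentially the same route as the paper: the lemma is stated there together with its full chain of equalities (expand the expectation as an integral over trajectories, interchange the integral with the $t$- and $h$-sums, push forward through the encoder $y$, and identify the resulting inner factor as $\tfrac{1}{1-\gamma}\mathcal{D}^\pi(z)$), and your plan reproduces exactly those steps. If anything, you add rigor the paper leaves implicit---the Fubini/Tonelli justification, the pushforward-through-$y$ assumption that $\mathcal{R}^\pi(\tau)$ factors through $z=y(\tau)$, and the check that $\mathcal{D}^\pi$ integrates to one---so there is nothing to correct.
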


\begin{theorem}
\label{a-th:pi-w}
Suppose that the reward function \(\mathcal{R}(z)\) can be expanded into a Bth-degree Taylor series for \(z\in\mathbb{R}^D\), then for any two policies \(\pi_1\) and \(\pi_2\), their performance difference is bounded as:
\begin{align}
    |J_{\pi_1}-J_{\pi_2}|\leq \frac{\sqrt{D}}{1-\gamma}\cdot \sum_{b=1}^{B} \frac{\left\|\mathcal{R}^{(b)}(0)\right\|_D}{b!}\cdot\mathop{\max}\limits_{1\leq q \leq D}\mathop{\sup}\limits_{d_q\in\mathbb{R},\beta_q>0}\left|\mathcal{W}_{\pi_1}^{(b)}(d_q, \beta_q)-\mathcal{W}_{\pi_2}^{(b)}(d_q, \beta_q)\right|,
\end{align}
where \(\mathcal{W}_{\pi}^{(b)}(d,\beta)\) denotes the wavelet transform of the $b$th power of the task representation sequence $\mathbf{z}^{(b)}=[z_0,z_1,\ldots,z_{T_H}]^{(b)}$ for any integer \(b\in [1,B]\).
\end{theorem}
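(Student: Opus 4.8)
The plan is to start from the performance-difference formula furnished by Lemma~\ref{a-lemma}, namely $J_\pi = \frac{1}{1-\gamma}\int_{\mathcal{Z}}\mathcal{R}(z)\mathcal{D}^\pi(z)\,\mathrm{d}z$, and expand the reward function in its $B$th-degree Taylor series about the origin. Writing $\mathcal{R}(z)=\sum_{b=1}^{B}\frac{1}{b!}\langle \mathcal{R}^{(b)}(0), z^{(b)}\rangle$ (the constant term cancels in a difference of two policies, and any remainder is absorbed into the $B$th term), I would substitute this into $J_{\pi_1}-J_{\pi_2}$ and interchange the finite sum over $b$ with the integral over $\mathcal{Z}$. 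This reduces the difference to $\frac{1}{1-\gamma}\sum_{b=1}^{B}\frac{1}{b!}\big\langle \mathcal{R}^{(b)}(0),\ \mathbb{E}_{\mathcal{D}^{\pi_1}}[z^{(b)}]-\mathbb{E}_{\mathcal{D}^{\pi_2}}[z^{(b)}]\big\rangle$, so the whole problem becomes one of controlling, for each power $b$, the discrepancy between the two policies' $b$th moments of the task-representation distribution.

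Next I would apply the triangle inequality over $b$ and then the Cauchy--Schwarz inequality (in the $\|\cdot\|_D$ norm) to each term, peeling off the factor $\|\mathcal{R}^{(b)}(0)\|_D$ and leaving $\big\|\mathbb{E}_{\mathcal{D}^{\pi_1}}[z^{(b)}]-\mathbb{E}_{\mathcal{D}^{\pi_2}}[z^{(b)}]\big\|_D$. The crux is then to relate this moment-difference, which lives in the time domain, to the wavelet-domain quantities $\mathcal{W}_{\pi}^{(b)}(d_q,\beta_q)$ appearing on the right-hand side. The idea is to treat the discounted expectation $\mathbb{E}_{\mathcal{D}^\pi}[z^{(b)}]$ coordinate-wise as (a function of) the $b$th-power representation sequence $\mathbf{z}^{(b)}=[z_0,\ldots,z_{T_H}]^{(b)}$ and invoke the wavelet transform from Eq.~\ref{eq:cwt}; since $\mathcal{W}$ is a linear, norm-controlled (Parseval/admissibility-type) transform, the difference of the two sequences in the time domain is bounded by the supremum over translations $d_q$ and scales $\beta_q$ of the difference of their wavelet coefficients. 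Taking a coordinate-wise bound and maximizing over the $D$ coordinates $1\le q\le D$ produces the factor $\max_q \sup_{d_q,\beta_q}|\mathcal{W}_{\pi_1}^{(b)}(d_q,\beta_q)-\mathcal{W}_{\pi_2}^{(b)}(d_q,\beta_q)|$, while collecting the $D$ coordinates in the $\|\cdot\|_D$ norm contributes the leading $\sqrt{D}$.

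The main obstacle I anticipate is making the passage from the time-domain moment-difference to the wavelet-domain sup rigorous: one needs an inequality of the form $\|\mathbf{a}-\mathbf{b}\|\le C\sup_{d,\beta}|\mathcal{W}[\mathbf{a}](d,\beta)-\mathcal{W}[\mathbf{b}](d,\beta)|$ that is uniform over the relevant sequences, which requires either an inverse-transform (reconstruction) bound with a finite admissibility constant or an explicit control of the wavelet synthesis operator's norm. Establishing that the constant can be absorbed (so that it does not spoil the clean $\sqrt{D}/(1-\gamma)$ prefactor) is the delicate part; the other steps---Taylor expansion, triangle inequality, Cauchy--Schwarz, and the dimensional bookkeeping that yields $\sqrt{D}$ and the per-coordinate maximum---are essentially routine once the transform bound is in place.
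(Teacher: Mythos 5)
Your proposal follows essentially the same route as the paper's proof: Lemma~\ref{a-lemma} to write $J_\pi$ as an integral of $\mathcal{R}(z)$ against $\mathcal{D}^\pi(z)$, Taylor expansion plus triangle/Cauchy--Schwarz to reduce everything to per-$b$ moment differences $\|\mathbb{E}_{\mathcal{D}^{\pi_1}}[z^b]-\mathbb{E}_{\mathcal{D}^{\pi_2}}[z^b]\|_D$, passage to the wavelet domain coordinate-wise, and the $\sqrt{D}$ from maximizing over coordinates. The ``delicate'' reconstruction bound you flag is handled in the paper exactly as you anticipate: the moment is written via the inverse wavelet transform against the dual wavelet $\tilde{\psi}$, and the synthesis integral $\int\!\!\int|\tilde{\psi}|$ is bounded by $1$ using the assumed normalization of $\tilde{\psi}$, so the constant is absorbed.
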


\begin{proof}
First, the reward function can be written as \(\mathcal{R}(z)=\sum_{b=0}^{B}\frac{\mathcal{R}^{(b)}(0)^{\mathsf{T}}}{b!}z^b\) based on the Taylor series expansion~\cite{ye2024state}. According to lemma ~\ref{a-lemma}, for any integer \(b\in[1,B]\), we have 
\begin{align}
    |J_{\pi_1}-J_{\pi_2}| &\leq \frac{1}{1-\gamma}\int_\mathcal{Z} \left[\mathcal{R}^{\pi_1}(z)\mathcal{D}^{\pi_1}(z)-\mathcal{R}^{\pi_2}(z)\mathcal{D}^{\pi_2}(z)\right]\,\mathrm{d}z \notag\\
    &\leq \sum_{b=0}^{B}\frac{\left\|\mathcal{R}^{(b)}(0)\right\|_D}{b!}\cdot\left\| \int_\mathcal{Z}\left[ z^b \mathcal{D}^{\pi_1}(z)-z^b \mathcal{D}^{\pi_2}(z)\right]\mathrm{d}z \right\|_D \notag\\
    &= \sum_{b=0}^{B}\frac{\left\|\mathcal{R}^{(b)}(0)\right\|_D}{b!} \left\| \mathop{E}\limits_{z\sim \mathcal{D}^{\pi_1}}\left[ z^b \right] - \mathop{E}\limits_{z\sim \mathcal{D}^{\pi_2}}\left[ z^b \right]\right\|_D. \label{a-eq:thm1}
\end{align}
Since the inverse wavelet transform~\cite{burrus1998wavelets} of \(\mathcal{W}_{\pi}^{(b)}(d,\beta)\) is the reconstructed task representation sequence \(\mathbf{z}^{(b)}\), we have
\begin{equation}
    \mathop{E}\limits_{z_q\sim \mathcal{D}^{\pi}}\left[z_q^b \right]=\int_0^{\infty} \int_{-\infty}^{\infty} \mathcal{W}_{\pi}^{(b)}(d_q, \beta_q) \tilde{\psi}\left(\frac{t-d_q}{\beta_q}\right) \mathrm{d}d_q \mathrm{d}\beta_q, \quad\forall q=1,2,\dots,D,
\end{equation}
where $\tilde{\psi}$ is the dual function form of the mother wavelet function $\psi$ and has the normalized orthogonal property. The dimensions of translation position \(d\) and scaling factor \(\beta\) are identical to those of \(z\). 

Then we have the following derivation:
\begin{align}
   & \left|\mathop{E}\limits_{z_q\sim \mathcal{D}^{\pi_1}}\left[z_q^b \right]  - \mathop{E}\limits_{z_q\sim \mathcal{D}^{\pi_2}}\left[z_q^b \right]\right|
   \leq \int_0^{\infty} \int_{-\infty}^{\infty} \left | \mathcal{W}_{\pi_1}^{(b)}(d_q, \beta_q)-\mathcal{W}_{\pi_2}^{(b)}(d_q, \beta_q) \right|\cdot \left| \tilde{\psi}\left(\frac{t-d_q}{\beta_q}\right) \right| \mathrm{d}d_q \mathrm{d}\beta_q \notag\\
   & \leq \mathop{\sup}\limits_{d_q\in\mathbb{R},\beta_q>0}\left|\mathcal{W}_{\pi_1}^{(b)}(d_q, \beta_q)-\mathcal{W}_{\pi_2}^{(b)}(d_q, \beta_q)\right| \int_0^{\infty} \int_{-\infty}^{\infty} \left| \tilde{\psi}\left(\frac{t-d_q}{\beta_q}\right) \right| \mathrm{d}d_q \mathrm{d}\beta_q \notag\\
   & \leq \mathop{\sup}\limits_{d_q\in\mathbb{R},\beta_q>0}\left|\mathcal{W}_{\pi_1}^{(b)}(d_q, \beta_q)-\mathcal{W}_{\pi_2}^{(b)}(d_q, \beta_q)\right|. 
\end{align}
Then the policy performance difference bound is derived as follows:
\begin{align*}
     |J_{\pi_1}-J_{\pi_2}|
     &\leq\frac{1}{1-\gamma}\cdot \sqrt{D}\cdot \sum_{b=1}^{B} \frac{\left\|\mathcal{R}^{(b)}(0)\right\|_D}{b!}\cdot\mathop{\max}\limits_{1\leq q \leq D}\mathop{\sup}\limits_{d_q\in\mathbb{R},\beta_q>0}\left|\mathcal{W}_{\pi_1}^{(b)}(d_q, \beta_q)-\mathcal{W}_{\pi_2}^{(b)}(d_q, \beta_q)\right|,
\end{align*}
which concludes the proof of Theorem~\ref{a-th:pi-w}.
\end{proof}

\subsection{Proof of the Policy Improvement with Wavelet Task Representations}
\label{a-th:PI}
Suppose that the function class for the contextual policy $\pi$ of \modelname is defined as $\Pi_{\mathrm{\modelname}} \doteq\left\{\pi \text {, s.t. } \exists \theta \text { with } \pi(\cdot| s,\hat{z})=f_\theta\left(s,Y_{\phi}(e_\eta(\mathcal{C}))\right) \forall s\right\}$, where $f_\theta$ is a neural network characterizing the policy conditioned on top of the context encoder $e_\eta$ and the wavelet representation network $Y_\phi$.

\begin{assumption}
(1.1) Access to the true history policy: $\pi_{\hat{h}}=\pi_{h}$. (1.2) The context encoder $e_\eta$ and the wavelet representation network $Y_{\phi}$ allows to represent the estimate of the history policy : $\pi_{\hat{h}}(a | s,Y_{\phi}(e_\eta(\mathcal{C})))=\pi_{\hat{h}}(a | s,\hat{z}) \in \Pi_{\mathrm{\modelname}}$. (1.3) Access to the true performance of policies: $\hat{J}(\pi)=J(\pi)$ for all policies $\pi$. (1.4) The non-stationary RL algorithm performs perfect optimization on top of $e_\eta$ and $Y_{\phi}$: $J_{\mathrm{\modelname}}=\max _{\pi \in \Pi_{\mathrm{\modelname}}} \hat{J}(\pi)$.
\end{assumption}

\begin{theorem}
\label{a-th:pi-hatz}
Under the Assumption 1.1-1.4, \modelname returns a policy $\pi(\cdot | s,\hat{z})$ conditioned on the wavelet task representation that improves upon its iterated history policy $\pi_h$: $J_{\text {\modelname}}\geq J_{\pi_h}$. 
\end{theorem}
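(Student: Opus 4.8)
The plan is to prove the inequality by a realizability-plus-optimality chain: first show that the iterated history policy $\pi_h$ belongs to the function class $\Pi_{\modelname}$ that \modelname optimizes over, and then invoke the perfect-optimization assumption to conclude that \modelname's returned policy scores at least as high. The entire argument reduces to a short chain of (in)equalities, with each link licensed by one of the four assumptions, in the spirit of the policy-improvement guarantees used in context-based meta-RL.

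First I would combine Assumptions 1.1 and 1.2 to place $\pi_h$ inside $\Pi_{\modelname}$. Assumption 1.2 states that conditioning the history-policy estimate on the wavelet task representation $\hat{z}=Y_{\phi}(e_\eta(\mathcal{C}))$ yields a policy $\pi_{\hat{h}}(a\mid s,\hat{z})\in\Pi_{\modelname}$, i.e.\ the estimate is realizable by some parameter $\theta$ of the network $f_\theta$. Assumption 1.1 identifies this estimate with the true history policy, $\pi_{\hat{h}}=\pi_h$. Together these give $\pi_h\in\Pi_{\modelname}$.

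Next I would apply Assumption 1.4, namely $J_{\modelname}=\max_{\pi\in\Pi_{\modelname}}\hat{J}(\pi)$. Since $\pi_h$ lies in the set over which the maximum is taken, the maximum is at least $\hat{J}(\pi_h)$, so $J_{\modelname}\geq\hat{J}(\pi_h)$. Finally, Assumption 1.3 erases the gap between estimated and true performance, $\hat{J}(\pi_h)=J(\pi_h)=J_{\pi_h}$, which chains to $J_{\modelname}\geq J_{\pi_h}$ and closes the argument.

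The main obstacle is not the algebra—once the four assumptions are granted, the inequality is almost immediate—but the realizability claim underlying Assumption 1.2: the representation $\hat{z}$ must be expressive enough that conditioning on it can reproduce the history policy. I would therefore concentrate the write-up on justifying that $Y_{\phi}(e_\eta(\mathcal{C}))$ carries sufficient task information, consistent with Theorem~\ref{a-th:pi-w}, where wavelet-domain features already control policy performance. I would also flag that in a non-idealized setting the exact guarantee $J_{\modelname}\geq J_{\pi_h}$ degrades to an approximate bound governed by the realizability gap in 1.2, the estimation error $\hat{J}-J$ in 1.3, and the optimization suboptimality relaxing 1.4.
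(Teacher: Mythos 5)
Your proposal is correct and takes essentially the same approach as the paper: the paper's proof is exactly this four-link chain, written as $J_{\text{\modelname}} = J(\arg\max_{\pi\in\Pi_{\text{\modelname}}}\hat{J}(\pi)) = \max_{\pi\in\Pi_{\text{\modelname}}}J(\pi) \geq J(\pi_{\hat{h}}) \geq J(\pi_h)$, invoking Assumptions 1.4, 1.3, 1.2, 1.1 in that order. You merely reorder the links (establishing $\pi_h\in\Pi_{\text{\modelname}}$ via 1.1--1.2 first, then applying 1.4 and 1.3), which is logically equivalent.
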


\begin{proof}
\begin{align}
    J_{\text {\modelname}}& =J\left(\underset{\pi \in \Pi_{\text {\modelname}}}{\arg \max } \hat{J}(\pi)\right) \tag*{(Assumption 1.4)}\\ 
& =\max _{\pi \in \Pi_{\text {\modelname}}} J(\pi) \tag*{(Assumption 1.3)}\\
& \geq J\left(\pi_{\hat{h}}\right) \tag*{(Assumption 1.2)}\\
& \geq J\left(\pi_h\right) \tag*{(Assumption 1.1)}
\end{align}
which concludes the proof of Theorem~\ref{a-th:pi-hatz}.
\end{proof}

\section{Experimental Details}
\label{detail setting}
\subsection{Baselines} 
We compared \modelname with seven prevailing and competitive baselines. Initially, SAC, a standard RL method, serves as the backbone to validate the importance of task inference. Conversely, PEARL and RL$^2$, which are classical meta-RL methods, perform task inference using an encoder, emphasizing the significance of modeling the task evolution process in non-stationary environments. Moreover, the last four baselines (CEMRL, SeCBAD, TRIO, and COREP) are specifically designed to address RL tasks in non-stationary settings, further substantiating the advantages of modeling and predicting the task evolution process through wavelet transform in WISDOM. Their introductions are as follows:

(1) \textbf{SAC}~\cite{Haarnoja2018SoftAA} is a standard model-free, off-policy reinforcement learning method that aims to maximize both the expected return and the entropy of the policy;

(2) \textbf{PEARL}~\cite{rakelly2019efficient} is a classical context-based meta-RL method that enhances sample efficiency and policy performance by disentangling task inference and control, employing online probabilistic filtering of latent task variables for effective adaptation to new tasks from limited experience; 

(3) \textbf{RL$^2$}~\cite{duan2016rl} employs RNNs to encode states for meta-learning, storing learned prior knowledge in the hidden states. This prior knowledge is then applied across multiple tasks, and RL algorithms are used to train the weights of the RNN, addressing the challenge of rapid learning;

(4) \textbf{CEMRL}~\cite{bing2023meta} leverages a Gaussian Mixture Model to represent non-stationary task distributions with cluster structure and learns a more compact latent space through reconstructing tasks; 

(5) \textbf{SeCBAD}~\cite{chen2022adaptive} perceives and adapts to new evolving tasks through reward functions, leading to a policy that can adapt to rapid variations; 

(6) \textbf{TRIO}~\cite{Poiani2021MetaReinforcementLB} meta-trains a variational module for inferring distributions over latent context and leverages a Gaussian Process (GP) to model the task evolution process; 

(7) \textbf{COREP}~\cite{zhang2024tackling} utilizes a dual graph structure to retrospect causal origin of non-stationarity, with a core graph emphasizing stable representation and a general graph compensating for overlooked edge information.

For fair comparison, \modelname and all baseline models, except TRIO and RL$^2$ which emphasize the use of a recurrent encoder, employed the same MLP network architecture for their context encoders.

\subsection{Evaluation Metrics}
All models were trained with equal time steps, and subsequent policy evaluations were performed with the same number of time steps. For evaluating the policy's performance on Meta-World, the average success rate was computed across six random seeds. On MuJoCo and Type-1 Diabetes control, the average return was computed across six random seeds. 

\subsection{Non-Stationary Experimental Settings}
We evaluated our proposed model \modelname on various non-stationary tasks with uncertain evolution periods, using three benchmarks: \textbf{Meta-World}, \textbf{Type-1 Diabetes} and \textbf{MuJoCo}. Then we describe the details of these three environments.
\begin{figure*}[ht]
\centering
\includegraphics[width=0.95\textwidth]{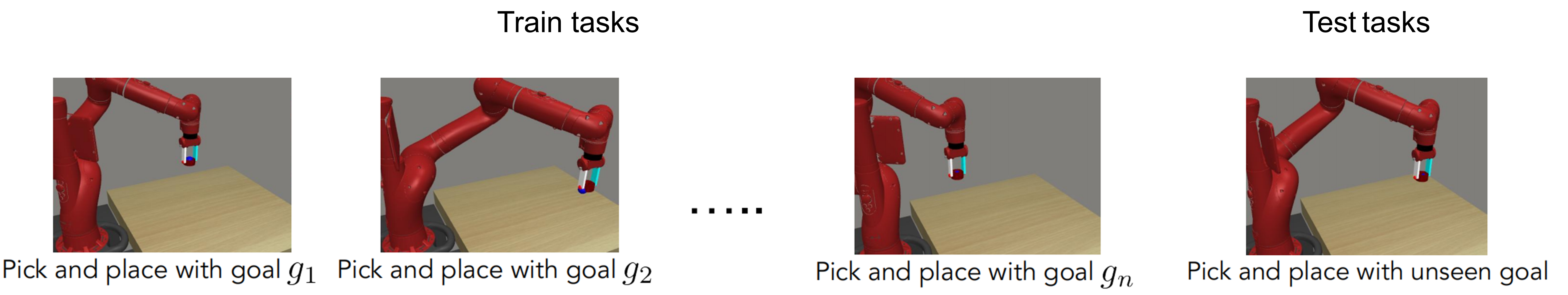}
\caption{Visualization of ML1 evaluation protocol.}
\label{meta-world}
\vskip -0.2in
\end{figure*}

\paragraph{Meta-World} Meta-World is an open-source simulated benchmark for meta-RL and multi-task learning, consisting of 50 distinct robotic manipulation tasks. As shown in Fig.~\ref{meta-world}, we experiment with the Meta-Learning 1 (ML1) evaluation protocol, which evaluates meta-RL algorithms' few-shot adaptation to goal variation within one task. ML1 uses single Meta-World tasks, with the meta-training tasks corresponding to 50 random initial object and goal positions, and meta-testing conducted on 10 held-out positions. The non-stationarity in Meta-World is evidenced by the continuous fluctuation of the target position for the robotic arm's movement, which is correlated with the reward function. We selected 8 different types of tasks, and the detailed descriptions are listed in Table~\ref{ml1-descriptions}.

\begin{table*}[ht]
\centering
\caption{Meta-World task descriptions}
\begin{adjustbox}{max width=0.85\textwidth}
\begin{tabular}{ll}
    \toprule
    Task  & Description \\
    \midrule
    Button-Press & Randomize button positions and press a button. \\
    Door-Close & Randomize door positions and close a door with a revolving joint. \\
    Door-Lock & Randomize door positions and lock the door by rotating the lock clockwise. \\
    Door-Unlock & Randomize door positions and unlock the door by rotating the lock counter-clockwise. \\
    Faucet-Close & Randomize faucet positions and rotate the faucet clockwise. \\
    Handle-Press & Randomize the handle positions and press a handle down. \\
    Plate-Slide & Randomize the plate and cabinet positions and slide a plate into a cabinet.  \\
    Plate-Slide-Back & Randomize positions for both the plate and cabinet retrieve a plate from the cabinet. \\
    \bottomrule
    \end{tabular}
\end{adjustbox}
\label{ml1-descriptions}
\end{table*}

\paragraph{Type-1 Diabetes} In addition to the MuJoCo and Meta-World physical control tasks, we also evaluate our proposed approach on a more challenging real-world task: Realistic Type-1 Diabetes, which aims to control an insulin pump to regulate the blood glucose level of a patient. Each step of an episode corresponds to a minute in an in-silico patient's body, and each episode consists of 1440 time steps, corresponding to one day. When a patient consumes food, the blood glucose level will increase. Extremely high or low blood glucose levels can lead to life-threatening conditions such as hyperglycemia and hypoglycemia, respectively. Therefore, insulin dosage is required to mitigate the risks associated with these conditions. 

In the glucose control task, the states are hidden patient parameters, the observations are continuous blood glucose readings, the actions consist of discrete insulin dosages within the range [0,~5], and the transition dynamics are specific to the patient but unknown to the agent. Following the previous work~\cite{Basu2023OnTC}, we design a biologically inspired custom zone reward that incentivizes the time spent in the target zone and penalizes hyperglycemia and hypoglycemia:
\begin{equation}
    r_{t}(s_{t-1},s_{t})=\left\{ \begin{array}{l}
	-20\ \ \ s_t<70\ \text{or\ }s_t>200\ \left( \text{episode\ termination} \right)\\
	-1\ \ \ \ \ \ \ s_t<100\ \text{or\ }s_t-s_{t-1}<0.5\ \left( \text{hypoglycemia} \right)\\
	-1\ \ \ \ \ \ \ s_t>150\ \text{or\ }s_t-s_{t-1}>0.5\ \left( \text{hyperglycemia} \right)\\
	50\ \ \ \ \ \ \ \ 100\le s_t \le 150 \left( \text{target\ blood\ glucose} \right)\\
\end{array} \right.
\end{equation}
where $r_{t}$ encourages the agent to maintain the target vital statics within a healthy range.

We induced non-stationarity by oscillating the body parameters, such as the consumption amount of total glucose. Specifically, we simulate non-stationary fluctuations in blood glucose levels by varying the daily intake of lunch (1 meal) or both lunch and dinner (2 meals) in both adolescent and adult populations. Specifically, the meal plan (meal time and size) is set as follows: 7:00 AM: 45 (breakfast), 12:00 AM: 70 (launch), 16:00 PM: 15 (snack1), 18:00 PM: 80 (dinner), 23:00 PM: 10 (snack). The meal plan for adults is exactly the same as that for adolescents. The range for the size of the adults' meals for lunch or lunch and dinner is 60-80, while the range for adolescents for lunch or lunch and dinner is 50-80. The goal of this system is to responsibly update the doctor's initial prescription, ensuring that treatment continually improves. 

\paragraph{MuJoCo} We also adopted 4 classic MuJoCo control tasks for comparison. In the setting of non-stationary RL, we primarily considered two different types of MuJoCo evolution tasks: changes in reward functions and dynamics. The environment details and the number of tasks for meta-training and meta-testing in different tasks are shown in Table~\ref{mujoco-details}.

\begin{itemize}
    \item \textbf{Changes in reward functions.} In the Hopper-Vel and Walker-Vel tasks, the target velocity of the agent is sampled from the uniform distribution $U\left[ 0.5,3.0 \right]$ every $60\pm 20$ time steps to simulate non-stationarity.

    \item \textbf{Changes in dynamics.} In Cheetah-Damping, the damping parameters are sampled from the set $\left[0.85, 0.9, 0.95, 1.0 \right]$. In Walker-Rand-Params, the physical parameters of the agent, such as body mass, damping, and friction, are randomized. The variation in these physical parameters at $60\pm 20$ time steps results in non-stationary environmental dynamics.
\end{itemize}

\begin{table*}[ht]
\centering
\caption{Classical MuJoCo details}
\vskip +0.03in
\begin{adjustbox}{max width=0.65\textwidth}
  \begin{tabular}{lcccc}
    \toprule
    Task  & Observation dim & Action dim & \multicolumn{1}{c}{Training tasks} & \multicolumn{1}{c}{Test tasks} \\
    \midrule
    Cheetah-Damping & 17    & 6     & 60     & 10 \\
    Hopper-Vel & 11    & 3     & 100   & 30 \\
    Walker-Rand-Params & 17    & 6     & 100   & 30 \\
    Walker-Vel & 17    & 6     & 100   & 30 \\
    \bottomrule
    \end{tabular}
\end{adjustbox}
\label{mujoco-details}
\vskip -0.1in
\end{table*}

\paragraph{Hyperparameters} The hyperparameters utilized in the experiments are outlined in Table \ref{hyperparameters}.

\begin{table}[htbp]
  \caption{Hyperparameters}
  \label{hyperparameters}
  \centering
  \begin{tabular}{ll}
    \toprule
    Hyperparameter & Value \\
    \midrule
    Encoder training steps & 200 \\
    RL layer size & 300 \\
    Target entropy factor & 1.0 \\
    Learning rate & 3e-4 \\
    Dims. of task representation $z$ & 5 \\
    Coef. of soft update $\sigma$ & 5e-3 \\
    Replay buffer size & 10,000,000 \\
    Batch size of the policy & MuJoCo \& Type-1 Diabetes: 256, Meta-World: 512 \\
    Batch size of the encoder & MuJoCo \&  Type-1 Diabetes: 256, Meta-World: 512 \\
    Maximum trajectory length & MuJoCo \& Type-1 Diabetes: 800, Meta-World: 200 \\
    Evaluation trajectories & MuJoCo \& Type-1 Diabetes: 2, Meta-World: 1 \\
    Policy training steps & MuJoCo: 2,000, Meta-World: 4,000, Type-1 Diabetes: 200 \\
    Training tasks per episode & MuJoCo (Cheetah: 20), Type-1 Diabetes: 25, Meta-World: 50 \\
    Initial trainsitions & MuJoCo \& Meta-World: 200, Type-1 Diabetes: 400 \\
    Trainsitions per episode & MuJoCo: 800, Meta-World: 600, Type-1 Diabetes: 200 \\
    Decomposition level $M$ & MuJoCo (Walker-Rand-Params: 3), Type-1 Diabetes: 2 \\ & Meta-World (Button-Press \& Handle-Press: 5): 2 \\
    Coef. of wavelet TD loss $\alpha_Y$ & MuJoCo (Hopper-Vel: 0.5): 0.9\\
    & Type-1 Diabetes (Adolescent-1meal: 0.5): 0.1 \\ & Meta-World (Faucet-Close \& Door-Close: 0.9, \\
    & Door-Lock: 0.8): 0.1\\
    \bottomrule
    \end{tabular}%
\end{table}

\section{Pseudocode}
\label{pseudocode}

\begin{center}
\begin{minipage}{12.7cm}
\begin{algorithm}[H]
\caption{\modelname algorithm}
\label{alg}
\begin{algorithmic}
\STATE \textbf{Input}: training tasks $\mathcal{\tilde{T}}^{train}$ from $\mathcal{P}(\mathcal{\tilde{T}})$, replay buffer $\mathcal{B}$, context encoder $e_\eta(\mathbf{z}|\mathcal{C})$,\\ 
    wavelet representation network $Y_{\phi}(\mathbf{\hat{z}}|\mathbf{z})$, $W$ network $W_{\varphi}(\mathbf{u}|\mathcal{C},e)$, entropy term $\mathcal{H}$\\
    contextual policy $\pi_\theta(a|s,\hat{z})$, critic $Q_\upsilon(s,a,\hat{z})$.

\begin{tcolorbox}
[colback=blue!5!white,colframe=blue!50!black!50!,left=2pt,right=2pt,top=0pt,bottom=1pt,colbacktitle=blue!25!white,title=\textbf{\color{black} Collecting Training Data}] 
        \FOR{training task $\mathcal{\tilde{T}}^{train}$}
        \STATE{Generate a task representation sequence $\mathbf{z}\sim e_\eta(\mathbf{z}|\mathcal{C})$}
        \STATE{Roll-out policy $\pi_\theta(a|s,z)$ and add transitions ${(s_j, a_j, s'_j, r_j)}_{j:1\cdots \mathbb{J}}$ to $\mathcal{B}$}
        \ENDFOR
\end{tcolorbox}

\begin{tcolorbox}[colback=red!5!white,colframe=red!50!black!50!,left=2pt,right=2pt,top=0pt,bottom=1pt,colbacktitle=red!25!white,title=\textbf{\color{black} Training Context Encoder and Wavelet Representation Network}]
\FOR{each context encoder training step}
         \STATE{Sample $c_i={(s_i, a_i, s'_i, r_i)}_{i:1\cdots \mathbb{I}}\sim \mathcal{B}$ for context encoder}
        \STATE{Generate a task representation $z\sim e_\eta(z|c_i)$}
        \STATE{Obtain a wavelet task representation sequence: $\hat{\mathbf{z}}=Y_{\phi}(\mathbf{z})$}
        \STATE{Train context encoder $e_\eta$: $\eta \leftarrow \eta-\lambda_\eta \hat{\nabla}_\eta \mathcal{J}_{\eta}$ \textcolor{blue}{(Eq.~\ref{equ:e})}}
        \STATE{Train wavelet representation network $Y_{\phi}$: $\phi \leftarrow \phi-\lambda_\phi \hat{\nabla}_\phi \mathcal{J}_{\phi}$ \textcolor{blue}{(Eq.~\ref{eq:Ynet})}} 
        \STATE{Train $W$ network $W_{\varphi}$: $\varphi \leftarrow \varphi-\lambda_\varphi \hat{\nabla}_\varphi \mathcal{J}_{\phi}$ \textcolor{blue}{(Eq.~\ref{eq:Ynet})}}
        \STATE{Train target $W$ network $W_{\mu}$: $\mu \leftarrow \sigma\varphi+(1-\sigma)\mu$}
        \ENDFOR
\end{tcolorbox}

\begin{tcolorbox}
[colback=yellow!5!white,colframe=yellow!50!black!50!,left=2pt,right=2pt,top=0pt,bottom=1pt,colbacktitle=yellow!25!white,title=\textbf{\color{black} Training Policy}]
\FOR{each policy training step}
            \STATE{Train contextual policy $\pi_\theta$: $\theta \leftarrow \theta-\lambda_\pi \hat{\nabla}_\theta \mathcal{J}_{\theta}$ \textcolor{blue}{(Eq.~\ref{eq:actor})}}
            \STATE{Obtain a wavelet task representation sequence: $\hat{\mathbf{z}}=Y_{\phi}(\mathbf{z})$}
            \STATE{Train contextual critic $Q_\upsilon$: $\upsilon_l \leftarrow \upsilon_l-\lambda_Q \hat{\nabla}_{\upsilon_l}\mathcal{J}_{\upsilon_l}\text {, for } l \in\{1,2\}$ \textcolor{blue}{(Eq.~\ref{eq:critic})}}
            \STATE{Train temperature coefficient $\alpha$: $\alpha \leftarrow \alpha-\lambda_\alpha \hat{\nabla}_\alpha\mathbb{E}\left[-\alpha \log \pi_\theta\left(a|s,\hat{z}\right)-\alpha \mathcal{H}\right]$}
            \STATE{Train target contextual critic $Q_\zeta$: $\zeta_l \leftarrow \sigma \upsilon_l+(1-\sigma) \zeta_l \text{, for } l \in\{1,2\}$}
        \ENDFOR
\end{tcolorbox}

\begin{tcolorbox}
[colback=teal!5!white,colframe=teal!50!black!50!,left=2pt,right=2pt,top=0pt,bottom=1pt, colbacktitle=teal!25!white,title=\textbf{\color{black} Testing on Unseen Tasks}]
\STATE{Initialize transitions $c^{\mathcal{\tilde{T}}} = \left\{\right\} $}
    \STATE{Sample test tasks $\mathcal{\tilde{T}}^{test}$ from $\mathcal{P}(\mathcal{\tilde{T}})$}
    \FOR {$g=1, \ldots, \mathbb{G}$}
    \STATE{Generate a task representation $z\sim e_\eta(z|c_g)$}
    \STATE{Roll out policy $\pi_{\theta}(a|s,z)$ to generate transitions $\mathbb{D}^{\mathcal{\tilde{T}}}_{g}={(s_j, a_j, s'_j, r_j)}_{j:1\cdots \mathbb{J}}$}
    \STATE{Store the transitions: $c^{\mathcal{\tilde{T}}} = c^{\mathcal{\tilde{T}}}\cup \mathbb{D}^{\mathcal{\tilde{T}}}_{g}$}
    \ENDFOR
\end{tcolorbox}

\end{algorithmic}
\end{algorithm}
\end{minipage}
\end{center}

\section{Implementation Details}
\label{a-implement}
\paragraph{Model Architecture} All components of \modelname are implemented as MLPs. We report the total number of learnable parameters for our model initialized for the Walker-Vel task ($\mathcal{S}\in \mathbb{S}^{17},\ \mathcal{A}\in \mathbb{A}^6$). We  summarize the architecture and model size of \modelname using PyTorch-like notation in Alg.~\ref{alg:net}.

\begin{algorithm}[htbp]
\caption{\modelname Network Structure Pytorch-like Pseudocode}
\definecolor{codeblue}{rgb}{0.28125,0.46875,0.8125}
\lstset{
    basicstyle=\fontsize{7.5pt}{7.5pt}\ttfamily,
    commentstyle=\fontsize{7.5pt}{7.5pt}\color{codeblue},
}
\begin{lstlisting}[language=Python]
Encoder parameters: 90,810
Q1, Q2, target Q1, target Q2 parameters: 189,601
Policy parameters: 191,112
Alpha network parameters: 351

Encoder(
  (fc0): Linear(in_features=2 * o_dim + a_dim + r_dim, out_features=200, bias=True)
  (fc1): Linear(in_features=200, out_features=200, bias=True)
  (fc2): Linear(in_features=200, out_features=200, bias=True)
  (last_fc): Linear(in_features=200, out_features=10, bias=True)
)

Q1, Q2, target Q1, target Q2(
  (fc0): Linear(in_features=(o_dim + latent_dim) + a_dim, out_features=300, bias=True)
  (fc1): Linear(in_features=300, out_features=300, bias=True)
  (fc2): Linear(in_features=300, out_features=300, bias=True)
  (last_fc): Linear(in_features=300, out_features=1, bias=True)
)

Policy(
  (fc0): Linear(in_features=22, out_features=300, bias=True)
  (fc1): Linear(in_features=300, out_features=300, bias=True)
  (fc2): Linear(in_features=300, out_features=300, bias=True)
  (last_fc): Linear(in_features=300, out_features=6, bias=True)
  (last_fc_log_std): Linear(in_features=300, out_features=6, bias=True)
)

Alpha network(
  (fc0): Linear(in_features=5, out_features=50, bias=True)
  (last_fc): Linear(in_features=50, out_features=1, bias=True)
)
\end{lstlisting}
\label{alg:net}
\end{algorithm}

\paragraph{Training Details} We evaluate our proposed method and other baselines on the NVIDIA GeForce RTX 3090 GPU. And PyTorch-like pseudo-code for training our model is illustrated in Alg.~\ref{alg:train}.

\begin{algorithm}[htbp]
\caption{\modelname Training Pytorch-like Pseudocode}
\definecolor{codeblue}{rgb}{0.28125,0.46875,0.8125}
\lstset{
    basicstyle=\fontsize{7pt}{7pt}\ttfamily,
    commentstyle=\fontsize{7pt}{7pt}\color{codeblue},
    stringstyle=\color{red}
}
\begin{lstlisting}[language=Python]
def train(self):
    
    # Collecting initial samples ...
    if self.initial_transitions > 0:
        self.env_steps += self.collect_data(self.train_tasks, self.initial_transitions)

    for epoch in gt.timed_for(range(self.num_epochs), save_itrs=True):

        # 1. Collect data with rollout coordinator
        collection_tasks = np.random.permutation(self.train_tasks)[:self.num_train_tasks]
        self.env_steps += self.collect_data(collection_tasks, self.num_transitions)

        # 2. Replay buffer stats
        self.replay_buffer.stats_dict = self.replay_buffer.get_stats()

        # 3. Train context encoder and wavelet representation network
        self.reconstruction_trainer.train(self.num_reconstruction_steps)

        # 4. Train policy with data from the replay buffer
        temp, sac_stats = self.policy_trainer.train(self.num_policy_steps)

        # 5. Evaluation
        eval_output = self.evaluate(LOG, 'train', collection_tasks, self.eval_traj)
        eval_output = self.evaluate(LOG, 'test', self.test_tasks, self.eval_traj)
        average_test_reward, std_test_reward = eval_output   

\end{lstlisting}
\label{alg:train}
\end{algorithm}

\section{Additional Experiment Results}
\label{a-results}
\subsection{Additional Ablation Results}

\begin{figure*}[ht]
\centering
\vskip -0.1in
\subfigure{
\includegraphics[width=0.32\textwidth]{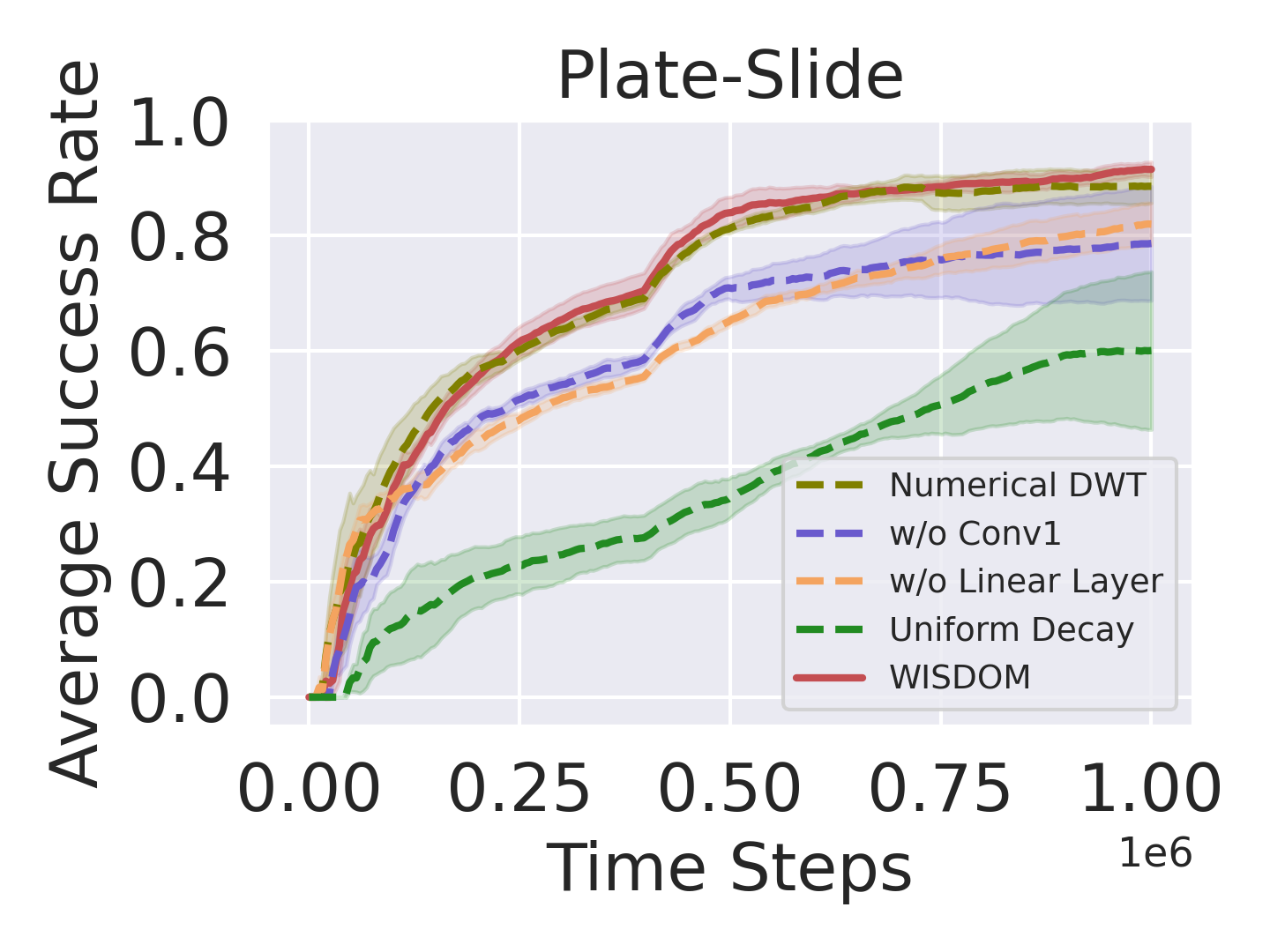}
}
\subfigure{
\includegraphics[width=0.32\textwidth]{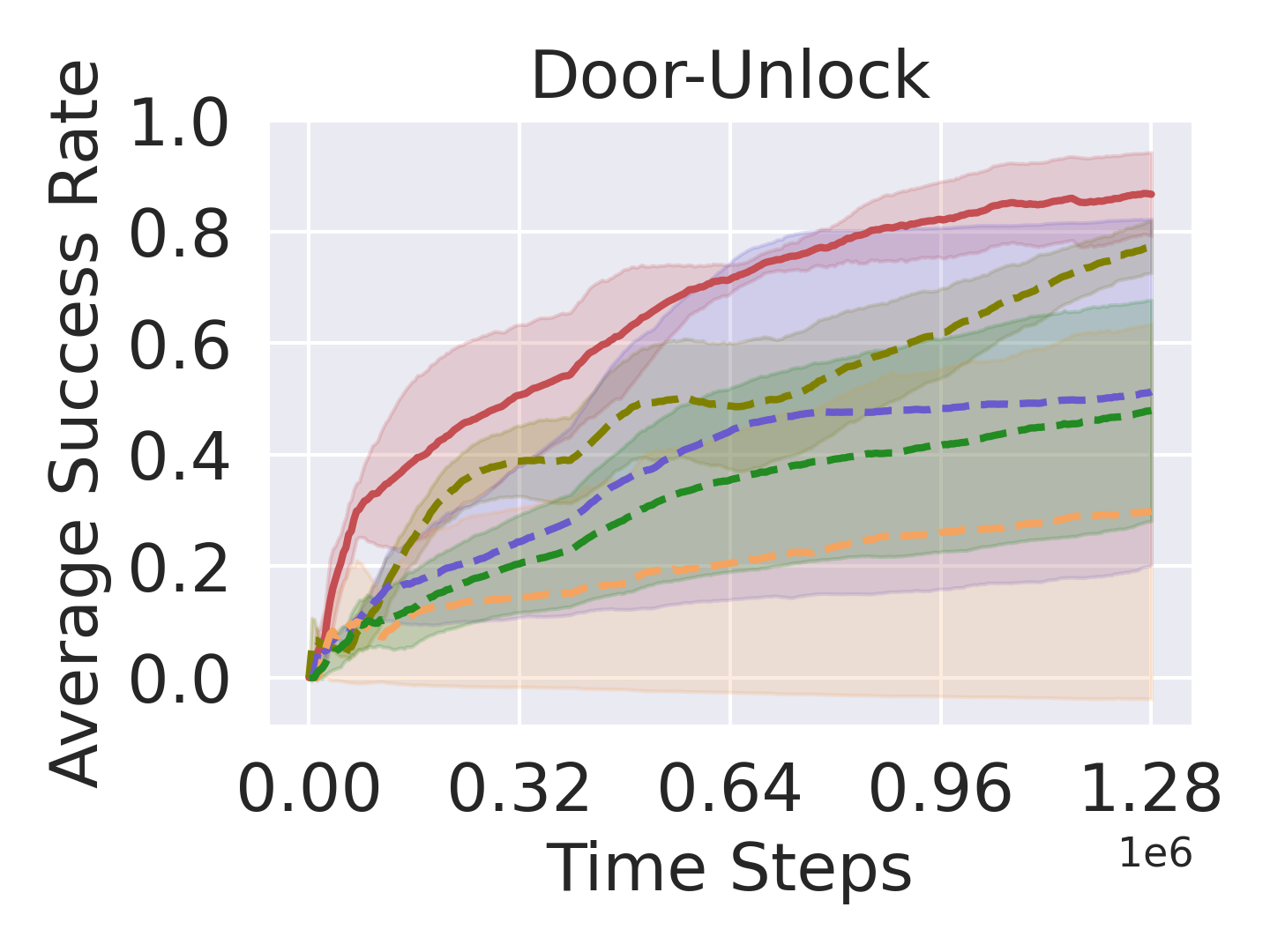}
}
\vskip -0.1in
\caption{Ablation on the design of the wavelet representation network.}
\label{a-fig:ablation}
\end{figure*}

\subsection{Evaluation Results Across Diverse Non-Stationary Degrees}
\begin{figure*}[h]
\vskip -0.1in
\centering
\subfigure{
\includegraphics[width=0.23\textwidth]{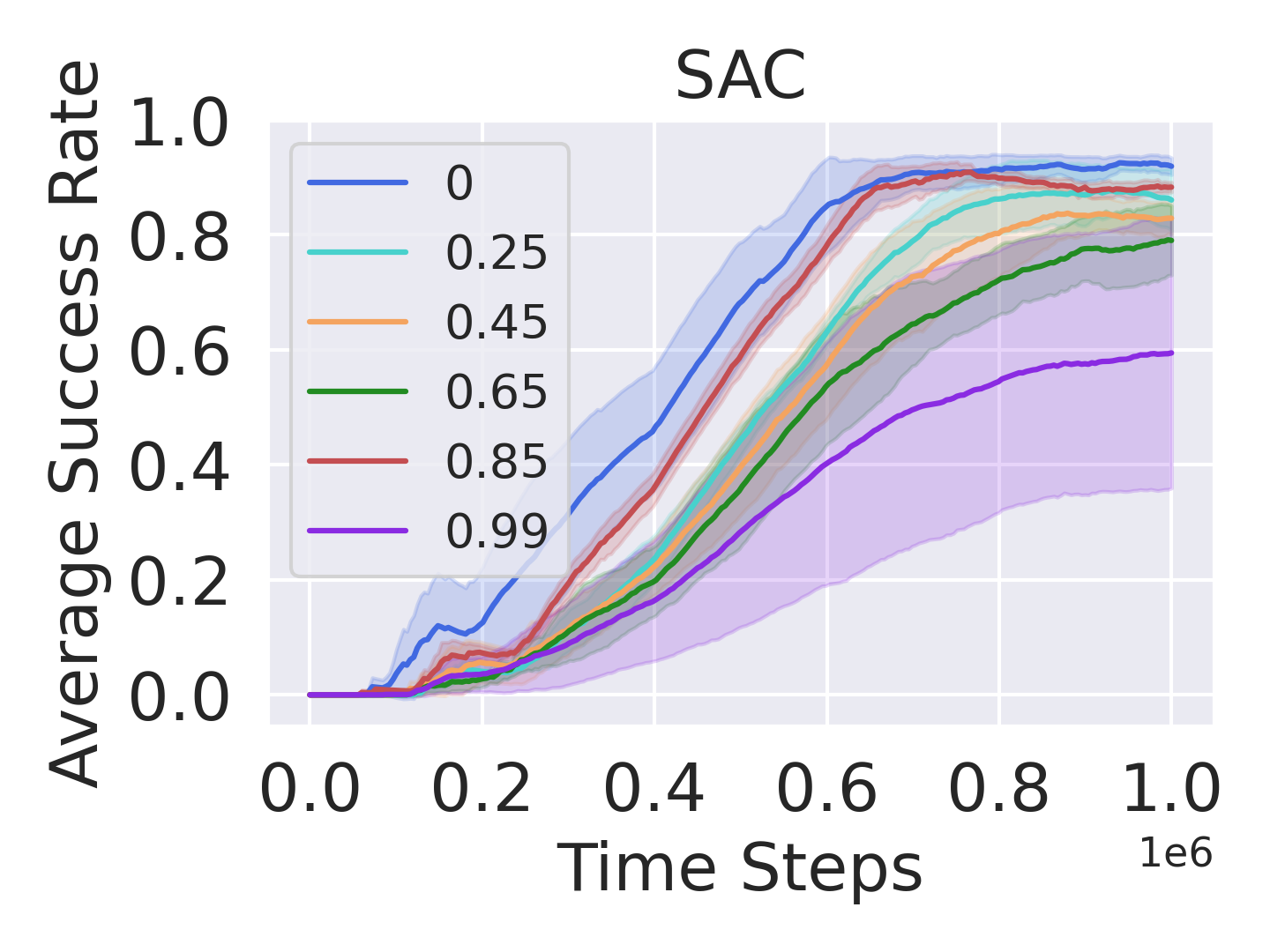}
}
\subfigure{
\includegraphics[width=0.23\textwidth]{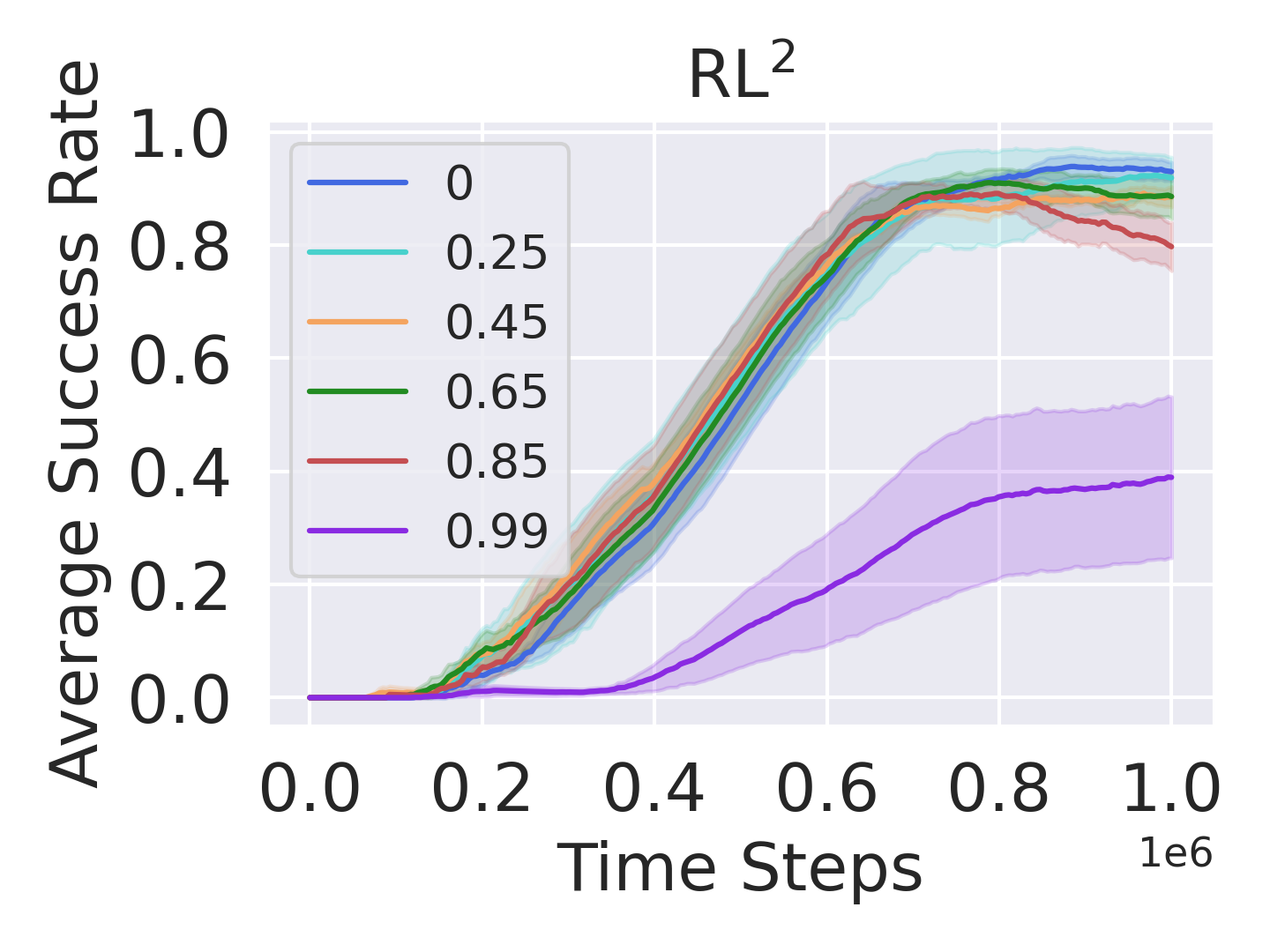}
}
\subfigure{
\includegraphics[width=0.23\textwidth]{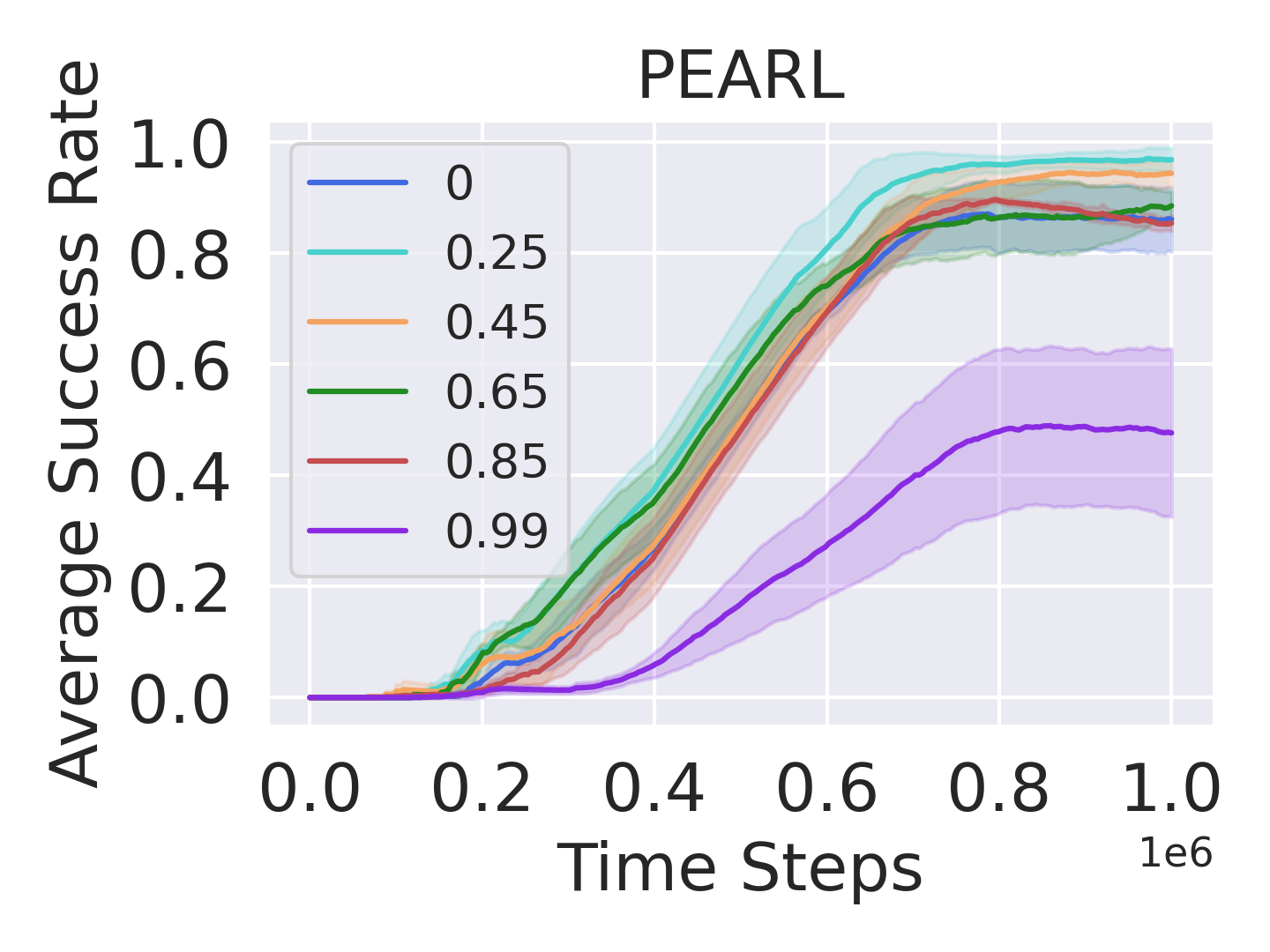}
}
\subfigure{
\includegraphics[width=0.23\textwidth]{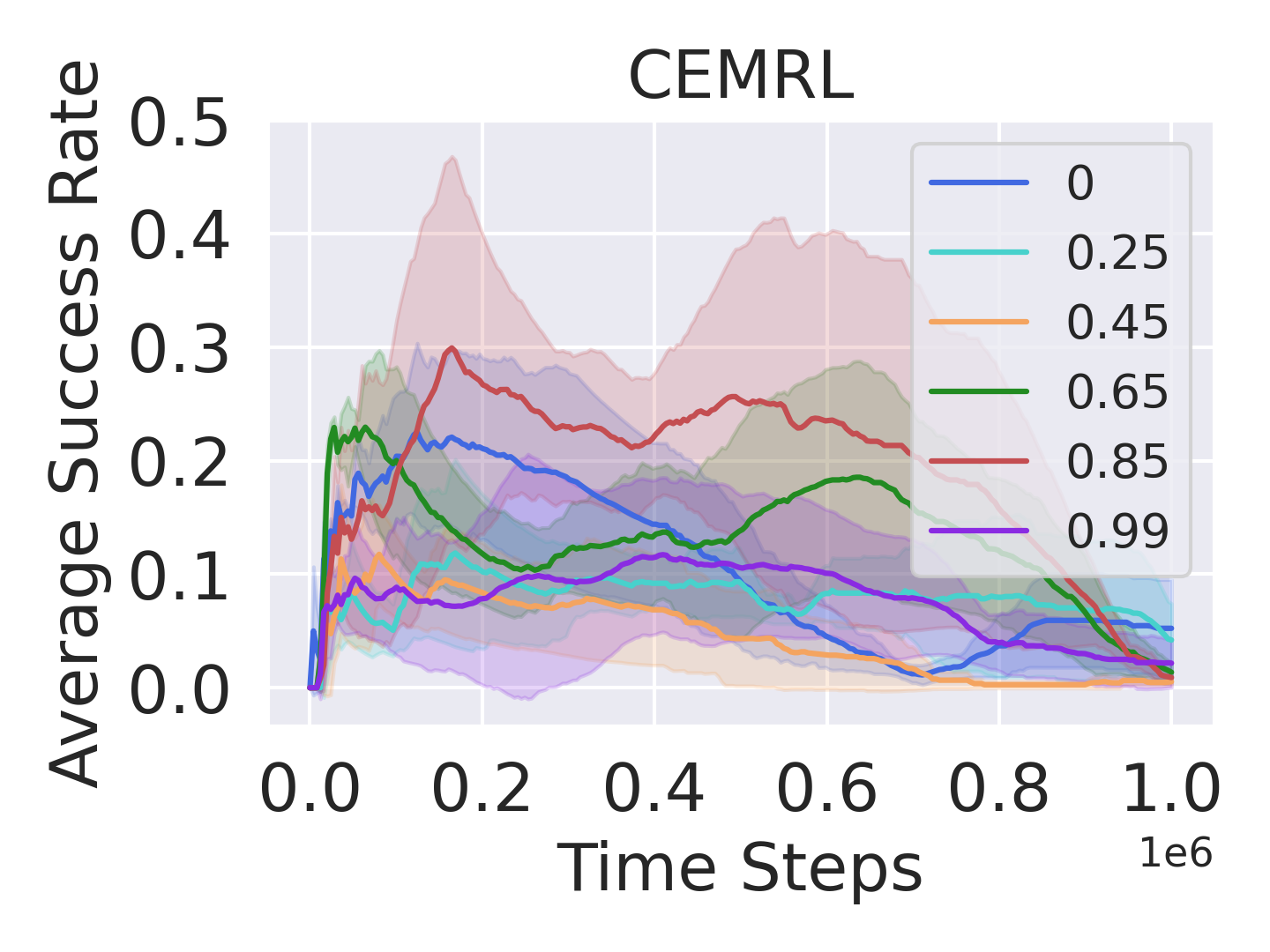}
}
\vskip -0.1in
\subfigure{
\includegraphics[width=0.23\textwidth]{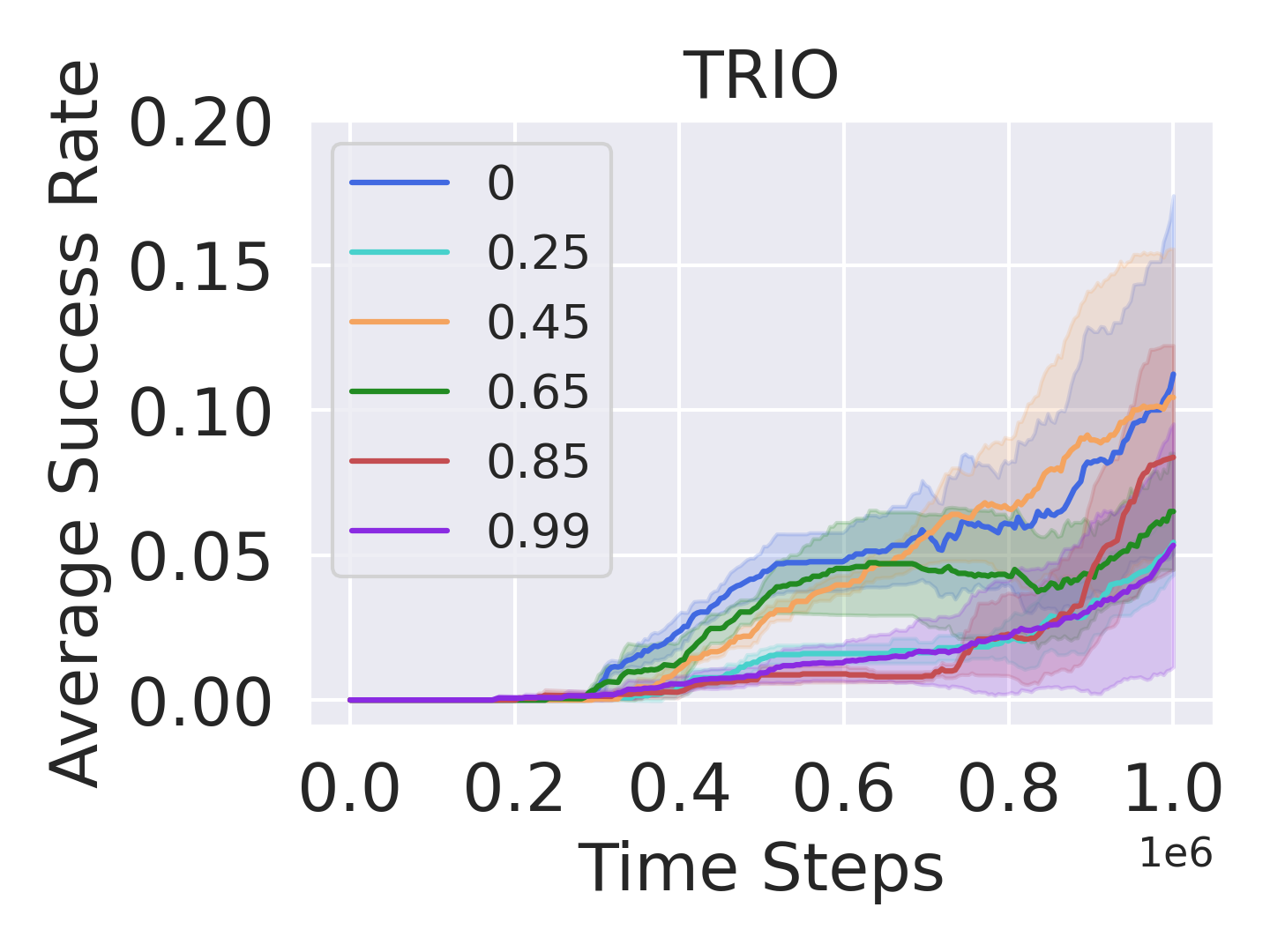}
}
\subfigure{
\includegraphics[width=0.23\textwidth]{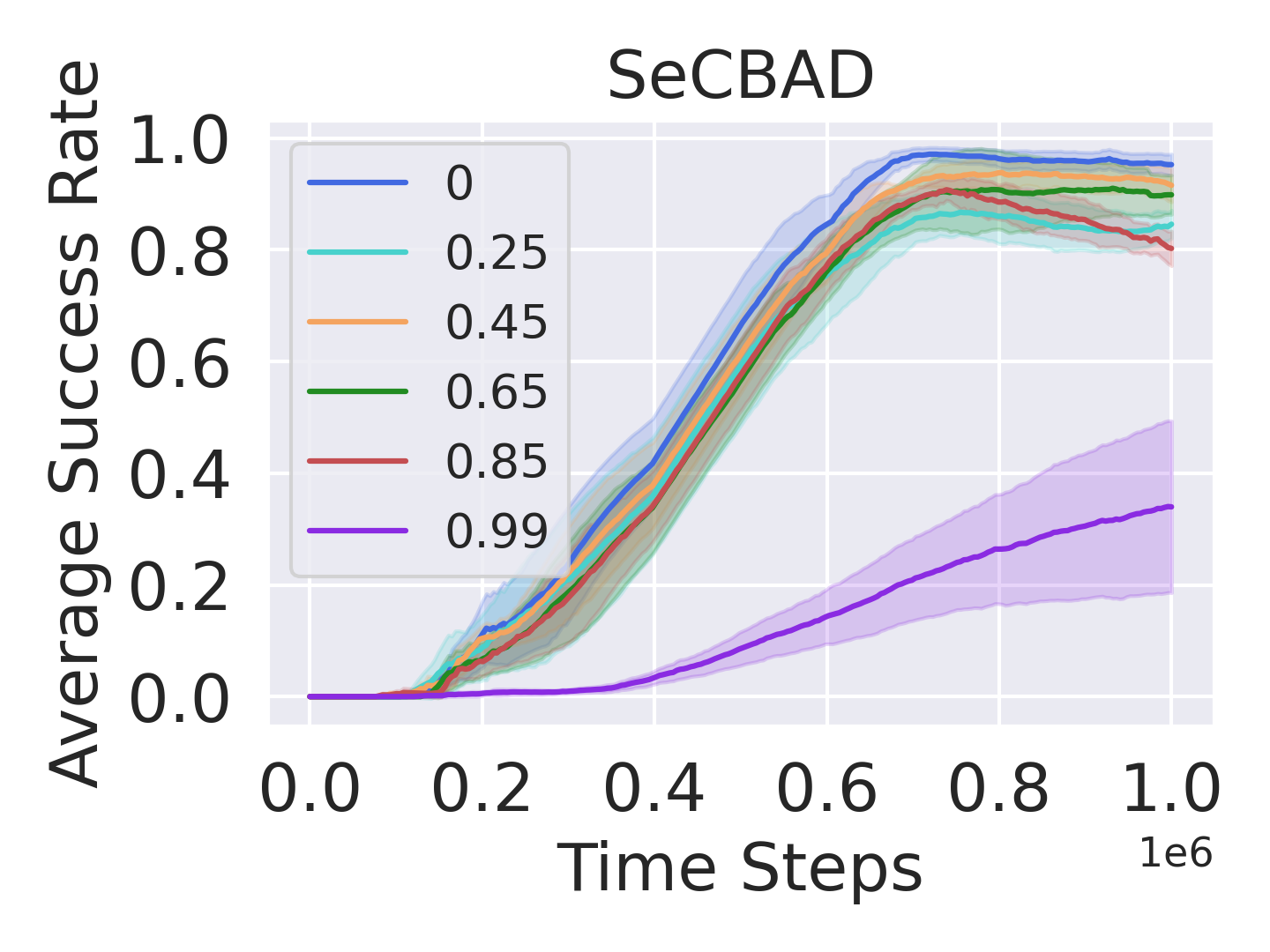}
}
\subfigure{
\includegraphics[width=0.23\textwidth]{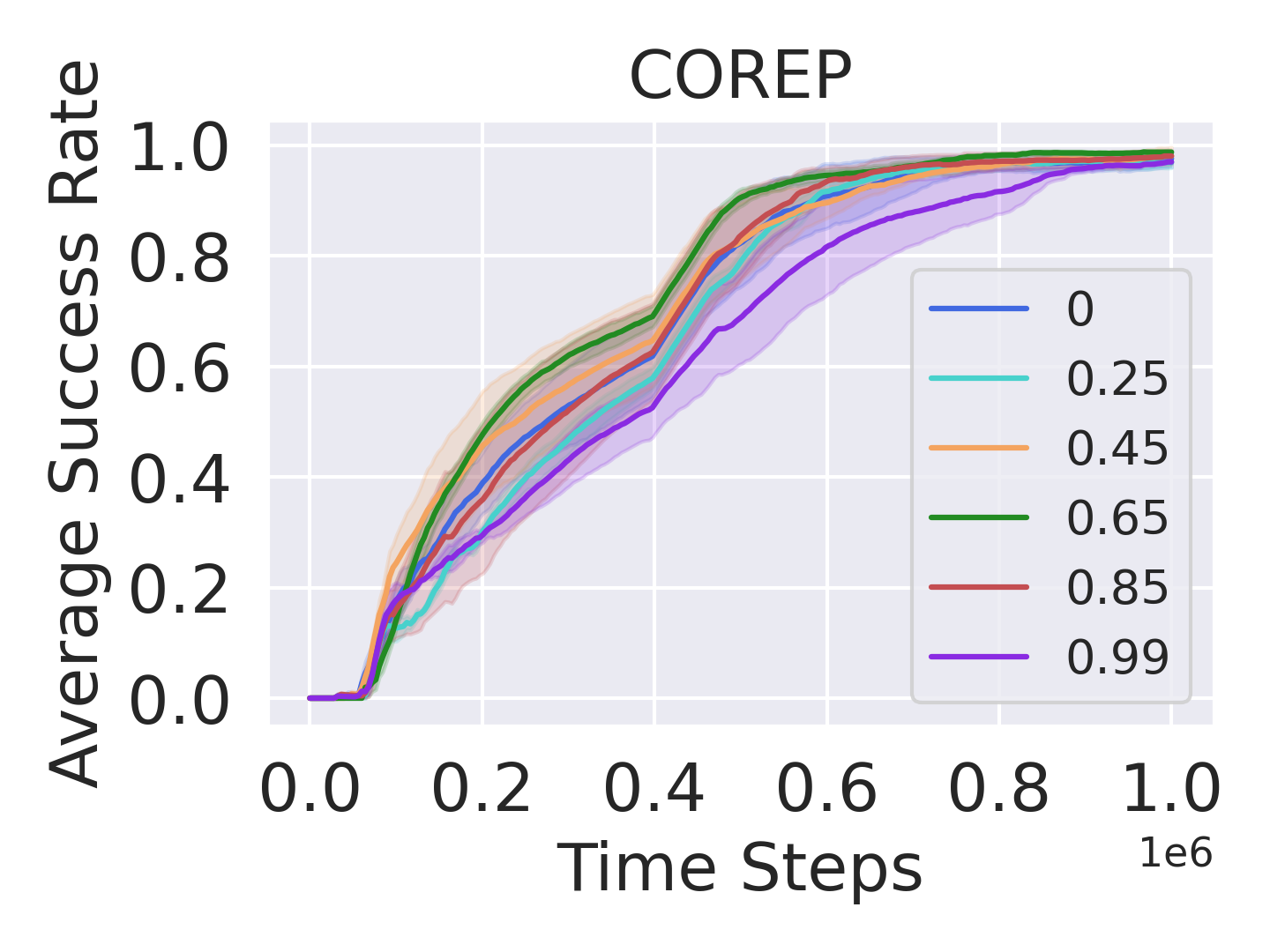}
}
\subfigure{
\includegraphics[width=0.23\textwidth]{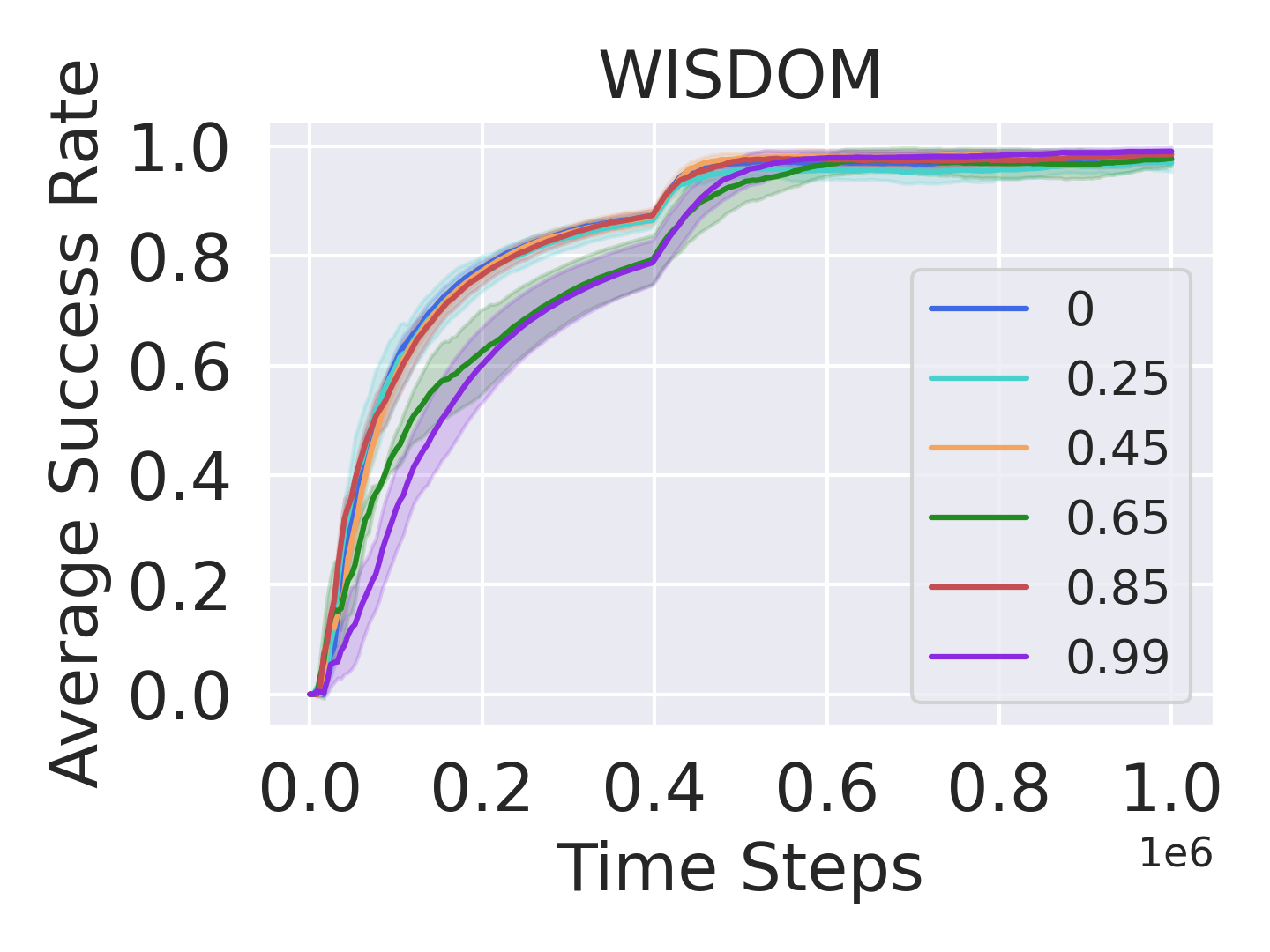}
}
\vskip -0.1in
\caption{Testing average performance on button-press over with varying non-stationary degrees.}
\vskip -0.1in
\label{a-fig:ns degree}
\end{figure*}

\subsection{Performance Disparities of Meta-RL in Stationary and Non-stationary Tasks}
\label{a-fig:s-vs-ns}
\begin{figure*}[h]
\centering
\vskip -0.1in
\subfigure{
\includegraphics[width=0.3\textwidth]{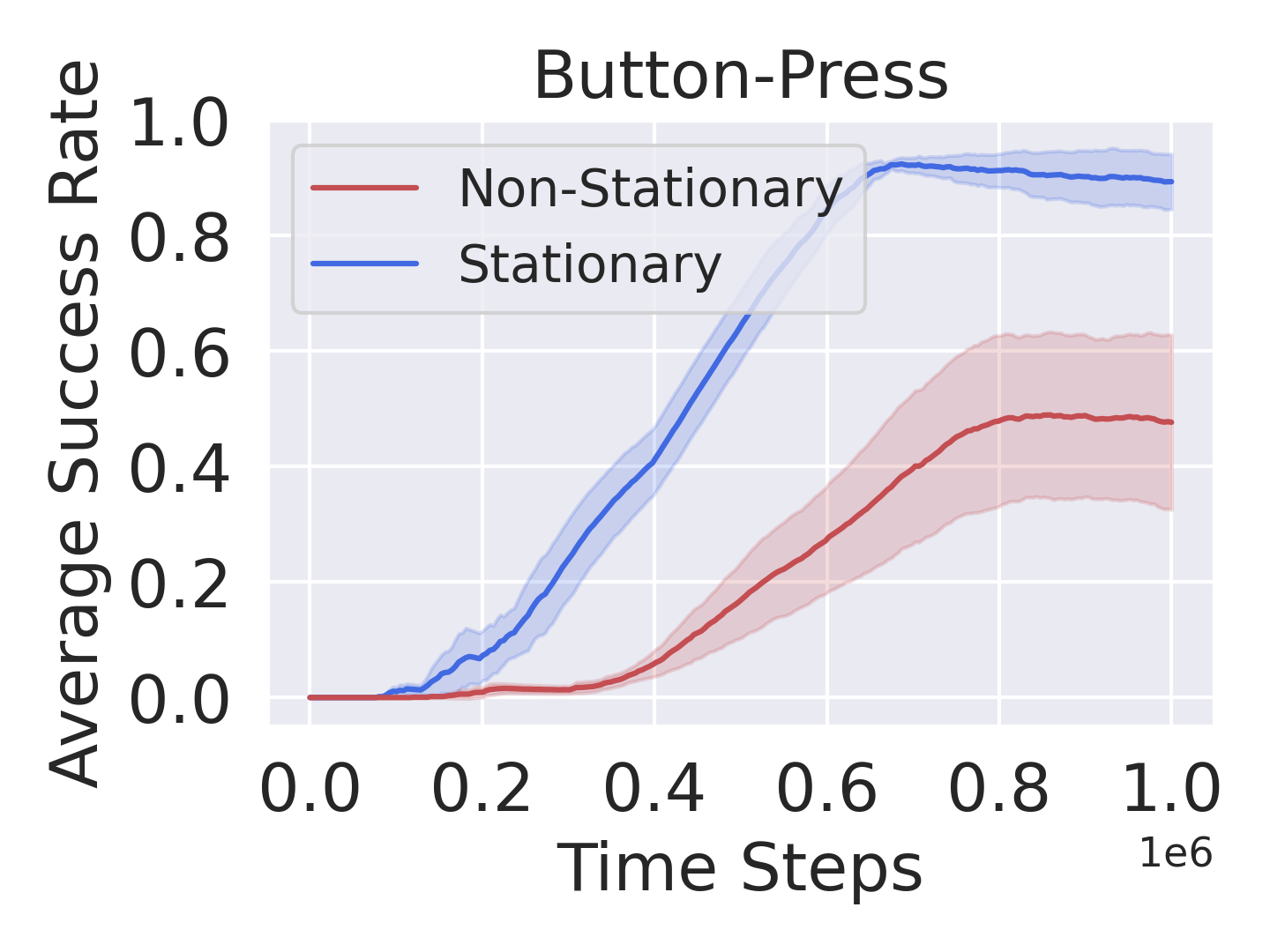}
}
\subfigure{
\includegraphics[width=0.3\textwidth]{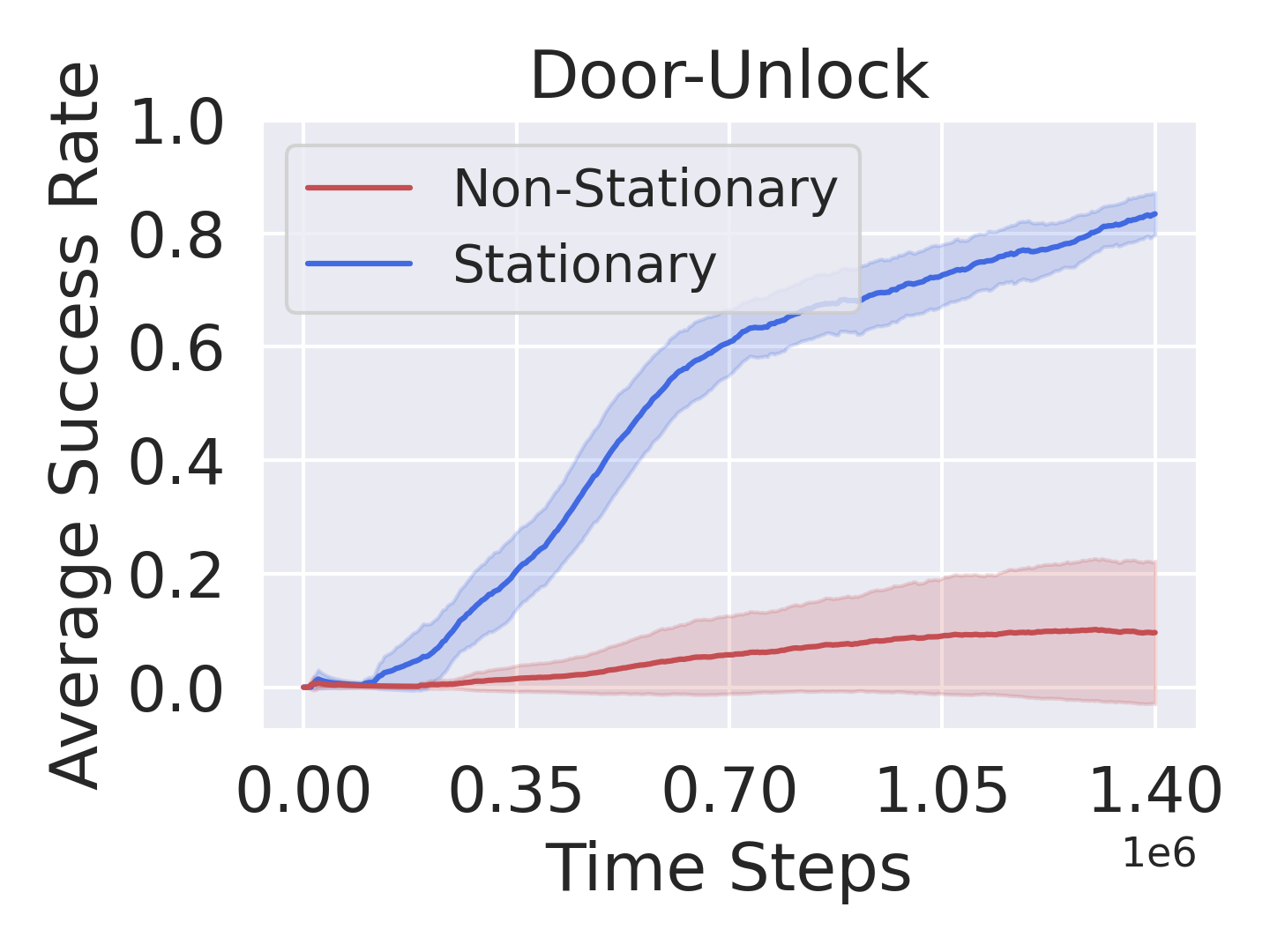}
}
\subfigure{
\includegraphics[width=0.3\textwidth]{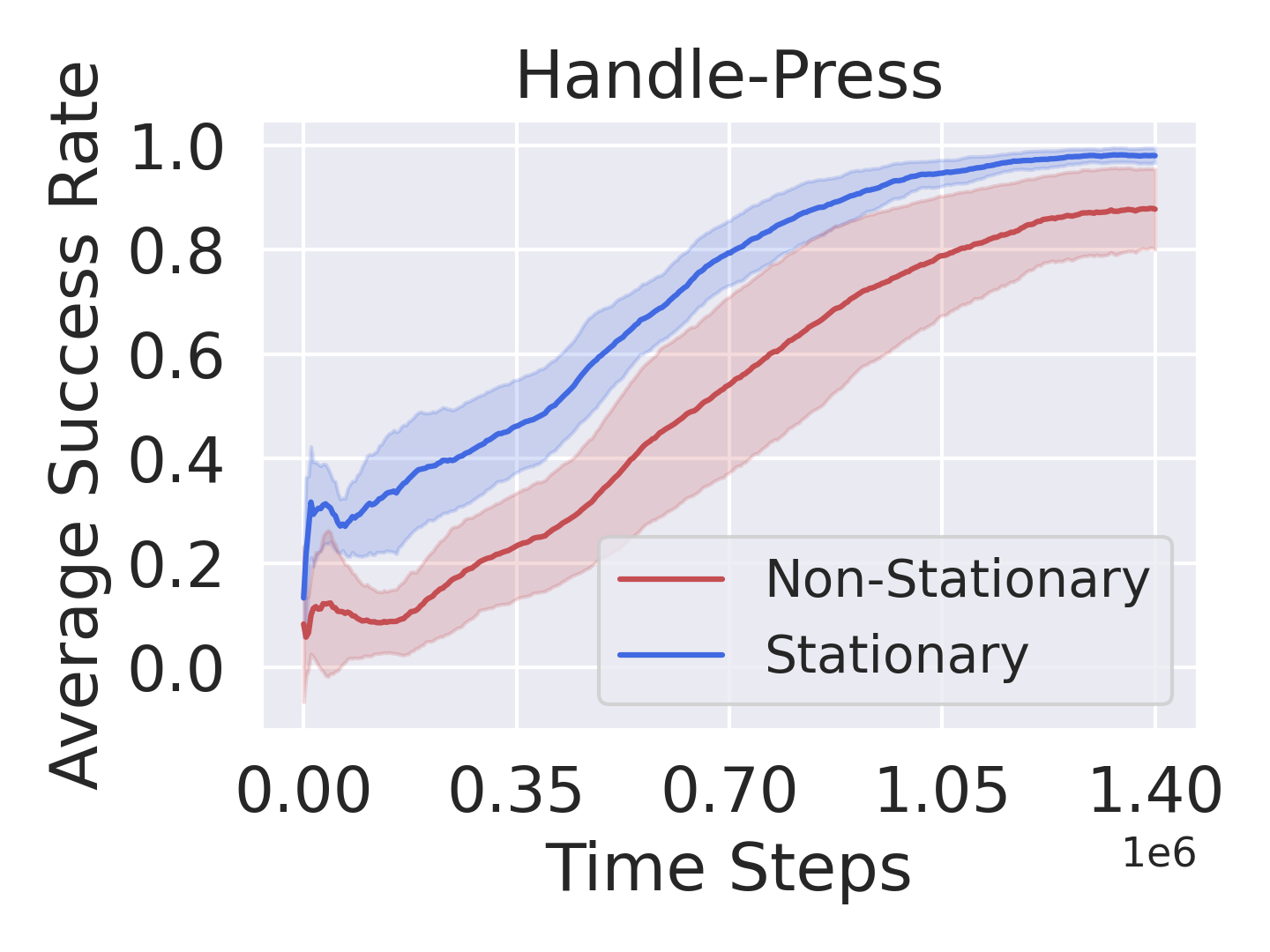}
}
\vskip -0.1in
\caption{Testing performance of the classical meta-RL algorithm PEARL~\cite{rakelly2019efficient} in stationary and non-stationary environments on Meta-World~\cite{yu2020meta}.}
\end{figure*}

\subsection{Sensitivity analysis of wavelet decomposition levels and wavelet TD loss coefficients} 
In Fig.~\ref{fig:hyper} (left), an appropriate setting of wavelet decomposition levels allows for more precise separation of frequency components corresponding to distinct evolutionary periods, thereby facilitating the extraction of more critical features. As non-stationarity increases, $M$ should correspondingly increase. However, excessive decomposition may lose vital low-frequency components via the approximation coefficients and introduce unnecessary detail, leading to inferior performance. In Fig.~\ref{fig:hyper} (right), $\alpha_Y$ typically influences the convergence speed. When wavelet TD loss or AR loss predominates, the convergence accelerates; even so, the final performance remains insensitive to $\alpha_Y$.

\begin{figure*}[ht]
\centering
\vskip -0.1in
\subfigure{
\includegraphics[width=0.32\textwidth]{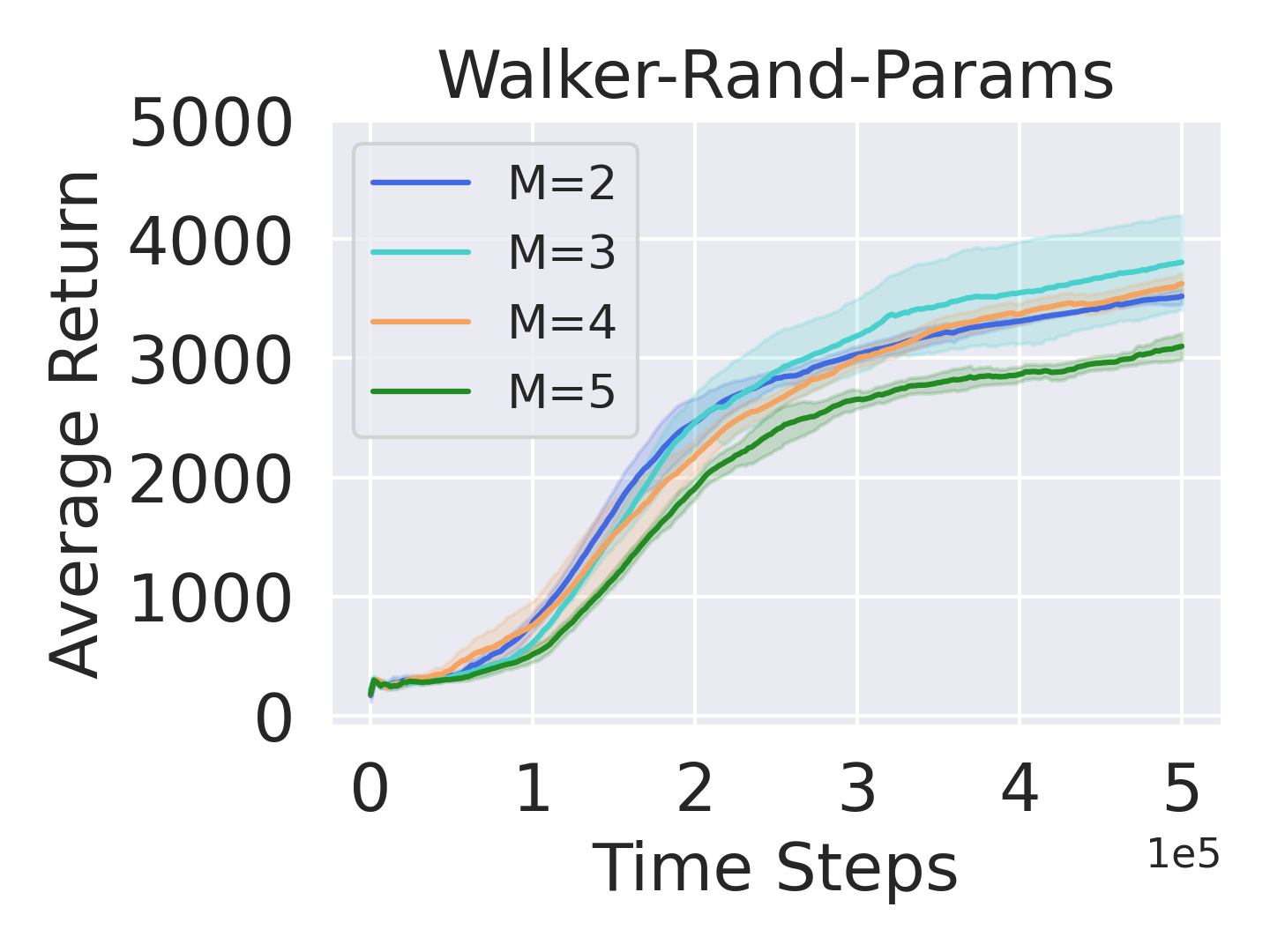}
}
\subfigure{
\includegraphics[width=0.32\textwidth]{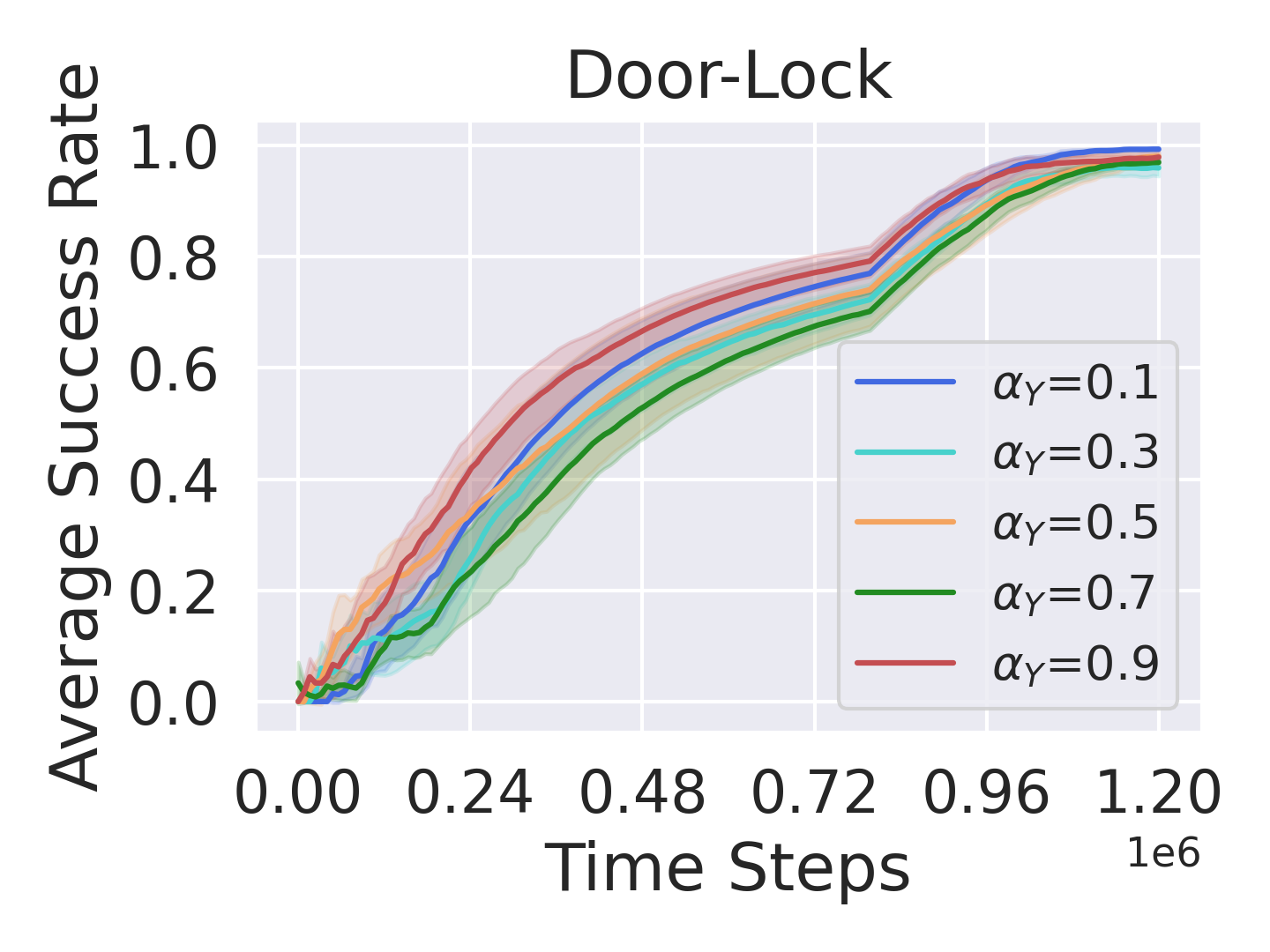}
}
\vskip -0.1in
\caption{Sensitivity to wavelet decomposition levels $M$ (left) and TD loss coefficients $\alpha_Y$ (right).}
\label{fig:hyper}
\end{figure*}

\subsection{Additional Experimental Results on Glucose Control Benchmark (Fig.~\ref{a-fig:medicine})}

\begin{figure*}[h]
\centering
\vskip -0.1in
\subfigure{
\includegraphics[width=0.32\textwidth]{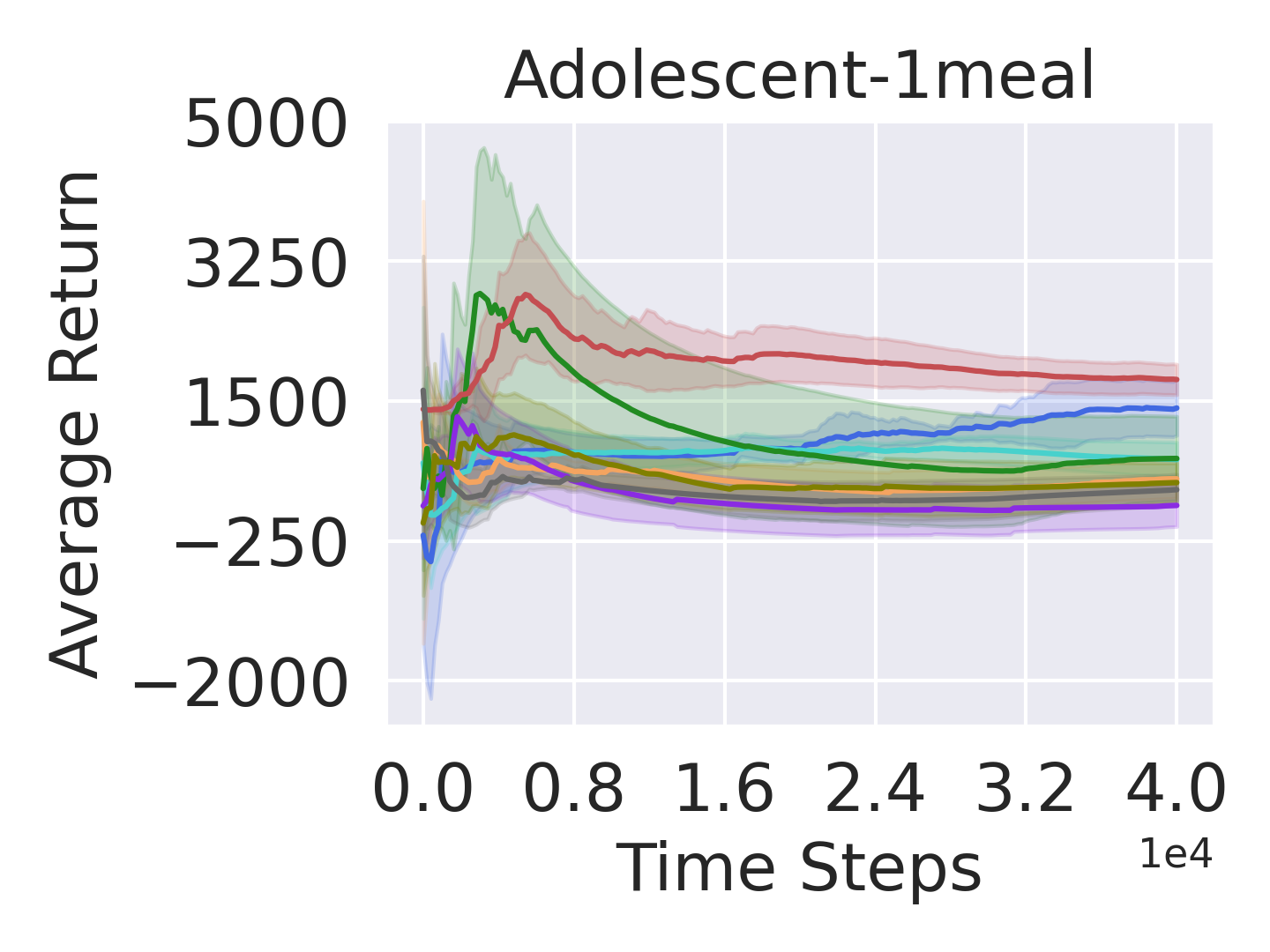}
}
\subfigure{
\includegraphics[width=0.32\textwidth]{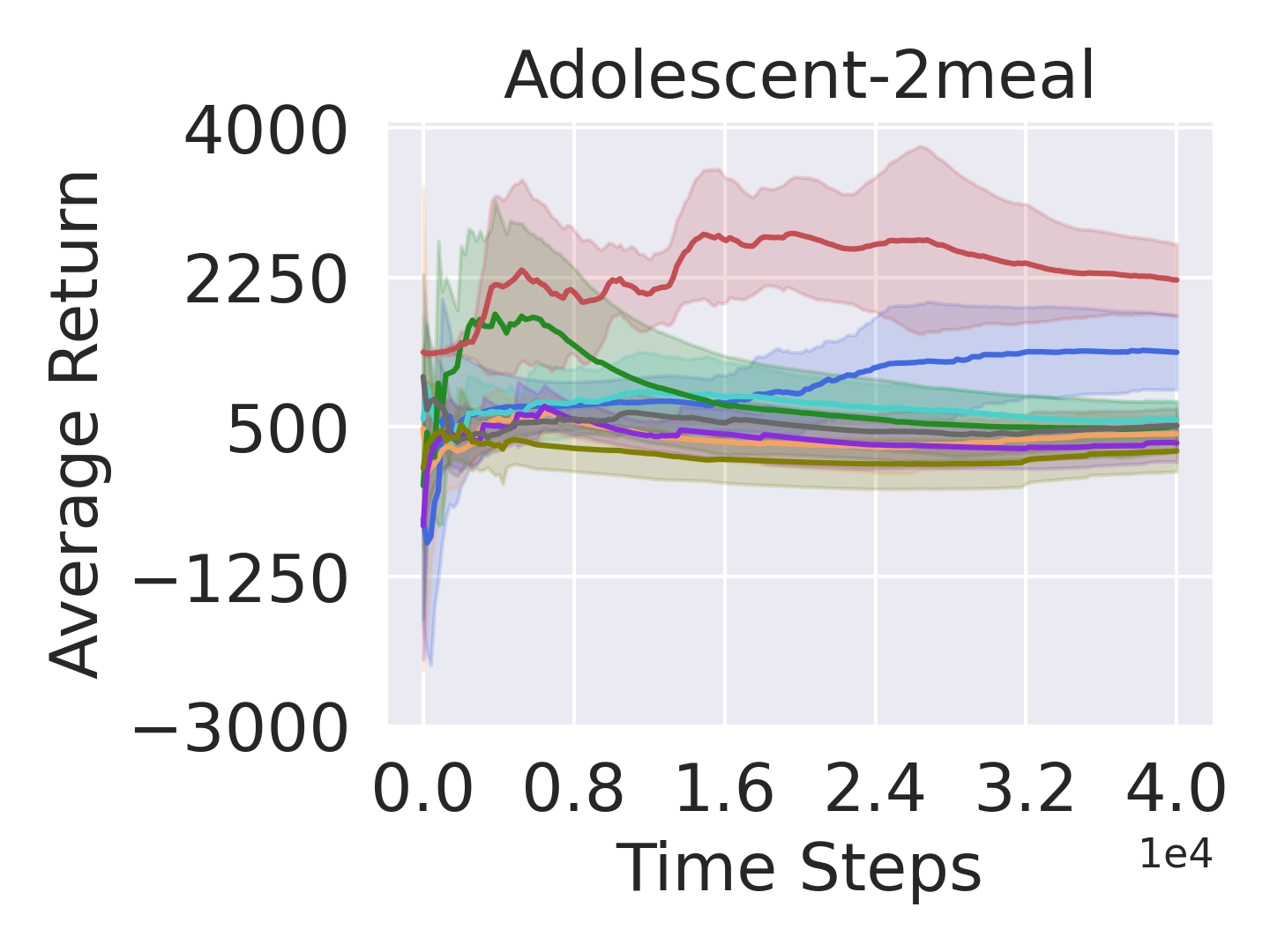}
}
\vskip -0.1in
\caption{Testing average return on 2 glucose control environments over 6 random seeds.
}
\label{a-fig:medicine}
\end{figure*}

\subsection{Additional Experimental Results on MuJoCo Benchmark (Fig.~\ref{a-fig:mujoco})}

\begin{figure*}[h]
\centering
\vskip -0.1in
\subfigure{
\includegraphics[width=0.32\textwidth]{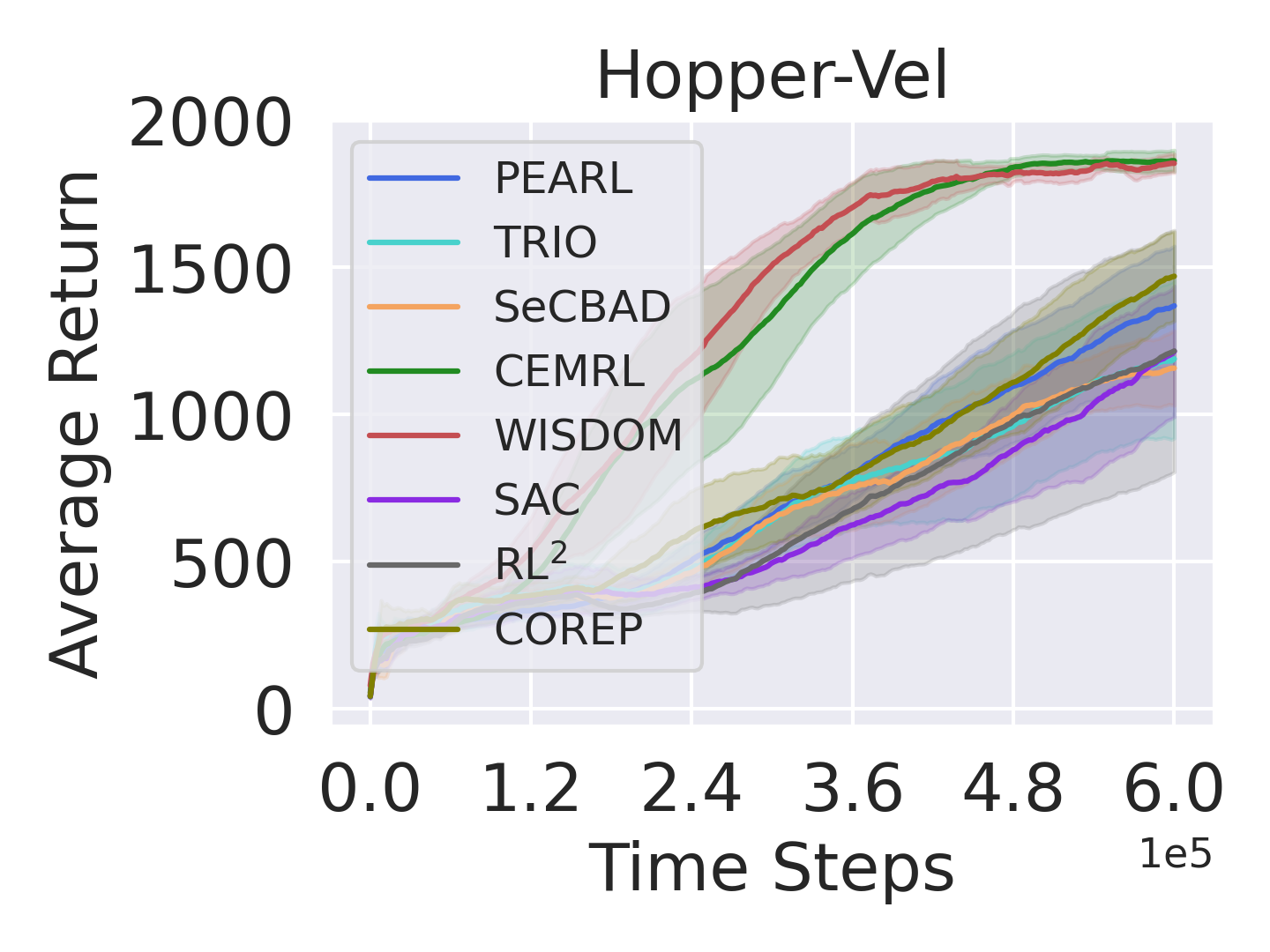}
}
\subfigure{
\includegraphics[width=0.32\textwidth]{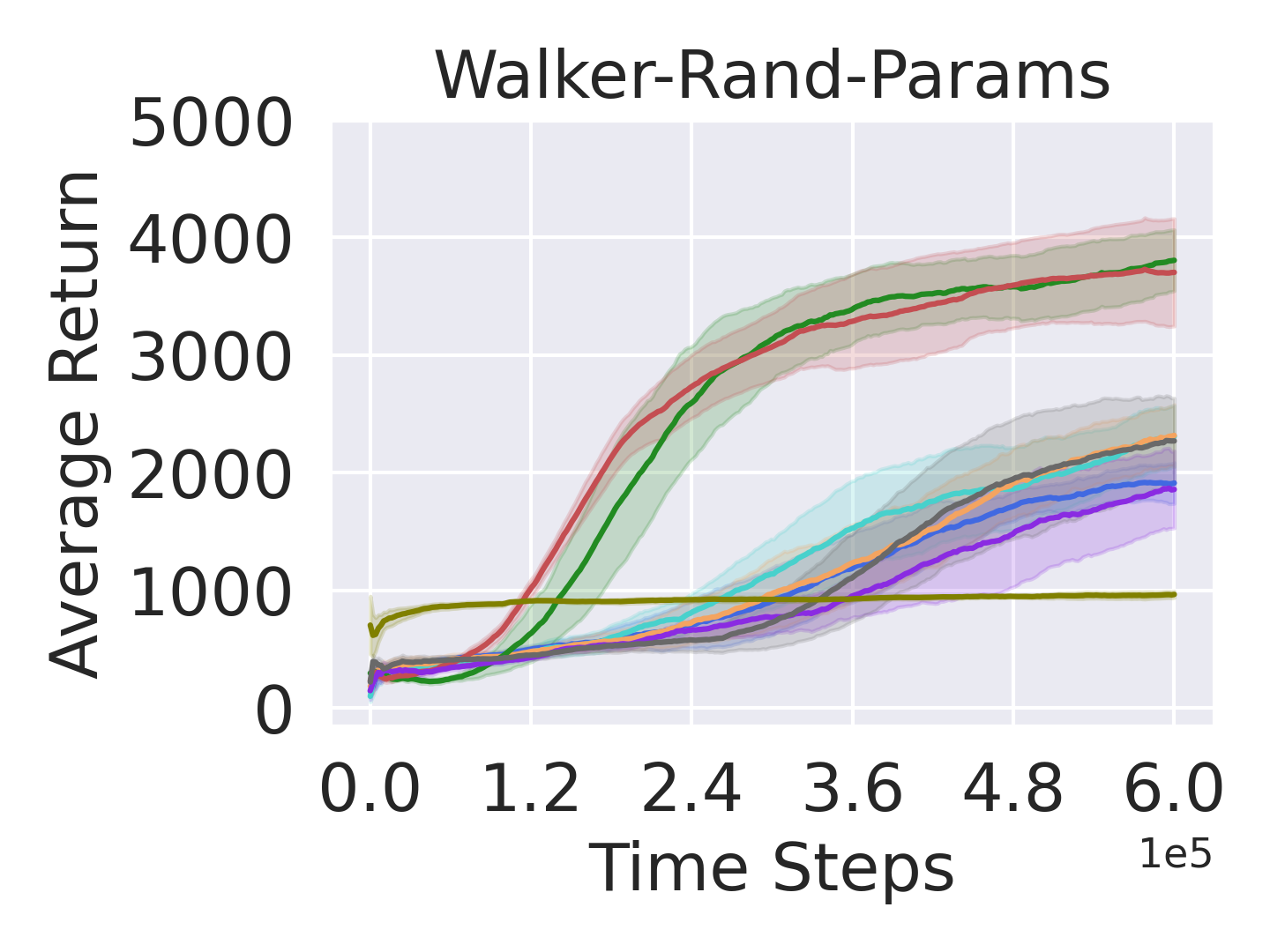}
}
\vskip -0.1in
\caption{Testing average return on MuJoCo over 6 random seeds.}
\label{a-fig:mujoco}
\end{figure*}

\end{document}